\theoremstyle{plain}
\newtheorem{theorem}{Theorem}[section]
\newtheorem{proposition}[theorem]{Proposition}
\newtheorem{lemma}[theorem]{Lemma}
\theoremstyle{definition}
\newtheorem{assumption}[theorem]{Assumption}
\theoremstyle{remark}
\icmltitlerunning{Last Iterate Risk Bounds of SGD with Decaying Stepsize for Overparameterized Linear Regression}
\begin{document}

\twocolumn[
\icmltitle{Last Iterate Risk Bounds of SGD with Decaying Stepsize for Overparameterized Linear Regression}



\icmlsetsymbol{equal}{*}

\begin{icmlauthorlist}
\icmlauthor{Jingfeng Wu}{equal,jhu}
\icmlauthor{Difan Zou}{equal,ucla}
\icmlauthor{Vladimir Braverman}{jhu}
\icmlauthor{Quanquan Gu}{ucla}
\icmlauthor{Sham M. Kakade}{harvard}
\end{icmlauthorlist}

\icmlaffiliation{jhu}{Department of Computer Science, Johns Hopkins University, Baltimore, MD 21218, USA}
\icmlaffiliation{ucla}{Department of Computer Science, University of California, Los Angeles, CA 90095, USA}
\icmlaffiliation{harvard}{Department of Computer Science and Department of Statistics, Harvard University, Cambridge, MA 02138, USA}

\icmlcorrespondingauthor{Vladimir Braverman}{vova@cs.jhu.edu}
\icmlcorrespondingauthor{Quanquan Gu}{qgu@cs.ucla.edu}
\icmlcorrespondingauthor{Sham M. Kakade}{sham@seas.harvard.edu}

\icmlkeywords{SGD, Overparameterization, Risk bound}

\vskip 0.3in
]



\printAffiliationsAndNotice{\icmlEqualContribution} 

\begin{abstract}
Stochastic gradient descent (SGD) has been shown to generalize well in many deep learning applications. In practice, one often runs SGD with a \emph{geometrically decaying stepsize}, i.e., a constant initial stepsize followed by multiple geometric stepsize decay, and uses the \emph{last iterate} as the output. This kind of SGD is known to be nearly minimax optimal for classical finite-dimensional linear regression problems \citep{ge2019step}. However, a sharp analysis for the last iterate of SGD in the overparameterized setting is still open. In this paper, we provide a \emph{problem-dependent} analysis on the last iterate risk bounds of SGD with decaying stepsize, for (overparameterized) linear regression problems. In particular, for last iterate SGD with (tail) geometrically decaying stepsize, we prove nearly matching upper and lower bounds on the excess risk. Moreover, we provide an excess risk lower bound for last iterate SGD with polynomially decaying stepsize and demonstrate the advantage of geometrically decaying stepsize in an \emph{instance-wise} manner, which complements the minimax rate comparison made in prior works.
\end{abstract}

\allowdisplaybreaks




\section{Introduction}\label{sec:intro}

It is widely observed in practice that modern neural networks trained by \emph{stochastic gradient descent} (SGD) often generalize well \citep{zhang2021understanding}. 
In all the successful applications, two ingredients are crucial: 
(1) an overparameterized model, where the number of parameter excesses the number of training examples \citep{belkin2020two};
and (2) SGD with the \emph{last iterate} as output and with a \emph{decaying stepsize}, e.g., an initially large stepsize, followed by geometrically decaying stepsizes after every certain number of iterations \citep{he2015deep}.
Theoretically, however, it remains largely open to understand the generalization of the \emph{last iterate of SGD} (with a \emph{decaying stepsize}) for learning overparamerized models, even for the arguably simplest setting such as \emph{overparamerized linear regression}.


For linear regression in the classical regime, \citet{ge2019step} showed that last iterate SGD (with geometrically decaying stepsize) can achieve nearly minimax optimal excess risk up to logarithmic factors.
However, the results by \citet{ge2019step} cannot be carried over to the overparameterized setting since their excess risk bounds are \emph{dimension-dependent}, which become vacuous when the problem dimension excesses the sample size. There is a fundamental barrier to extend the statistical minimax rate to the overparameterized setting, as the minimax result concerns the \emph{worst instance} in the problem class, while apparently SGD cannot generalize for certain overparameterized linear regression problem (e.g., when the data distribution has an identity covariance and the model parameter is uniformly distributed). 

For SGD with iterate averaging, a recent work by \citet{zou2021benign} proved a tight problem-dependent excess risk bound for overparameterized linear regression, which can diminish in the overparameterized setting, provided a sufficiently fast decaying spectrum of the data covaraiance matrix.
While \citet{zou2021benign} sharply characterized the generalization of SGD with iterate averaging in the overparameterized setting, their analysis is tailored to the averaged iterate of SGD and is not directly applicable to the last iterate of SGD.


In this paper, in order to explain its success in learning overparameterized models,
we provide a tight analysis for the last iterate of SGD that adapts to both of the least-square problem instance and the algorithm configuration.

\noindent\textbf{Contributions.} Our first main result is a sharp problem-dependent excess risk bound for the \emph{last iterate SGD with tail geometrically decaying stepsize} (see \eqref{eq:geometry-tail-decay-lr}, also Algorithm \ref{alg:SGD}) for linear regression.
The derived bound does not depend on the ambient {dimension} and, instead, depends on the spectrum of the data covariance matrix. 
In particular, the excess risk bound vanishes as long as the data covariance matrix has a fast-decay eigenspectrum, despite of a large ambient {dimension} in the overparameterized setting.
Furthermore, an excess risk lower bound is proved, which shows the upper bound is tight up to absolute constant in terms of variance error, and is nearly tight in terms of bias error.
This result recovers the existing minimax bound in the classical regime  \citep{polyak1992acceleration,bach2013non} ignoring logarithmic factors, and is comparable to the bounds for  SGD with iterate averaging  in the overparameterized setting \citep{zou2021benign}.

Our second main result is a comparison between SGD with (1) tail geometrically stepsize-decaying scheme and (2) {tail polynomially stepsize-decaying scheme}, in an \emph{instance-wise} manner. 
Our result shows that the variance error of SGD with tail polynomially decaying stepsize is \textit{instance-wise no better} than that of SGD with tail geometrically decaying stepsize, given the same optimization trajectory length (i.e., summation of stepsizes). In contrast, the comparison between these two stepsize schemes made in \citet{ge2019step} only concerns the worst-case result: the worst-case excess risk bound achieved by geometrically decaying stepsize is strictly better than the worst case bound achieved by polynomially decaying stepsize. Thus, their analysis does not rule out the possibility that for some problem instances, polynomially decaying stepsize can generalize better than the geometrically decaying one.


\begin{algorithm}[t]
    \caption{\textsc{Last Iterate SGD with Tail Geometric Decaying Stepsize}}\label{alg:SGD}
    \begin{algorithmic}[1]
        \REQUIRE Initial weight $\wB$, initial stepsize $\gamma$, total sample size $N$, first phase length $s$, decaying phase length $K$
        \FOR{$t = 1, \dots, N$}
            \IF{$t > s$ and $(t - s ) \bmod{ K} = 0$}
                \STATE $\gamma \leftarrow \gamma / 2$
            \ENDIF
            \STATE $\wB \leftarrow \wB + \gamma (y - \abracket{\wB, \xB})\xB$, with a fresh data $(\xB, y)$
        \ENDFOR
        \RETURN $\wB$
    \end{algorithmic}
\end{algorithm}

Our analysis follows and extends the operator method for analyzing SGD in linear regression \citep{dieuleveut2017harder,jain2017markov,jain2017parallelizing,neu2018iterate,ge2019step,zou2021benign}. 
Specifically, we develop a novel, multi-phase analysis for the bias error of the last SGD iterate, which sharpens existing results (see Section \ref{sec:proof-sketch}).
We believe our proof technique is of broader interest and can be applied to analyze other variants of SGD such as SGD with momentum.

\noindent\textbf{Notation.}
We reserve lower-case letters for scalars, lower-case boldface letters for vectors, upper-case boldface letters for matrices, and upper-case calligraphic letters for linear operators on symmetric matrices. For two positive-value functions $f(x)$ and $g(x)$ we write  $f(x)\lesssim g(x)$ or $f(x)\gtrsim g(x)$
if $f(x) \le cg(x)$ or $f(x) \ge cg(x)$ for some absolute constant 
$c>0$ respectively.
For two vectors $\uB$ and $\vB$ in a Hilbert space, their inner product is denoted by $\abracket{\uB, \vB}$ or equivalently, $\uB^\top \vB$.
For a matrix $\AB$, its spectral norm is denoted by $\norm{\AB}_2$.
For two matrices $\AB$ and $\BB$ of appropriate dimension, their inner product is defined as $\langle \AB, \BB \rangle := \tr(\AB^\top \BB)$.
For a positive semi-definite (PSD) matrix $\AB$ and a vector $\vB$ of appropriate dimension, we write $\norm{\vB}_{\AB}^2 := \vB^\top \AB \vB$.
The Kronecker/tensor product is denoted by $\otimes$.
Finally, $\log (\cdot )$ refers to logarithm base $2$.


\begin{figure}[t]
    \centering
    \includegraphics[width=0.7\linewidth]{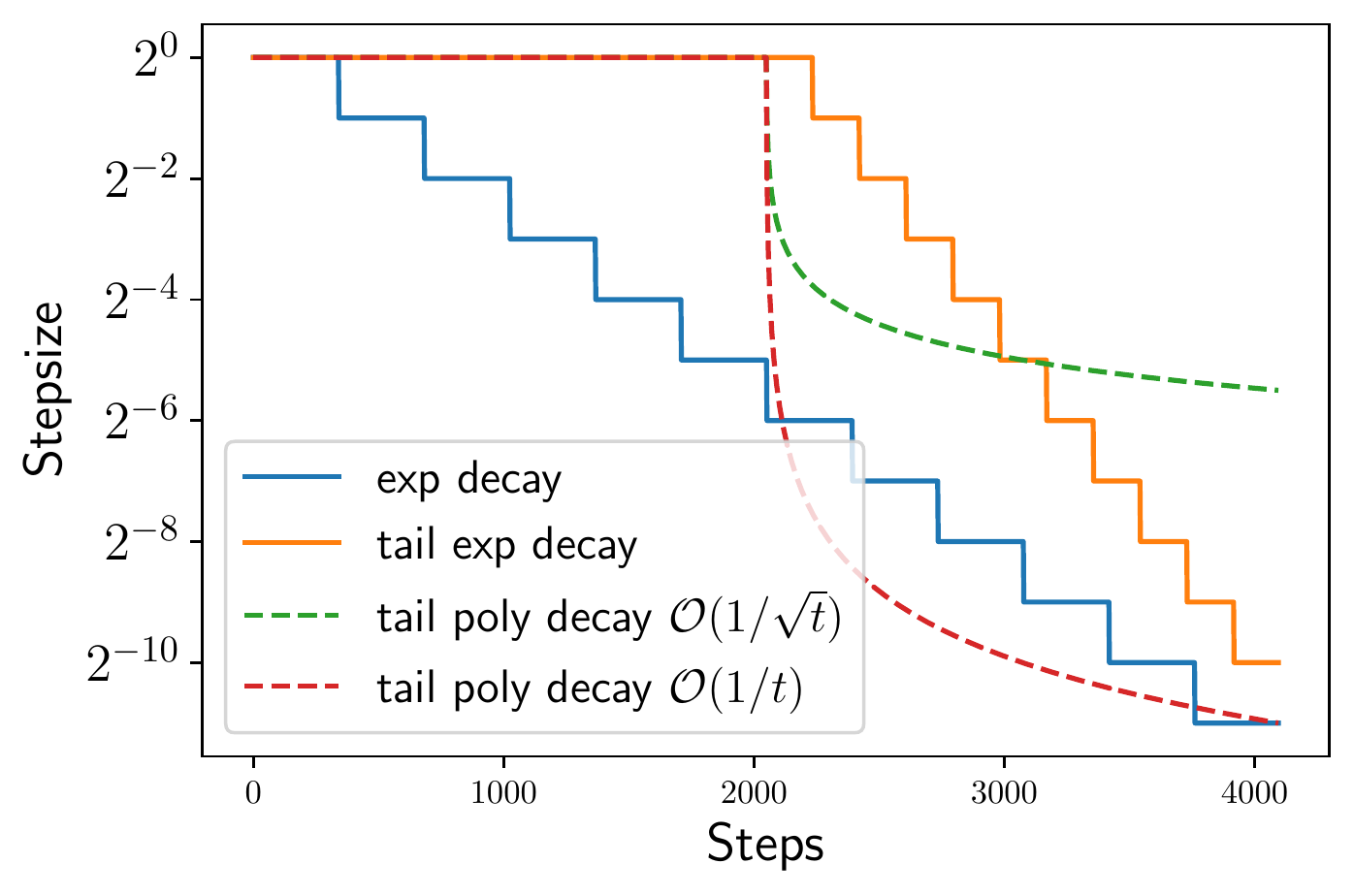}
    \vspace{-0.6cm}
    \caption{\small 
      Four stepsize decaying examples given by \eqref{eq:geometry-tail-decay-lr} and \eqref{eq:poly-tail-decay-lr}. $\gamma_0 = 1$ and $N = 4096$.
      \textsc{exp decay}: \eqref{eq:geometry-tail-decay-lr} with $s=0$ and $K = \lceil N / \log N \rceil$;
      \textsc{tail exp decay}: \eqref{eq:geometry-tail-decay-lr} with $s= N / 2$ and $K = \lceil (N-s) / \log (N-s)\rceil$;
      \textsc{tail poly decay $\Ocal(1/\sqrt{t})$}: \eqref{eq:poly-tail-decay-lr} with $s = N /2$ and $a = 0.5$;
      \textsc{tail poly decay $\Ocal(1/t)$}: \eqref{eq:poly-tail-decay-lr} with $s = N /2$ and $a = 1$.}
    \label{fig:stepsize-decay}
    \vspace{-0.6cm}
\end{figure}

\section{Related Work}\label{sec:related-works}

\noindent\textbf{Problem-Dependent Bounds for Linear Regression.}
We first discuss a set of dimension-free and problem-dependent bounds for linear regression that are similar to what we show in this paper.
\citet{bartlett2020benign} proved risk bounds of ordinary least square (OLS) for overparameterized linear regression in terms of the full eigenspectrum of the data covariance matrix, and showed that benign overfitting can occur even when OLS memorizes the training data. \citet{tsigler2020benign} extended the benign overfitting result of OLS to ridge regression, and proved diminishing risk bounds for a larger class of least square problems. \citet{zou2021benign} proved problem-dependent risk bounds for constant-stepsize SGD with iterate averaging (and tail-averaging) and compared the algorithmic regularization afforded by SGD with OLS and ridge regression.
Our excess risk upper bound (Theorem \ref{thm:tail-decay-upper-bound}) for last iterate SGD is comparable to theirs for SGD with averaging (Theorems 2.1 and 5.1 in \citet{zou2021benign}).
Due to this similarity, the benefits of SGD with tail-averaging over ridge regression, as discussed in \citet{zou2021benefits}, naturally extends to the, more practical, last iterate SGD studied in this paper.

\noindent\textbf{Nonparamatric Bounds for SGD.}
We then discuss other SGD risk bounds for infinite-dimensional/nonparamatric linear regression \citep{DieuleveutB15,lin2017optimal,mucke2019beating,berthier2020tight,varre2021last}.
\citet{DieuleveutB15} only discussed linear regression with data covariance whose spectrum decays polynomially, in contrast our results apply to general data covariance.
The works by \citet{berthier2020tight,varre2021last} only dealt with bias error (i.e., they assume no additive label noise), but we provide both variance and bias error bounds.
Compared to \citet{lin2017optimal,mucke2019beating,berthier2020tight,varre2021last}, our results rely on a different set of assumptions:
they assume a stronger condition on the optimal model parameter ($\wB^*$), which requires $\norm{\HB^{-\alpha} \wB^*}_2$ to be finite for some constant $\alpha > 0$ where $\HB$ is the data covariance; though we do not require this, our assumption on the fourth moment operator is stronger (see Assumption \ref{assump:fourth-moment}).

\noindent\textbf{Last Iterate SGD with Decaying Stepsize in the Classical Regime.}
In the finite-dimensional setting, there is a rich literature considering the last iterate SGD with decaying stepsize. For example, polynomially decaying stepsizes are studied in \citep{dekel2012optimal,rakhlin2011making,lacoste2012simpler,bubeck2014theory}, and geometrically decaying stepsizes are considered in \citep{davis2019stochastic,ghadimi2012optimal,hazan2014beyond,aybat2019universally,kulunchakov2019generic,ge2019step}; besides, a recent work by \citet{pan2021eigencurve} explored eigenvalue-dependent stepsizes.
However, the bounds derived in the aforementioned papers are all \emph{dimension-dependent} and therefore cannot be applied to the overparameterized setting.
In this regard, our work can be viewed as a dimension-free, problem-dependent extension of \citet{ge2019step}'s results that are limited to finite-dimensional, worst-case scenarios.

Finally, we would like to refer the reader to Table \ref{table:comparison} in Section \ref{sec:upper-bound} for a detailed comparison between our results and several existing ones \citep{ge2019step,bach2013non,zou2021benign}.

\vspace{-0.25cm}
\section{Problem Setup and Preliminaries}
\vspace{-0.1cm}
We now formally set up the problem.
\vspace{-0.1cm}

\noindent\textbf{High-Dimensional Linear Regression.}
Let $\xB$ be a feature vector in a Hilbert space that can be $d$-dimensional or countably infinite dimensional, and $y \in \Rbb$ be the response.
\emph{Linear regression} concerns the following objective:
\begin{equation}\label{eq:linear-regression}
    \min_{\wB} L(\wB), \ \text{where}\ L(\wB) := \half \Ebb \bracket{y - \abracket{\wB, \xB}}^2,
\end{equation}
where $\wB$ is a weight vector to be learned, 
and the expectation is over an unknown joint distribution $\Dcal$ of $(\xB, y)$\footnote{Unless otherwise noted, all expectations in this paper are taken with respect to the joint distribution of $(\xB, y)$.}. 

\noindent\textbf{SGD.}
We consider solving \eqref{eq:linear-regression} using \emph{stochastic gradient descent} (SGD). The weight vector is initialized at $\wB_0$ in the Hilbert space; then at the $t$-th iteration, a fresh data $(\xB_t, y_t)$ is drawn independently from the distribution, and the weight vector is updated according to 
\begin{equation}\label{eq:sgd-iterates}
    \wB_t = \wB_{t-1} + \gamma_t ( y_t - \abracket{\wB_{t-1}, \xB_t} ) \xB_t,
    \ t= 1,2,\dots, N,
\end{equation}
where $\gamma_t > 0$ is the stepsize at step $t$.
Our main focus in this paper is \emph{last iterate SGD with decaying stepsize}, which uses a sequence of properly decaying stepsize $(\gamma_t)_{t=1}^{N}$, and outputs the last iterate $\wB_{N}$.
For example, one can use \emph{tail geometrically decaying stepsize} (see also Algorithm \ref{alg:SGD}): 
\begin{equation}\label{eq:geometry-tail-decay-lr}
    \gamma_t = 
    \begin{cases}
    \gamma_0, & 0 \le t \le s; \\
    {\gamma_0}/{2^\ell}, & s < t \le N, \ell = \left\lfloor (t - s) / K \right\rfloor,
    \end{cases}
\end{equation}
where the stepsize is kept as a constant in the first $s$ steps, and is then divided by a factor of $2$ every $K$ steps. Figure \ref{fig:stepsize-decay} shows two examples of such stepsize decay schemes.
We note that \eqref{eq:geometry-tail-decay-lr} captures the widely used stepsize decaying scheduler in deep learning \citep{he2015deep}: the stepsize is epoch-wise a constant, and decays geometrically after every certain number of epochs. 

Another widely studied variant of SGD is \emph{constant-stepsize} SGD with \emph{averaging}. More specifically, it updates the iterate according to \eqref{eq:sgd-iterates} with a constant stepsize, i.e., $\gamma_t = \gamma$, and its final output is an averaging of all iterates ($\frac{1}{N} \sum_{t=0}^{N-1} \wB_{t}$) or only the tail iterates ($\frac{1}{N-s}\sum_{t=s}^{N-1}\wB_{t}$). 
Compared with last iterate SGD, SGD with averaging is less practical but more theoretically favorable.
For a few examples, the risk bounds of SGD with averaging have been studied in both the classical underparameterized regime \citep{bach2013non,dieuleveut2017harder,jain2017markov,jain2017parallelizing,neu2018iterate} and the overparameterized setting \citep{DieuleveutB15,zou2021benign}.

Next we review a set of assumptions 
for our analysis.
\begin{assumption}[Regularity conditions]\label{assump:second-moment}
Denote \(
\HB := \Ebb[\xB \xB^\top],
\)
and assume that $\HB$ is (entry-wisely) finite
and $\tr(\HB)$ is finite.
For convenience, we further assume that $\HB$ is strictly positive definite and that $L(\wB)$ admits a unique global optimum, denoted by $\wB^* := \arg\min_{\wB} L(\wB)$.
\end{assumption}
The condition $\HB \succ 0$ is only made for simple presentation; if $\HB$ has zero eigenvalues, one can choose 
$\wB^* = \arg\min\{ \norm{\wB}_2: \wB \in \arg\min L(\wB) \}$, and our results still hold.
This argument also holds in a reproducing kernel Hilbert space \citep{scholkopf2002learning}.

\begin{assumption}[Fourth moment conditions]\label{assump:fourth-moment}
Assume that the fourth moment of $\xB$ is finite and:
\begin{enumerate}[label=\Alph*]
    \item There is a constant $\alpha > 0$, such that for every PSD matrix $\AB$, we have
\begin{equation*}
\Ebb [\xB \xB^\top \AB \xB \xB^\top] \preceq \alpha \cdot \tr (\HB \AB) \cdot \HB.
\end{equation*}
Clearly, it must hold that $\alpha \ge 1$. \label{item:fourth-moement-upper}
\item \label{item:fourth-moement-lower} There is a constant $\beta > 0$, such that for every PSD matrix $\AB$, we have
\[
\Ebb [\xB \xB^\top \AB \xB \xB^\top] - \HB \AB \HB   \succeq  \beta \cdot \tr (\HB \AB) \cdot \HB.
\]
\end{enumerate}
\end{assumption}
To give an example, if $\HB^{-\half} \xB$ satisfies Gaussian distribution, then Assumption \ref{assump:fourth-moment} holds with $\alpha = 3$ and $\beta = 1$.
More generally, Assumption \ref{assump:fourth-moment}\ref{item:fourth-moement-upper} holds for data distributions with a \emph{bounded kurtosis along every direction} \citep{dieuleveut2017harder}, i.e., there is a constant $\kappa > 0$ such that
\begin{equation} \label{eq:bounded-kurtosis}\tag{ \ref{assump:fourth-moment}\ref{item:fourth-moement-upper}'}
    \text{for every } \vB,\ \Ebb [\abracket{\vB, \xB}^4] \le \kappa \abracket{\vB, \HB \vB}^2.
\end{equation}
One can verify that condition \eqref{eq:bounded-kurtosis} (hence Assumption \ref{assump:fourth-moment}\ref{item:fourth-moement-upper}) is weaker than assuming a sub-Gaussian tail for the distribution of $\HB^{-\half}\xB$ (see Lemma A.1 in \citet{zou2021benign}), where the latter condition is typically made in regression analysis \citep{HsuKZ14,bartlett2020benign,tsigler2020benign}.
On the other hand, Assumption \ref{assump:fourth-moment}\ref{item:fourth-moement-upper}  is stronger than the condition $\Ebb [\xB \xB^\top \xB \xB^\top] \preceq R^2 \HB$ for some constant $R^2 > 0$, as often assumed in many SGD analysis \citep{bach2013non,dieuleveut2017harder,jain2017markov,jain2017parallelizing,neu2018iterate,ge2019step}. 
We refer the reader to Appendix \ref{append:sec:fourth-moment-assump-examples} for more examples for Assumption \ref{assump:fourth-moment}.



\begin{assumption}[Noise condition]\label{assump:noise}
Denote 
\[
\SigmaB := \Ebb[(y - \abracket{\wB^*, \xB})^2 \xB \xB^\top],\quad 
\sigma^2 := \big\| \HB^{-\half} \SigmaB \HB^{-\half} \big\|_2, 
\]
and assume $\SigmaB$ and $\sigma^2$ are finite.
\end{assumption}
Here $\SigmaB$ is the covariance matrix of the gradient noise at the optimal $\wB^*$, and $\sigma^2$ characterizes the noise level in that $\SigmaB \preceq \sigma^2 \HB$. 
Assumption \ref{assump:noise} allows the additive noise to be mis-specified \citep{dieuleveut2017harder,jain2017parallelizing}; and in particular, Assumption \ref{assump:noise} is directly implied by the following Assumption \ref{assump:well-specified-noise} for a well-specified linear regression model.

\begin{taggedassumption}{\ref{assump:noise}'}[Well-specified noise]\label{assump:well-specified-noise}
Assume the response is generated by
\[
y = \abracket{\wB^*, \xB} + \epsilon,\quad
\epsilon \sim \Ncal (0, \sigma^2),
\]
where $\epsilon$ is independent with $\xB$.
\end{taggedassumption}

\noindent\textbf{Additional Notation.}
Denote the eigen decomposition of the Hessian by
\(\HB = \sum_{i} \lambda_i \vB_i \vB_i^\top\), where $(\lambda_i)_{i\ge 1}$ are eigenvalues in a non-increasing order and $(\vB_i)_{i\ge 1}$ are the corresponding eigenvectors.
We denote $\HB_{k^*:k^\dagger} := \sum_{k^* < i \le k^\dagger} \lambda_i \vB_i \vB_i^\top$, where $0\le k^* \le k^\dagger$ are two integers, and we allow $k^\dagger = \infty$. For example,
\[
\HB_{0:k} = \sum_{1\le i \le k} \lambda_i \vB_i \vB_i^\top,\quad 
\HB_{k:\infty} = \sum_{i > k} \lambda_i \vB_i \vB_i^\top.
\]
Similarly, we denote $\IB_{k^*:k^\dagger} := \sum_{k^* < i \le k^\dagger} \vB_i \vB_i^\top$.

\section{Main Results}\label{sec:exponential_decay}

In this section, we present our main results.

\subsection{An Upper Bound}\label{sec:upper-bound}
We begin with an excess risk upper bound for last iterate SGD with tail geometrically decaying stepsize.

\begin{theorem}[An upper bound]\label{thm:tail-decay-upper-bound}
Consider last iterate SGD with stepsize scheme \eqref{eq:geometry-tail-decay-lr}.
Suppose Assumptions \ref{assump:second-moment},  \ref{assump:fourth-moment}\ref{item:fourth-moement-upper} and \ref{assump:noise} hold.
Let $K := \lceil (N-s) / \log (N-s) \rceil$.
Suppose $\gamma_0 < 1/(3\alpha\tr(\HB)\log (s+K))$. Then we have 
\[
\Ebb [ L(\wB_N) - L(\wB^*) ] \le \biasErr + \varErr,
\]
where 
\begin{align*}
    & \biasErr
    \lesssim \frac{\big\| (\IB-\gamma_0 \HB)^{s+K} (\wB_0 - \wB^*) \big\|^2_{\IB_{0:k^*}} }{\gamma_0 K} \ +  \\
    &  \big\| (\IB-\gamma_0 \HB)^{s+K} (\wB_0 - \wB^*) \big\|^2_{\HB_{k^*:\infty}}  +  \log (s+K)\ \cdot \\
    & \ \bigg( \frac{\|\wB_0 - \wB^* \|^2_{\IB_{0:k^\dagger}}}{\gamma_0 (s+K)}  + \|\wB_0 - \wB^* \|^2_{\HB_{k^\dagger:\infty}} \bigg)  \cdot \frac{\DIM}{K},
\end{align*}
and 
\begin{align*}
    \varErr \le \frac{8\sigma^2}{1-\alpha \gamma_0 \tr(\HB)}\cdot \frac{\DIM}{K}.
\end{align*}
Here $k^*$, $k^\dagger$ are arbitrary indexes, and the \emph{effective dimension} is defined by
\[
\DIM := k^* + \gamma_0 K \sum_{k^* < i \le k^\dagger}\lambda_i + \gamma_0^2 K (s+K) \sum_{i>k^\dagger}\lambda_i^2.
\]
Moreover, the bound is minimized for $k^*:= \max\{ k: \lambda_k \ge 1/(\gamma_0 K) \}$ and $k^\dagger := \max\{k: \lambda_k \ge 1/(\gamma_0 (s+K)) \}$.
\end{theorem}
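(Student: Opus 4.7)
The plan is to adapt the operator method for SGD on linear regression (as in \citet{jain2017markov,zou2021benign}) to the multi-phase, decaying-stepsize setting. First, I would apply the bias--variance decomposition $\wB_t - \wB^* = \bB_t + \cB_t$, where $\bB_0 = \wB_0 - \wB^*$ evolves noiselessly and $\cB_0 = \zero$ absorbs the gradient noise at $\wB^*$. Setting $\Bbb_t := \Ebb[\bB_t \bB_t^\top]$ and $\Cbb_t := \Ebb[\cB_t \cB_t^\top]$, the cross term vanishes under Assumption \ref{assump:noise}, so the excess risk splits as $\Ebb[L(\wB_N) - L(\wB^*)] = \tfrac12 \langle \HB, \Bbb_N \rangle + \tfrac12 \langle \HB, \Cbb_N \rangle$, to be controlled separately.

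Next, I would partition the trajectory into phases on which the stepsize is constant: phase $0$ of length $s+K$ at stepsize $\gamma_0$, and phases $\ell \ge 1$ of length $K$ at stepsize $\gamma_\ell = \gamma_0 / 2^\ell$, for $\ell = 1,\dots, \log(N-s)$. Within each phase I can invoke the one-step covariance operators from constant-stepsize analyses: letting $\Tcal_\gamma$ denote the centered SGD operator acting on PSD matrices, the within-phase recursion reads $\Bbb \mapsto (\Ical - \gamma \Tcal_\gamma)^K \circ \Bbb$ for the bias, and for the variance $\Cbb \mapsto (\Ical - \gamma \Tcal_\gamma)^K \circ \Cbb + \gamma^2 \sum_{j=0}^{K-1}(\Ical - \gamma \Tcal_\gamma)^j \circ \tilde{\SigmaB}$, where $\tilde{\SigmaB}$ is the PSD operator image of $\SigmaB$. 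Assumption \ref{assump:fourth-moment}\ref{item:fourth-moement-upper} with $\gamma_0 \alpha \tr(\HB) < 1$ ensures the per-step operator is a strict contraction in every phase, and this is where the $1/(1-\alpha\gamma_0 \tr(\HB))$ factor in $\varErr$ enters.

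For the variance, I would follow the template of \citet{zou2021benign}: in each phase $\ell$ the variance saturates at the stationary level $\gamma_\ell\,(\Ical - (\Ical-\gamma_\ell\Tcal_{\gamma_\ell})^K)^{-1}$ applied to $\tilde{\SigmaB}$, and later smaller-stepsize phases further contract it. Summing the $\log(N-s)$ phase contributions along each eigen-direction of $\HB$ yields the three-bin split in $\DIM$: directions with $\lambda_i \gtrsim 1/(\gamma_0 K)$ saturate and contribute $1$ each (giving $k^*$), medium directions with $1/(\gamma_0(s+K)) \lesssim \lambda_i \lesssim 1/(\gamma_0 K)$ contribute $\gamma_0 K \lambda_i$, and small directions contribute $\gamma_0^2 K (s+K) \lambda_i^2$ because even the full trajectory length is insufficient to exit the transient. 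Choosing $k^*, k^\dagger$ at the crossover scales optimizes $\DIM$, which is the minimizer stated in the theorem.

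The main obstacle, and the place where a new argument is needed, is the bias term. A naive per-phase contraction $(\IB - \gamma_\ell \HB)^K$ chained $\log(N-s)$ times is not tight: the centered operator $\Tcal_\gamma$ introduces extra cross terms $\gamma \tr(\HB \Bbb) \cdot \HB$ at each phase, which can accumulate logarithmically. To control this, I would design a \emph{multi-phase} recursion for $\Bbb$ in which the contraction of the next (smaller-stepsize) phase absorbs the additive error generated by the current phase. Splitting $\Bbb_0 = \wB_0 - \wB^*$ along eigen-directions, for indices $i \le k^\dagger$ the eigenvalues are large enough that the geometric contraction $(\IB - \gamma_0 \HB)^{s+K}$ dominates the product $\prod_\ell (\IB - \gamma_\ell \HB)^K$, producing the first two terms in the bound; for $i > k^\dagger$, the weight in the direction survives, and the accumulated cross-term error produces the $\log(s+K) \cdot \DIM/K$ factor exactly mirroring the variance bound, as the $\log$ comes from the number of phases. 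Combining these two regimes via the same $k^*, k^\dagger$ split as for the variance yields the claimed bias upper bound.
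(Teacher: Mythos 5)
Your overall architecture — bias-variance split, multi-phase analysis indexed by the constant-stepsize segments of \eqref{eq:geometry-tail-decay-lr}, and the three-bin decomposition by eigenvalue scale that produces $\DIM$ — matches the paper. Two points, however, need care.

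First, a minor issue: you claim the cross term vanishes under Assumption~\ref{assump:noise} so that the excess risk \emph{equals} $\tfrac12\langle\HB,\BB_N\rangle + \tfrac12\langle\HB,\CB_N\rangle$. That exact equality requires the well-specified Assumption~\ref{assump:well-specified-noise}, which makes $\Ebb[\etaB^{\mathrm{var}}_t \mid \etaB^{\mathrm{bias}}_t]=0$. Under the general Assumption~\ref{assump:noise} (mis-specified noise) the paper only asserts the inequality $\Ebb[L(\wB_N)-L(\wB^*)]\le\langle\HB,\BB_N\rangle + \langle\HB,\CB_N\rangle$ via $(\uB+\vB)(\uB+\vB)^\top\preceq 2(\uB\uB^\top+\vB\vB^\top)$. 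This still suffices for the theorem, but the equality you wrote does not hold as stated.

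Second, and more substantively, the bias analysis you sketch is not what makes the argument go through, and the mechanism you describe would not obviously produce the stated bound. You write that ``the contraction of the next (smaller-stepsize) phase absorbs the additive error generated by the current phase,'' i.e. later phases clean up accumulated error. The paper's mechanism is almost the opposite: the required bound on $\langle\HB,\BB^{(\ell)}\rangle$ is already achieved after \emph{phase two}, and the remaining $\log(N-s)-2$ phases are shown to inflate $\langle\HB,\BB^{(\ell)}\rangle$ by at most a constant factor $e$ (because $\sum_{\ell\ge 3}\gamma^{(\ell-1)}\le\gamma_0/2$ telescopes geometrically). Showing that a single phase gives the right decay is the crux, and there the key tool is a \emph{self-governing recursion} on $\langle\HB,\BB_t\rangle$: expanding $\BB_t \preceq (\Ical-\gamma\widetilde\Tcal)^t\circ\BB_0 + \alpha\gamma^2\sum_i(\Ical-\gamma\widetilde\Tcal)^{t-1-i}\circ\HB\cdot\langle\HB,\BB_i\rangle$, applying $(\IB-\gamma\HB)^{2(t-1-i)}\HB\preceq \tfrac{1}{\gamma(t-i)}\IB$, and observing that the resulting weighted sum $\sum_{i<t}\langle\HB,\BB_i\rangle/(t-i)$ bounds itself (this is where the $\gamma_0 < 1/(3\alpha\tr(\HB)\log(s+K))$ hypothesis enters, to make $2\alpha\gamma\tr(\HB)\log n < 1$). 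Without this self-bounding step the accumulated cross terms do not collapse to a $\log(s+K)\cdot\DIM/K$ overhead; they look a priori like they could grow with $t$. Your sketch also conflates the roles of $k^*$ and $k^\dagger$: the first two bias terms split at $k^*$ (which calibrates to $1/(\gamma_0 K)$, the second-phase scale), while the $\log(s+K)$-weighted term splits at $k^\dagger$ (calibrating to $1/(\gamma_0(s+K))$, the first-phase scale) — these arise from two different applications of the one-phase lemma, not from a single eigendirection split.

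Relatedly, for the variance your ``saturate at the per-phase stationary level, then contract'' picture is a reasonable intuition, but the paper's proof takes a cleaner bootstrap route: first establish a \emph{crude uniform} bound $\CB_t\preceq\tfrac{\gamma_0\sigma^2}{1-\gamma_0 R^2}\IB$ for all $t$ by induction (Lemma~5 of \citet{ge2019step}), then feed that bound into the recursion for $\CB_N$ to obtain $\CB_N\preceq\tfrac{\sigma^2}{1-\gamma_0 R^2}\sum_{t=1}^N\gamma_t^2\prod_{i>t}(\IB-\gamma_i\HB)^2\HB$, which is then evaluated eigenvalue-by-eigenvalue via a scalar inequality. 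Tracking per-phase stationary covariances directly, as you propose, would require carefully arguing that the transition to a smaller stepsize phase starting from the previous stationary level still converges to the new, smaller stationary level within $K$ steps — doable, but more delicate than the bootstrap.
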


The excess risk bound in Theorem \ref{thm:tail-decay-upper-bound} consists a \emph{bias error} term that stems from the incorrect initialization $\wB_0 - \wB^* \ne 0$, and a \emph{variance error} term that stems from the presence of additive noise $y - \abracket{\wB^*, \xB} \ne 0$. 
Our bound is \emph{dimension-free} and \emph{problem-dependent}:
instead of depending on the ambient dimension $d$, it depends on the \emph{effective dimension} $\DIM$, which is jointly determined by the problem and the algorithm.
In particular, when the eigenspectrum of the data covariance decays fast, the effective dimension $\DIM$ could be much smaller than the ambient dimension $d$ (and sample size $N$) to enable generalization in the overparameterized scenarios. 

\setcellgapes{2pt}
\begin{table*}[t]
\caption{ 
A comparison between our result and several existing results. See Section \ref{sec:upper-bound} for more details.
}
\label{table:comparison}
\centering\makegapedcells
 \begin{tabular}{ c | c c c c } 
 \toprule
   & {\small \citet{bach2013non}} & {\small\citet{ge2019step}} & {\small \citet{zou2021benign}}  & Ours \\ 
 \midrule
 output  & averaged iterate & last iterate & averaged iterate & last iterate \\ 
 \midrule
  \makecell{initial \\stepsize}  & $\gamma \lesssim 1$ & $\gamma \lesssim 1$  & $\gamma \lesssim 1$ & $\gamma \lesssim 1/ \log (N)$ \\
 \midrule
 \makecell{effective\\ number of \\ steps ($\STEPS$)} & $N$  & $ \dfrac{ N }{\log (N)}$ & $N$ & $ \dfrac{N}{\log (N)} $ \\
 \midrule
 \makecell{effective \\ dimension \\ ($\DIM$) }  & $d$ & $d$ & $k^* + \gamma^2 N^2 \sum_{i>k^*}\lambda_i^2 $ & $k^* + \gamma^2 \STEPS^2 \sum_{i>k^*}\lambda_i^2 $  \\
 \midrule 
 \makecell{effective \\ noise \\ ($\NOISE$)} & $\sigma^2$  & $\sigma^2$  & $\begin{gathered}
     \sigma^2 +  \\  \dfrac{\| \wB^* \|^2_{\IB_{0:k^*}}}{\gamma \STEPS}  + \|\wB^* \|^2_{\HB_{k^*:\infty}} 
 \end{gathered}$ & \( \begin{gathered}
     \sigma^2 + \log (N)  \cdot \\ \bigg( \dfrac{\| \wB^* \|^2_{\IB_{0:k^*}}}{\gamma \STEPS}  + \|\wB^* \|^2_{\HB_{k^*:\infty}} \bigg)
 \end{gathered} \)  \\
 \midrule
 \makecell{effective \\ bias  error \\ ($\EffBias$)}  & \( \dfrac{ \| \wB^* \|^2_{2}}{\gamma \STEPS} \) & \( \dfrac{d \| \wB^* \|^2_{2}}{\gamma \STEPS}  \) & $ \dfrac{\| \wB^* \|^2_{\HB^{-1}_{0:k^*}}}{\gamma^2 \STEPS^2}  + \|\wB^* \|^2_{\HB_{k^*:\infty}}$ & \( \begin{gathered}
     \dfrac{\| (\IB-\gamma\HB)^\STEPS \wB^* \|^2_{\IB_{0:k^*}}}{\gamma \STEPS} \\ + \|(\IB-\gamma\HB)^\STEPS \wB^* \|^2_{\HB_{k^*:\infty}} 
 \end{gathered}\) \\
 \midrule
 \makecell{ unified \\ risk bound} & \multicolumn{4}{c}{$\EffBias + \NOISE \cdot \dfrac{\DIM}{\STEPS}$} \\
\bottomrule
 \end{tabular}
\end{table*}

For example, let us consider $s = N /2$ and $K = {N} / {(2 \log (N/2) )}$, which corresponds to SGD that starts decaying stepsize after seeing half of the samples.
Then the excess risk bound vanishes provided that $\DIM = o(K)$
, or in other words, 
\[ k^* = o\bigg( \frac{N}{\log (N)} \bigg), \sum_{k^* < i \le k^\dagger} \lambda_i = o(1), \sum_{i > k^\dagger} \lambda_i^2 = o\Big(\frac{1}{N}\Big). \]
Theorem \ref{thm:tail-decay-upper-bound} allows the last iterate of SGD to generalize even in the overparameterized regime ($d > N$). 
Several concrete examples are presented in Corollary \ref{thm:examples}.

\begin{mycorollary}[Example data distributions]\label{thm:examples}
Under the same conditions as Theorem \ref{thm:tail-decay-upper-bound}, suppose that $s = N /2$, $K  = {N} / {(2 \log (N/2) )}$, $\gamma_0 = 1/(4 \alpha\tr(\HB)\log (N)))$, and $\|\wB_0 - \wB^* \|_2 $ is finite. Recall the eigenspectrum of $\HB$ is $(\lambda_k)_{k \ge 1}$.
\begin{enumerate}
    \item If $\lambda_k = k^{-(1+r)}$ for some constant $r > 0$, then the excess risk is $\Ocal\big(   N^{\frac{-r}{1+r}}\cdot \log^{\frac{r-1}{1+r}} (N) \big)$.
    \item If $\lambda_k = k^{-1} \log^{-r}(k+1)$ for some constant $r > 1$, then the excess risk is $\Ocal\big( \log^{-r} (N) \big)$.
    \item If $\lambda_k = 2^{-k}$, then the excess risk is $\Ocal\big( N^{-1}  \log^2 (N) \big)$.
\end{enumerate}
\end{mycorollary}
These examples are from Corollary 2.3 in \citet{zou2021benign} for SGD with iterate-averaging (one can verify that their Corollary 2.3 also holds for constant-stepsize SGD with tail-averaging with $s = N/2$). 
Comparing our Corollary \ref{thm:examples} with Corollary 2.3 in \citet{zou2021benign}, we can see that the excess risk bounds of last iterate SGD is inferior to that of SGD with averaging by at most polylogarithmic factors.

\noindent\textbf{Reduction to the Classical Regime.}
It is worthy noting that Theorem \ref{thm:tail-decay-upper-bound} nearly recovers the minimax optimal bounds \citep{polyak1992acceleration,bach2013non} in the classical regime when $d = o(N)$. 
In particular, let us set $k^* = k^\dagger = d$, $s=0$ and $K = N / \log (N)$, then Theorem \ref{thm:tail-decay-upper-bound} implies:
\begin{align*}
    & \biasErr \lesssim \frac{\norm{\wB_0 - \wB^*}_2^2 \log (N)}{\gamma_0 N} \Big( 1 + \frac{d \log^2 (N) }{N} \Big), \\ 
    & \varErr \lesssim \frac{\sigma^2 d \log (N)}{N}.
\end{align*}
Now choose $\gamma_0 = 1/(4\alpha\tr(\HB)\log\log(N))$ and recall $d = o(N)$, then both the bias and variance errors match the statistical minimax rates \citep{polyak1992acceleration,bach2013non} up to some logarithmic factors.




\noindent\textbf{Comparison with Existing Bounds.}
Table \ref{table:comparison} presents a detailed comparison between our result and several existing results, including \citet{ge2019step} for last iterate SGD and \citet{bach2013non,zou2021benign} for SGD with iterate averaging.
To unify notations, we use $\gamma$, $N$, $d$, $\wB^*$, $\sigma^2$ to denote the (initial) stepsize, the total number of steps, the ambient dimension, the optimal model parameter and the noise level, respectively. We also use \textit{effective number of steps} as the number of equivalently steps when using constant stepsize (or can be understood as the total optimization length). The \textit{effective dimension} can be understood as the number of useful dimensions (discovered by the algorithm) that contribute to the problem.
We also assume all algorithms are initialized from zero ($\wB_0 = 0$), without lose of generality.
To be consistent with the algorithmic setting of \citet{ge2019step}, we restrict our result to geometric decaying stepsize scheduler ($s=0$), which decreases the effective number of steps in our result.
Table \ref{table:comparison} shows that our result generalizes that in \citet{ge2019step} for last iterate SGD to high dimensional setting, and is comparable to that in \citet{zou2021benign} for SGD with iterate averaging ignoring some logarithmic factors.

\subsection{A Lower Bound}
We complement the above upper bound with a lower bound.
\begin{theorem}[A lower bound]\label{thm:tail-decay-lower-bound}
Consider last iterate SGD with stepsize scheme \eqref{eq:geometry-tail-decay-lr}.
Suppose Assumptions \ref{assump:second-moment}, \ref{assump:fourth-moment}\ref{item:fourth-moement-lower} and \ref{assump:well-specified-noise} hold.
Let $K = (N-s) / \log (N-s)$.
Suppose $K\ge 10$ and $\gamma_0 < 1 / \lambda_1$. Then we have 
\[
\Ebb [ L(\wB_N) - L(\wB^*) ] = \half \biasErr + \half \varErr,
\]
where 
\begin{align*}
    & \biasErr \ge
    \big\| (\IB-\gamma_0 \HB)^{s+2K} (\wB_0 - \wB^*) \big\|^2_{\HB} \ + \\
    &\qquad \qquad \frac{\beta}{1200} \cdot  \|\wB_0 - \wB^* \|^2_{\HB_{k^\dagger:\infty}}  \cdot \frac{\DIM}{K},
\end{align*}
and 
\begin{equation*}
    \varErr \ge \frac{\sigma^2}{400}\cdot \frac{\DIM}{K}.
\end{equation*}
Here $k^*:= \max\{k: \lambda_k \ge 1/(\gamma_0 K)  \}$, $k^\dagger := \max\{ k: \lambda_k \ge 1/(\gamma_0 (s+K))  \}$, and the \emph{effective dimension} is defined by
\[
\DIM := k^* + \gamma_0 K \sum_{k^* < i \le k^\dagger}\lambda_i + \gamma_0^2 K (s+K) \sum_{i>k^\dagger}\lambda_i^2.
\]
\end{theorem}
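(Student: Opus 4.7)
The plan is to mirror the operator-method analysis used for the upper bound, reversing every inequality. Writing $\etaB_t := \wB_t - \wB^*$ and $\BB_t := \Ebb[\etaB_t \etaB_t^\top]$, the excess risk equals $\tfrac{1}{2}\langle \HB, \BB_N\rangle$. Under Assumption \ref{assump:well-specified-noise} we have $\SigmaB = \sigma^2 \HB$, and combining the one-step SGD update with Assumption \ref{assump:fourth-moment}\ref{item:fourth-moement-lower} yields the PSD lower-bound recursion
$$\BB_t \succeq (\IB - \gamma_t \HB)\BB_{t-1}(\IB-\gamma_t \HB) + \beta\gamma_t^2\tr(\HB\BB_{t-1})\HB + \gamma_t^2\sigma^2\HB.$$
I would then decompose $\BB_t = \BB_t^b + \BB_t^v$ in the standard bias/variance way, each satisfying its own version of the recursion (the bias part starts from $(\wB_0-\wB^*)(\wB_0-\wB^*)^\top$ with no additive $\sigma^2\HB$ forcing, the variance part starts from $0$ and keeps it), and take $\langle\HB,\cdot\rangle$ at $t=N$ to obtain the two quoted error terms separately.

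For the pure-contraction piece of the bias, I would discard the nonnegative $\beta$-forcing to get $\BB_N^b \succeq \Pcal(\wB_0-\wB^*)(\wB_0-\wB^*)^\top\Pcal$ where $\Pcal := (\IB-\gamma_0\HB)^s\prod_{\ell=1}^L (\IB-\gamma_\ell\HB)^K$ and $L=\log(N-s)$. The scalar inequality $\sum_{\ell\ge 1}\log(1-a/2^\ell) \ge 2\log(1-a)$ for $a\in(0,1)$ (verified using $\log(1-x)\ge -x/(1-x)$ and the geometric sum $\sum_\ell 2^{-\ell}=1$) upgrades this to $\Pcal \succeq (\IB-\gamma_0\HB)^{s+2K}$, which yields the first bias term $\|(\IB-\gamma_0\HB)^{s+2K}(\wB_0-\wB^*)\|_\HB^2$. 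To recover the second bias term $\tfrac{\beta}{1200}\|\wB_0-\wB^*\|^2_{\HB_{k^\dagger:\infty}}\cdot\DIM/K$, I would retain the $\beta$-piece in the recursion: for every $i>k^\dagger$, the definition $\lambda_i<1/(\gamma_0(s+K))$ gives $(1-\gamma_0\lambda_i)^{2(s+K)}\ge e^{-2}$, so $\tr(\HB\BB_t^b)\gtrsim \|\wB_0-\wB^*\|^2_{\HB_{k^\dagger:\infty}}$ uniformly in $t\le N$. This acts as a time-uniform effective noise level to which the variance analysis below applies verbatim.

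For the variance, dropping the $\beta$-term and iterating gives $\BB_N^v \succeq \sigma^2\sum_{t=1}^N \gamma_t^2 \Qcal_{t+1:N}\HB\Qcal_{t+1:N}$ with $\Qcal_{u:v} := \prod_{j=u}^v (\IB-\gamma_j\HB)$. Diagonalizing in the eigenbasis of $\HB$ and taking $\langle\HB,\cdot\rangle$ reduces the task to lower-bounding, coordinatewise, the scalar quantity $\lambda_i^2\sum_t \gamma_t^2 \prod_{u>t}(1-\gamma_u\lambda_i)^2$. I would partition the spectrum into the three regimes defining $\DIM$. For $i\le k^*$ the last phase (stepsize $\gamma_0/(N-s)$) alone produces $\Omega(\sigma^2/K)$ per coordinate; for $k^*<i\le k^\dagger$ the unique resonant phase $\ell$ with $\gamma_\ell\lambda_i\asymp 1/K$ injects $\Omega(\gamma_0\sigma^2\lambda_i)$ per coordinate; for $i>k^\dagger$ every iteration contributes with essentially no decay, and summing over all $s+K$ steps gives $\Omega(\gamma_0^2(s+K)\sigma^2\lambda_i^2)$ per coordinate. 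Summing across $i$ reproduces $\sigma^2\DIM/K$ up to absolute constants.

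The main obstacle is controlling the resonant middle band $k^*<i\le k^\dagger$ in the variance step: one must show that noise injected during the resonant phase is not over-damped by the remaining $O(\log N)$ geometrically-decaying phases. This reduces to the telescoping estimate $\prod_{j>\ell}(1-\gamma_j\lambda_i)^K \ge \exp(-cK\lambda_i\sum_{j>\ell}\gamma_j)\ge e^{-c}$ for an absolute constant $c$, which uses the geometric identity $\sum_{j>\ell}\gamma_j \le \gamma_\ell$ together with $\gamma_\ell\lambda_i\asymp 1/K$ in the resonant band. Once this $\Omega(1)$ survival factor is secured uniformly in $i$, the rest of the argument is bookkeeping that, after carefully dissecting the eigenvalue bands against the phase structure, produces the explicit constants $1/400$ and $1/1200$ appearing in the statement.
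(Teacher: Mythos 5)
Your overall plan is the one the paper uses: the operator-viewpoint bias--variance split (Lemma~\ref{lemma:bias-var-decomp-lowerbonud}), a pure-contraction lower bound for the bias together with a time-uniform lower bound on $\tr(\HB\BB_t^b)$ fed back as effective noise to generate the $\beta$-term (this is precisely Theorem~\ref{thm:B-lower-bound}'s two-step argument), and a per-coordinate scalar lower bound on $\sum_t\gamma_t^2\prod_{u>t}(1-\gamma_u\lambda_i)^2\lambda_i$ partitioned over three eigenvalue bands (Lemma~\ref{lemma:scalar-func-lower-bound}). Two of your band assignments, however, are wrong and would leave you short of the stated bound if executed literally. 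For $i\le k^*$ you credit the very last phase (stepsize $\eqsim \gamma_0/(N-s)$) with an $\Omega(\sigma^2/K)$ per-coordinate contribution, but that phase injects only $O\bigl(K\gamma_0^2\lambda_i^2/(N-s)^2\bigr)\le O\bigl(1/(K\log^2(N-s))\bigr)$, losing a $\log^2$ factor; the correct step, as in the paper, is to pick the $i$-dependent resonant phase $\ell^*=\lfloor\log(K\gamma_0\lambda_i)\rfloor$ where $\gamma_{\ell^*}\lambda_i\eqsim 1/K$, and then check that the remaining tail phases damp by only an absolute constant. For $k^*<i\le k^\dagger$ there is no phase with $\gamma_\ell\lambda_i\eqsim 1/K$ at all: by the definition of $k^*$, $\gamma_0\lambda_i<1/K$, and all subsequent stepsizes are smaller still. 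The $\Omega(\gamma_0\sigma^2\lambda_i)$ contribution comes instead from noise saturating during the long initial constant-stepsize segment (of length $s+K$, long enough because $\gamma_0\lambda_i(s+K)\ge 1$), with the decaying tail giving a harmless $e^{-O(1)}$ damping because $\sum_{j\ge1}\gamma_j\le\gamma_0$ and $\gamma_0\lambda_iK<1$.

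Your telescoping lemma in the final paragraph is the right tool, just attached to the wrong band; with the assignments corrected as above the argument matches the paper's Lemma~\ref{lemma:scalar-func-lower-bound}. Two smaller points: (i) for the pure-contraction bias term you only need $\prod_{\ell\ge1}(1-a/2^\ell)\ge 1-a$, which the paper obtains by the one-line Weierstrass bound $\prod_{\ell}(1-a/2^\ell)\ge 1-\sum_{\ell}a/2^\ell$; your claimed $\sum_{\ell\ge1}\log(1-a/2^\ell)\ge 2\log(1-a)$ is stronger than needed and your sketch via $\log(1-x)\ge -x/(1-x)$ only verifies it for $a\le 1/2$; (ii) the feedback of $\tr(\HB\BB_t^b)\gtrsim\|\wB_0-\wB^*\|^2_{\HB_{k^\dagger:\infty}}$ into the recursion must use the contraction over \emph{all} $N$ steps, not just the first $s+K$, which is exactly why the pure-contraction estimate $\prod_{t\le N}(\IB-\gamma_t\HB)^2\succeq(\IB-\gamma_0\HB)^{2(s+2K)}$ is needed a second time.
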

Theorem \ref{thm:tail-decay-lower-bound} provides a problem-dependent lower bound for last iterate SGD in the well-specified linear regression model. 
It shows that our variance error bound is tight up to constant; however, for our bias error bound, there is a gap ($1/(\gamma_0 K)$ vs. $(\IB - \gamma_0 \HB)^{K}\HB_{0:k^*}$) between the upper and lower bounds in the first term, and is missing a factor of $\|\wB_0 - \wB^* \|^2_{\HB_{0:k^\dagger}}$ and a $\log (s+K)$ factor in the second term.
These gaps are due to some technical difficulties to obtain an accurate bias bound on the last iterate of SGD. We leave it as a future work to close these gaps.

\subsection{Comparison with Polynomially Decaying Stepsize}\label{sec:poly-decay}
In terms of the statistical minimax rate, it is proved by \citet{ge2019step} that the last iterate of SGD performs better with geometrically decaying stepsize than with polynomially decaying stepsize. Nonetheless, their comparison is in terms of the \emph{worst-case} performance, and \citet{ge2019step} did not rule out the possibility that there could exist some linear regression problems such that SGD generalizes better with polynomially decaying stepsize.
Thanks to our sharp problem-dependent bounds on SGD with (tail) geometrically decaying stepsize, we are able to compare its performance with that of SGD with (tail) polynomially decaying stepsize, in an \emph{instance-wise} manner.
The \emph{(tail) polynomially decaying stepsize} is formally defined by
\begin{equation}\label{eq:poly-tail-decay-lr}
    \gamma_t = 
    \begin{cases}
    \gamma_0, & 0 \le t \le s; \\
    {\gamma_0} / {(t-s)^a}, & s < t \le N,
    \end{cases}
\end{equation}
for some $a \in [0,1]$.
We then present a problem-dependent excess risk lower bound for the last iterate of SGD with stepsize scheme \eqref{eq:poly-tail-decay-lr}. 
Due to the space limit, the following theorem focuses on $a\in[0, 1)$; the full version for $a \in [0,1]$ is stated as Theorem \ref{thm:tail_decay_poly_full_ver} in Appendix \ref{append-sec:poly}.

\begin{theorem}[A lower bound for poly-decaying stepsizes]\label{thm:tail_decay_poly}
Consider last iterate SGD with stepsize scheme \eqref{eq:poly-tail-decay-lr}.
Suppose Assumptions \ref{assump:second-moment}, \ref{assump:fourth-moment}\ref{item:fourth-moement-lower} and \ref{assump:well-specified-noise} hold. Suppose $\gamma_0< 1/(4\lambda_1)$, $s\gamma_0\ge \sum_{t>s}\gamma_t$, and $a\in[0, 1)$. Then we have
\[
\Ebb [ L(\wB_N) - L(\wB^*) ] = \half \biasErr + \half \varErr,
\]
where
\begin{align*}
    \biasErr &\gtrsim \big\| (\IB-\gamma_0 \HB)^{s+\frac{2N^{1-a}}{1-a}}\cdot (\wB_0-\wB^*)\big\|_{\HB}^2 \\
    &\qquad +  \beta\cdot \|\wB_0 - \wB^* \|^2_{\HB_{k^\dagger:\infty}} \cdot\frac{\DIM}{N},
    \end{align*}
    and
    \begin{align*}
      \varErr &\gtrsim \sigma^2\cdot \frac{\DIM}{N}.  
    \end{align*}
    Here $k^* := \max\{k:\gamma_0 \lambda_k\ge (1-a)/( 2(N-s)^{1-a} )\}$, $k^\dagger := \max\{k:\gamma_0 \lambda_k\ge 1/(2s)\}$, and the \emph{effective dimension} is defined by
    \begin{align*}
    \DIM & := \sum_{i\le k^*} \max\{ N^{1-a}\gamma_0\lambda_i ,\ a\log(N) \} \\
    &\quad + \gamma_0 N \sum_{k^*< i\le k^\dagger} \lambda_i +  \gamma_0^2 s N\sum_{i> k^\dagger} \lambda_i^2.
    \end{align*}
    

\end{theorem}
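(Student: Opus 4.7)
The plan is to follow the operator-method analysis of SGD for linear regression (in the spirit of \citet{jain2017parallelizing,zou2021benign,ge2019step}) and adapt it to the time-varying polynomial stepsize $\gamma_t$ in \eqref{eq:poly-tail-decay-lr}. First, decompose the iterate $\wB_N-\wB^*$ into a bias part (the noiseless SGD trajectory initialized at $\wB_0$) and an independent variance part (the SGD trajectory initialized at $\mathbf{0}$ with only additive noise driving it), so that
\begin{equation*}
\Ebb[L(\wB_N)-L(\wB^*)] = \tfrac{1}{2}\langle \HB,\BB_N\rangle + \tfrac{1}{2}\langle \HB,\CB_N\rangle,
\end{equation*}
where $\BB_t$ and $\CB_t$ are the centered second-moment matrices of the two components. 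Under Assumptions \ref{assump:fourth-moment}\ref{item:fourth-moement-lower} and \ref{assump:well-specified-noise}, I would obtain PSD \emph{lower} recursions of the form
\begin{align*}
\BB_t &\succeq (\IB-\gamma_t\HB)\BB_{t-1}(\IB-\gamma_t\HB) + \gamma_t^2\beta\tr(\HB\BB_{t-1})\HB,\\
\CB_t &\succeq (\IB-\gamma_t\HB)\CB_{t-1}(\IB-\gamma_t\HB) + \gamma_t^2\sigma^2\HB,
\end{align*}
dropping PSD terms that only help a lower bound. Then diagonalize in the eigenbasis of $\HB$ (which commutes with $\IB-\gamma_t\HB$) and work with the scalar coordinates $b_{i,t}:=\vB_i^\top \BB_t\vB_i$ and $c_{i,t}:=\vB_i^\top \CB_t\vB_i$.

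For the variance, unrolling the scalar recursion yields $c_{i,N} \ge \sigma^2\sum_{t=1}^N \gamma_t^2 \prod_{u=t+1}^N (1-\gamma_u\lambda_i)^2$, which I would evaluate against the envelope $\exp(-2\lambda_i\sum_{u>t}\gamma_u)$ using $\sum_{u>s}\gamma_u\asymp \gamma_0(N-s)^{1-a}/(1-a)$. The analysis then splits into three eigenvalue regimes: (i) for $i\le k^*$ the contribution saturates at $\sigma^2\lambda_i\cdot \max\{N^{1-a}\gamma_0\lambda_i,\,a\log N\}/(N\lambda_i)$, where the first branch comes from the tail stepsize sum reaching its full polynomial mass while the second comes from the piece $\sum_t\gamma_t^2\propto \log N$ that dominates when $a\log N \gg N^{1-a}\gamma_0\lambda_i$; (ii) for $k^*<i\le k^\dagger$ the initial constant-stepsize phase of length $s$ drives the variance and produces the $\gamma_0 N\lambda_i\sigma^2$ contribution after accounting for the geometric contraction at rate $1-\gamma_0\lambda_i$; (iii) for $i>k^\dagger$ the product is $1+o(1)$ so $\sum_t\gamma_t^2\ge \gamma_0^2 s$ together with the assumed $s\gamma_0\ge\sum_{t>s}\gamma_t$ yields the $\gamma_0^2 sN\lambda_i^2$ piece. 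Summing $\lambda_i c_{i,N}$ over $i$ reproduces $\sigma^2\DIM/N$.

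For the bias, the first term $\|(\IB-\gamma_0\HB)^{s+2N^{1-a}/(1-a)}(\wB_0-\wB^*)\|_\HB^2$ comes from dropping the $\beta$-coupling and lower bounding the deterministic contraction factor: using $1-\gamma_t\lambda_i\ge (1-\gamma_0\lambda_i)^{\gamma_t/\gamma_0}$ (valid for $\gamma_0\lambda_i\le 1/2$ by concavity of $\log(1-x)$) together with $\sum_{t>s}\gamma_t\le 2\gamma_0 N^{1-a}/(1-a)$ gives $\prod_{t=1}^N(1-\gamma_t\lambda_i) \ge (1-\gamma_0\lambda_i)^{s+2N^{1-a}/(1-a)}$. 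The second bias piece comes from reinstating the $\beta\tr(\HB\BB_{t-1})\HB$ coupling: for tail directions $i>k^\dagger$ the coordinate $b_{i,t}$ barely departs from its initial value $\langle\vB_i,\wB_0-\wB^*\rangle^2$ along the whole trajectory, so $\tr(\HB\BB_{t-1})\ge \|\wB_0-\wB^*\|^2_{\HB_{k^\dagger:\infty}}$ uniformly in $t$; this constant acts as an effective persistent noise and is processed by the identical three-regime sum used for the variance, giving the $\beta\|\wB_0-\wB^*\|^2_{\HB_{k^\dagger:\infty}}\cdot\DIM/N$ piece.

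The hard part will be the careful evaluation of the polynomial-stepsize sums and products in the intermediate eigenvalue regime, where neither the fixed-point approximation appropriate for constant stepsize nor the geometric-cancellation argument used in Theorem \ref{thm:tail-decay-lower-bound} applies cleanly. The $\max\{N^{1-a}\gamma_0\lambda_i,\,a\log N\}$ structure in $\DIM$ forces a two-sided split of the envelope integral and a careful matching of the dominant branch in each sub-regime; separately, one must track the constants through the $\beta$-coupling step without invoking the smallness assumption $\alpha\gamma_0\tr(\HB)\ll 1$ that the upper bound uses, and verify that the polynomial stepsize prefactor assumption $\gamma_0<1/(4\lambda_1)$ is enough to justify the convexity-based lower bound on the contraction factor across the entire trajectory.
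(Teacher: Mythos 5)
Your overall strategy matches the paper's: the equality bias--variance decomposition under well-specified noise (Lemma~\ref{lemma:bias-var-decomp-lowerbonud}), PSD lower recursions obtained by dropping favorable terms under Assumption~\ref{assump:fourth-moment}\ref{item:fourth-moement-lower}, scalar coordinate analysis in the eigenbasis of $\HB$, a regime split for the variance, and a crude-then-bootstrapped argument for the bias coupling term restricted to tail directions. Your derivation of the first bias term via the convexity inequality $1-\gamma_t\lambda_i\ge(1-\gamma_0\lambda_i)^{\gamma_t/\gamma_0}$ is a valid alternative to the paper's route through $(1-\gamma_t x)^2\ge e^{-4\gamma_t x}$ followed by $e^{-z}\ge(1-z/c)^c$; both yield $(1-\gamma_0\lambda_i)^{s+\Theta(N^{1-a}/(1-a))}$.

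There is, however, a concrete gap in your plan: your attribution of the $a\log N$ branch in $\DIM$. You claim it comes from ``$\sum_t\gamma_t^2\propto\log N$,'' but for $a\in[0,1)$ we have
\[
\sum_{t>s}\gamma_t^2=\gamma_0^2\sum_{u=1}^{N-s}u^{-2a}=\begin{cases}\Theta(\gamma_0^2 N^{1-2a}),& a<1/2,\\ \Theta(\gamma_0^2\log N),& a=1/2,\\ \Theta(\gamma_0^2),& a>1/2,\end{cases}
\]
so this only matches at the single point $a=1/2$ and cannot be the source of a uniform $a\log N$ term. In the paper's proof (Lemma~\ref{lemma:lowerbound_var_poly}), the sum $f(x)=\sum_t\gamma_t^2\prod_{u>t}(1-\gamma_u x)^2 x^2$ is split as $f_1+f_2$, where $f_1$ collects the contribution from the constant-stepsize head ($t\le s$) and $f_2$ the polynomial tail ($t>s$). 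The $N^{1-a}\gamma_0\lambda_i$ branch comes from $f_2$, but the $a\log N$ branch comes from $f_1$: after lower bounding $f_1(x)\gtrsim \gamma_0 x\,[1-(1-2\gamma_0 x)^s]\,e^{-4\gamma_0 x\,(N-s)^{1-a}/(1-a)}$, one uses that $g(y)=y\,e^{-cy}$ is unimodal, so on the sub-regime where $\gamma_0\lambda_i$ lies between $\Theta(1/c)$ and $\Theta(a\log N/c)$ the minimum of $g$ is attained at the right endpoint, giving $g\gtrsim a\log N\cdot e^{-a\log N}/c\asymp a\log N/N$. Without isolating this contribution of the constant-stepsize head and running this unimodality argument, your sketch would not recover the second branch of the $\max$; the decomposition into $f_1$ and $f_2$ (or an equivalent explicit bookkeeping of the head versus tail contributions to each eigendirection) is essential.
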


Comparing Theorem \ref{thm:tail_decay_poly} for tail polynomially decaying stepsize with Theorem \ref{thm:tail-decay-upper-bound} for tail geometrically decaying stepsize, the main difference is in the definition of the effective dimension $\DIM$. This is due to the different algorithmic regularization effects afforded by the different stepsize decaying schemes. With this difference in hand, our next theorem provides an instant-wise \emph{risk inflation} \citep{dhillon2013risk} comparison between (the last iterate of SGD with) these two stepsize decaying schemes.


\begin{theorem}[An instance-wise risk comparison]\label{thm:comparison}
Suppose Assumptions \ref{assump:second-moment}, \ref{assump:fourth-moment} and \ref{assump:well-specified-noise} all hold. Suppose $\gamma_0 < 1/(3\alpha\tr(\HB)\log (s+K))$. Let $N$ be the sample size, and set $s=N/2$. 
Let $\wB_N^{\mathrm{exp}}$ and $\wB_N^{\mathrm{poly}}$ be the last iterate of SGD with stepsize scheme \eqref{eq:geometry-tail-decay-lr} and \eqref{eq:poly-tail-decay-lr}, respectively.
Then there is a constant $C > 0$ such that 
\begin{align*}
& \Ebb [ L(\wB_N^{\mathrm{exp}}) - L(\wB^*) ] \le \notag\\
& \qquad C\cdot\big( 1 + \log(N)\cdot R(N)\big)\cdot \Ebb [ L(\wB_N^{\mathrm{poly}}) - L(\wB^*) ]
\end{align*}
for every problem-algorithm instance $(\HB, \wB^*, \gamma_0)$.
Here
\[ R(N) := \frac{\|\wB-\wB^*\|_{\IB_{0:k^\dagger}}^2/(\gamma_0 N)+\|\wB-\wB^*\|^2_{\HB_{k^\dagger:\infty}} }{\sigma^2}\]
for $k^\dagger:=\max\{k:\lambda_k\ge 1/(\gamma_0 N)\}$.
\end{theorem}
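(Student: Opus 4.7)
The plan is to sandwich the two iterates with matching problem-dependent bounds: apply the upper bound of Theorem \ref{thm:tail-decay-upper-bound} to $\wB_N^{\mathrm{exp}}$ and the lower bound of Theorem \ref{thm:tail_decay_poly} (together with its full-version counterpart in the appendix that handles $a=1$) to $\wB_N^{\mathrm{poly}}$, then bound the ratio term by term. Throughout I use $k^*_{\mathrm{exp}}, k^\dagger_{\mathrm{exp}}$ for the cutoffs in Theorem \ref{thm:tail-decay-upper-bound} and $k^*_{\mathrm{poly}}, k^\dagger_{\mathrm{poly}}$ for those in Theorem \ref{thm:tail_decay_poly}, with $k^\dagger$ as in the statement of the theorem.

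First, I would group the terms from Theorem \ref{thm:tail-decay-upper-bound} as
\[
\Ebb[L(\wB_N^{\mathrm{exp}})-L(\wB^*)] \lesssim B^{\mathrm{exp}} + \sigma^2\bigl(1+\log(N)\,R(N)\bigr)\cdot \frac{\DIM^{\mathrm{exp}}}{K},
\]
where $B^{\mathrm{exp}}$ collects the two pure-decay bias pieces involving $(\IB-\gamma_0\HB)^{s+K}(\wB_0-\wB^*)$. The right factor is obtained by noting that, for $s=N/2$ and $K=\lceil(N-s)/\log(N-s)\rceil$, the threshold $1/(\gamma_0(s+K))$ lies within a factor of $2$ of $1/(\gamma_0 N)$, so $k^\dagger_{\mathrm{exp}} \asymp k^\dagger$ and the log-weighted bracket in Theorem \ref{thm:tail-decay-upper-bound} is bounded by $\sigma^2 R(N)$ up to an absolute constant. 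Second, Theorem \ref{thm:tail_decay_poly} yields
\[
\Ebb[L(\wB_N^{\mathrm{poly}})-L(\wB^*)] \gtrsim B^{\mathrm{poly}} + \sigma^2\cdot \frac{\DIM^{\mathrm{poly}}}{N}.
\]
It then suffices to prove (i) $\DIM^{\mathrm{exp}}/K \lesssim \DIM^{\mathrm{poly}}/N$ and (ii) $B^{\mathrm{exp}} \lesssim B^{\mathrm{poly}} + \sigma^2\log(N)R(N)\cdot \DIM^{\mathrm{poly}}/N$.

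For (i), both effective dimensions decompose into three coordinate bands. The high-frequency cutoffs satisfy $k^\dagger_{\mathrm{exp}} \asymp k^\dagger_{\mathrm{poly}}$, so the tail sums $\gamma_0^2 K(s+K)\sum_{i>k^\dagger_{\mathrm{exp}}}\lambda_i^2$ and $\gamma_0^2 sN\sum_{i>k^\dagger_{\mathrm{poly}}}\lambda_i^2$ match once divided by $K$ and $N$ respectively. In the low-frequency band, the exponential leading term $k^*_{\mathrm{exp}}/K$ can be rewritten via the defining inequality $\gamma_0\lambda_i \ge 1/K$ as $\lesssim \log(N)\gamma_0\sum_{i\le k^*_{\mathrm{exp}}}\lambda_i/N$, which is absorbed into the polynomial counterpart $\sum_{i\le k^*_{\mathrm{poly}}}\max\{\gamma_0\lambda_i N^{-a}, a\log(N)/N\}$ after a short case split on whether the $\max$ is attained by its first or second argument. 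The middle band is handled by observing $k^*_{\mathrm{exp}} \le k^*_{\mathrm{poly}}$, so the exponential middle sum splits into a poly middle sum plus a residue on $(k^*_{\mathrm{exp}}, k^*_{\mathrm{poly}}]$ that is again absorbed by the poly low-band.

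For (ii), on coordinates $i \le k^\dagger$ the decay factors $(1-\gamma_0\lambda_i)^{2(s+K)}$ in $B^{\mathrm{exp}}$ satisfy $(1-\gamma_0\lambda_i)^{s+K} \le 1$, so that band is controlled by $\|\wB_0-\wB^*\|^2_{\IB_{0:k^\dagger}}/(\gamma_0 N)$ times $\DIM^{\mathrm{poly}}/N$, which is exactly a piece of $\sigma^2 R(N) \DIM^{\mathrm{poly}}/N$; the complementary band $i > k^\dagger$ contributes only through $\|\wB_0-\wB^*\|^2_{\HB_{k^\dagger:\infty}}$, giving the second piece of $R(N)$.

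The main obstacle I expect is step (i) near the boundary $a\to 1$, where the polynomial schedule's effective trajectory length $2N^{1-a}/(1-a)$ behaves like $\log(N)$ and extra logarithmic factors creep into $\DIM^{\mathrm{poly}}$. The full version of Theorem \ref{thm:tail_decay_poly} from the appendix is tailored for exactly this regime, and one must verify that its slightly different effective dimension still dominates $\DIM^{\mathrm{exp}}/K$ with absolute constants, so that the resulting inflation factor $C(1+\log(N)R(N))$ is genuinely instance-independent rather than scaling with $a$.
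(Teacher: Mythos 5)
Your high-level strategy coincides with the paper's: apply the upper bound of Theorem~\ref{thm:tail-decay-upper-bound} to $\wB_N^{\mathrm{exp}}$, the lower bound of Theorem~\ref{thm:tail_decay_poly_full_ver} to $\wB_N^{\mathrm{poly}}$, peel off the pure-decay bias pieces (using $K\asymp N/\log N\gtrsim N^{1-a}\vee\log N$ so that $(\IB-\gamma_0\HB)^{s+K}\preceq(\IB-\gamma_0\HB)^{s+2N^{1-a}/(1-a)}$), absorb the residual bias into a $\sigma^2\log(N)R(N)$ multiple of the variance lower bound, and match the two effective dimensions band by band. That is exactly what the appendix does.

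However, the band-splitting step as you describe it has a concrete error: you assert $k^*_{\mathrm{exp}}\le k^*_{\mathrm{poly}}$, but for any fixed $a\in(0,1)$ and large $N$ the opposite holds. Indeed $k^*_{\mathrm{exp}}$ has threshold $1/(\gamma_0 K)\asymp\log(N)/N$ while $k^*_{\mathrm{poly}}$ has threshold $(1-a)/(2\gamma_0(N-s)^{1-a})\asymp(1-a)/(\gamma_0 N^{1-a})$, and $(1-a)/N^{1-a}\gg\log(N)/N$ whenever $N^{a}\gg\log N$, i.e.\ for any constant $a>0$. So $k^*_{\mathrm{poly}}\le k^*_{\mathrm{exp}}$ (the paper denotes these $k_1^*\le k_2^*$), and the interval $(k^*_{\mathrm{exp}},k^*_{\mathrm{poly}}]$ you propose to use as a ``residue'' is empty. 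The transition band you actually need is $(k^*_{\mathrm{poly}},k^*_{\mathrm{exp}}]$: there the exponential contributes $1/K$ per coordinate while the polynomial contributes $\gamma_0\lambda_i$, and $\gamma_0\lambda_i\ge 1/K$ by the definition of $k^*_{\mathrm{exp}}$, which closes the comparison. Relatedly, the chain $k^*_{\mathrm{exp}}/K\lesssim \log(N)\gamma_0\sum_{i\le k^*_{\mathrm{exp}}}\lambda_i/N$ is the wrong direction: from $\gamma_0\lambda_i\ge 1/K$ you get $k^*_{\mathrm{exp}}/K\le\gamma_0\sum_{i\le k^*_{\mathrm{exp}}}\lambda_i$ with no extra $\log(N)/N$ factor (the factor $\log(N)/N$ is already what $1/K$ is, not something you multiply in afterward). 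With the ordering corrected and the per-coordinate comparisons redone as above (low band: $1/K\asymp\log(N)/N$ vs $a\log(N)/N$; transition band: $1/K$ vs $\gamma_0\lambda_i$; middle band: $\gamma_0\lambda_i$ vs $\gamma_0\lambda_i$; tail band: $\gamma_0^2(s+K)\lambda_i^2$ vs $\gamma_0^2 s\lambda_i^2$ with $s=N/2$) the argument goes through as in the paper, and the $a=1$ case is handled by the separate branch of Theorem~\ref{thm:tail_decay_poly_full_ver}, as you anticipated.
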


The choice of $s=N/2$ in Theorem \ref{thm:comparison} is for ensuring that the two SGD variants have the same optimization trajectory length, i.e., $\sum_{i=1}^N \gamma_i = \Theta(N\gamma_0)$. This rules out the trivial optimization difference in the bias error between the two SGD variants, so Theorem \ref{thm:comparison} reveals only the statistical difference between the two stepsize schemes.

Let us assume $\log (N)\cdot R(N)  \le 1$ for now. Then Theorem \ref{thm:comparison} reads that, \emph{for every problem instance}, with the same initial stepsize, the excess risk of SGD with tail geometrically decaying stepsize is \emph{no worse than} that of SGD with tail polynomially decaying stepsize, \emph{upto constant factors}. 
This suggests that for the last iterate of SGD, a tail geometrically decaying stepsize is \emph{always} as good as a tail polynomially decaying stepsize in terms of generalization.



We now discuss the quantity $\log(N)\cdot R(N)$ in Theorem \ref{thm:comparison}.
First of all, this quantity is rooted from the $\log (s+K)$ factor in the bias error upper bound in Theorem~\ref{thm:tail-decay-upper-bound}.
Therefore, the $\log(N)\cdot R(N)$ factor in Theorem~\ref{thm:comparison} might be an artifact that  can be removed given a tighter bias analysis (we conjecture that Theorem~\ref{thm:tail-decay-upper-bound} is not tight with the $\log (s+K)$ factor).
Moreover, we argue that $\log(N)\cdot R(N)$ itself is small in many scenarios so that the comparison in Theorem \ref{thm:comparison} is still meaningful. 
To see this, note that
\begin{align*}
R(N) \le {\|\wB-\wB^*\|_2^2} / (\gamma_0 N \sigma^2)
\end{align*}
by the definition of $k^\dagger$.
Thus, we have $\log(N)\cdot R(N) = \bigO{1}$ so long as $\|\wB_0-\wB^*\|_2^2 = \bigO{ \sigma^2 \gamma_0 N/\log(N))}$.

Figure \ref{fig:excess-risk} provides further empirical verification to our comparison of the two stepsize schemes for the last iterate of SGD. 
We see from Figure \ref{fig:excess-risk} that the last iterate of SGD generalizes significantly better with tail geometrically decaying stepsize than with tail polynomially decaying stepsize. 

\begin{figure*}[ht!]
     \centering
      \subfigure[\small{$\lambda_i=i^{-1}, \wB^{*}{[i]}=1$}]{\includegraphics[width=0.3\textwidth]{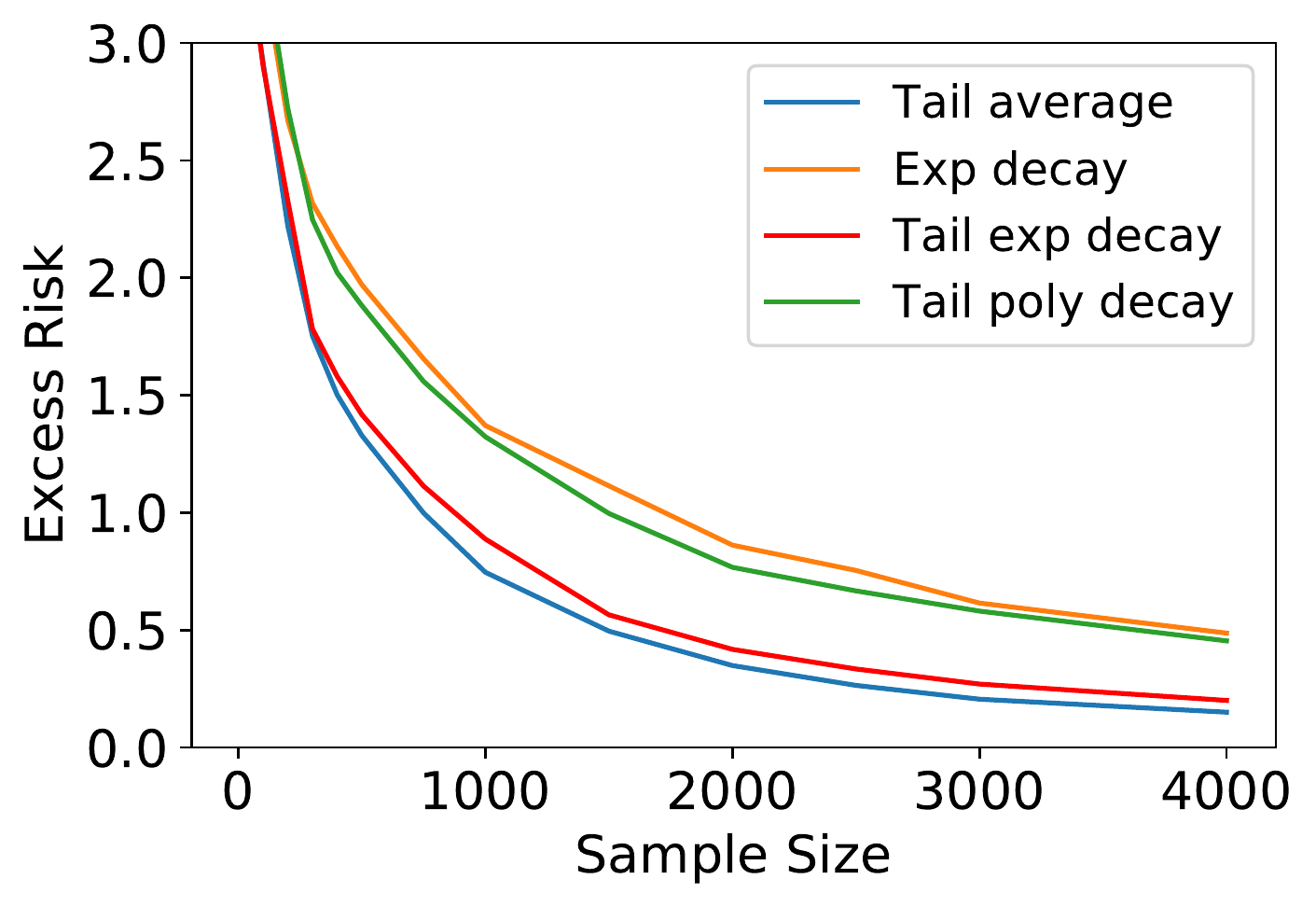}}
      \subfigure[\small{$\lambda_i=i^{-1}, \wB^{*}{[i]}=i^{-1}$}]{\includegraphics[width=0.3\textwidth]{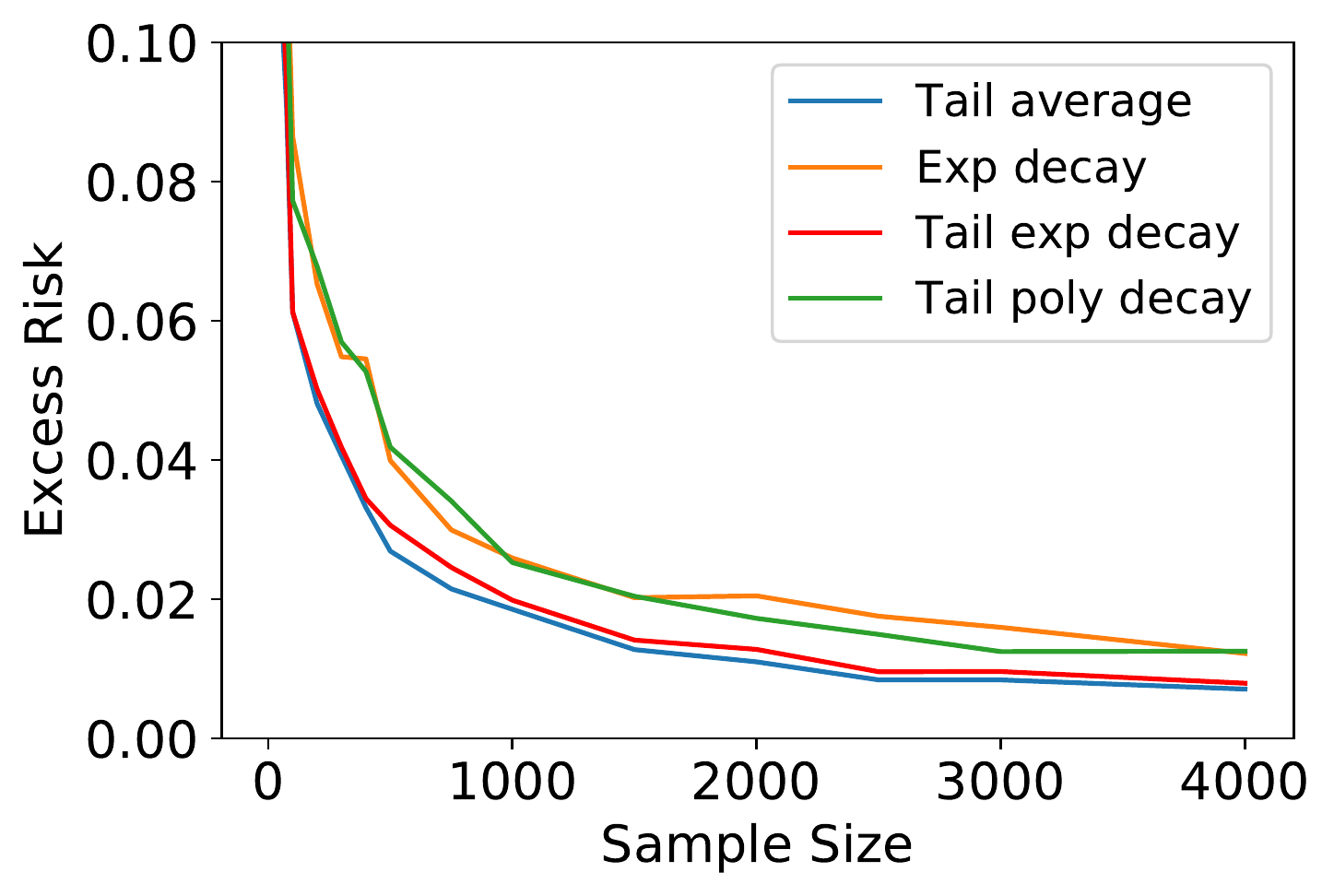}}
      \subfigure[\small{$\lambda_i=i^{-1}, \wB^{*}{[i]}=i^{-2}$}]{\includegraphics[width=0.3\textwidth]{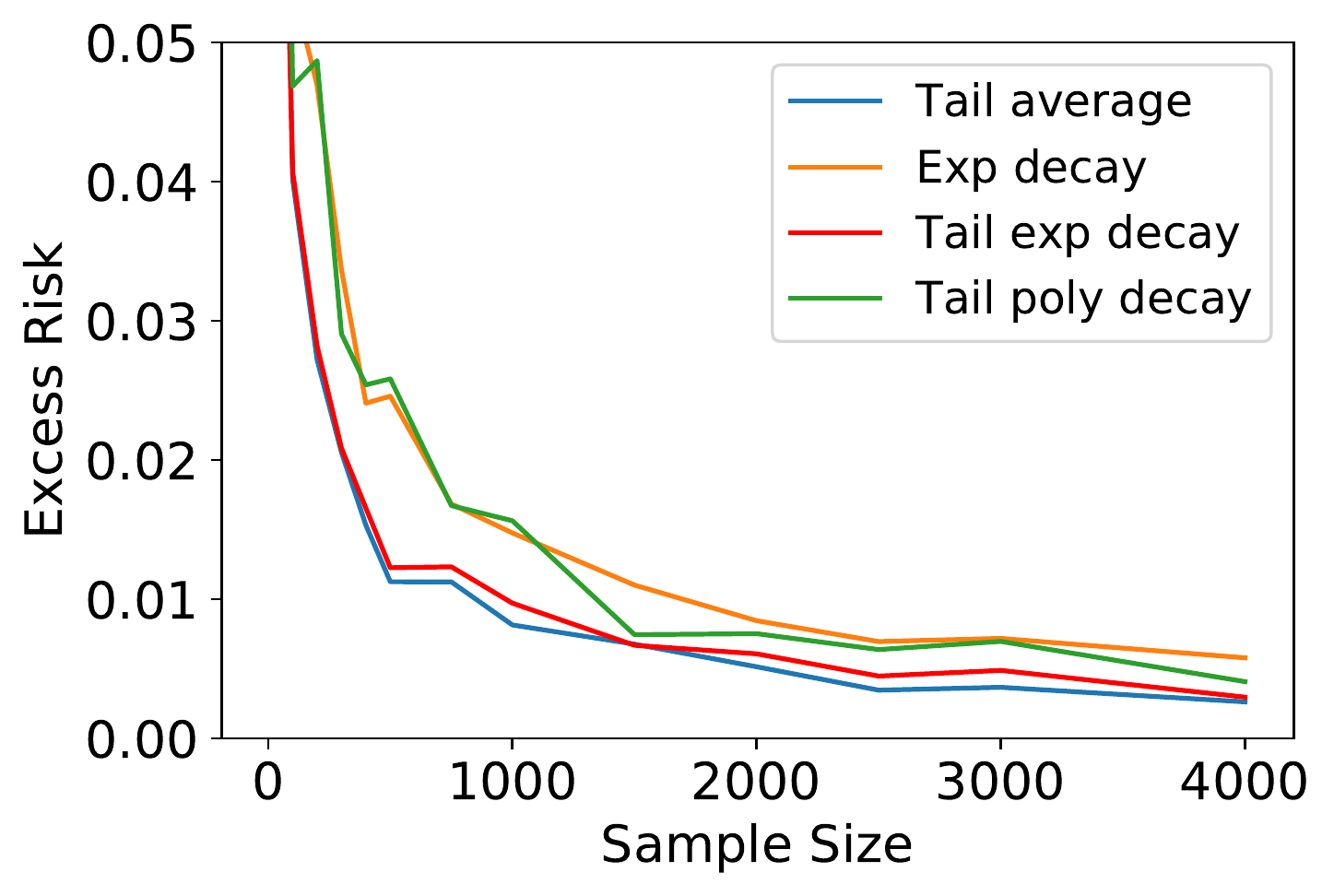}}\\
      \vspace{-0.1cm}
      \subfigure[\small{$\lambda_i=i^{-2}, \wB^{*}{[i]}=1$}]{\includegraphics[width=0.3\textwidth]{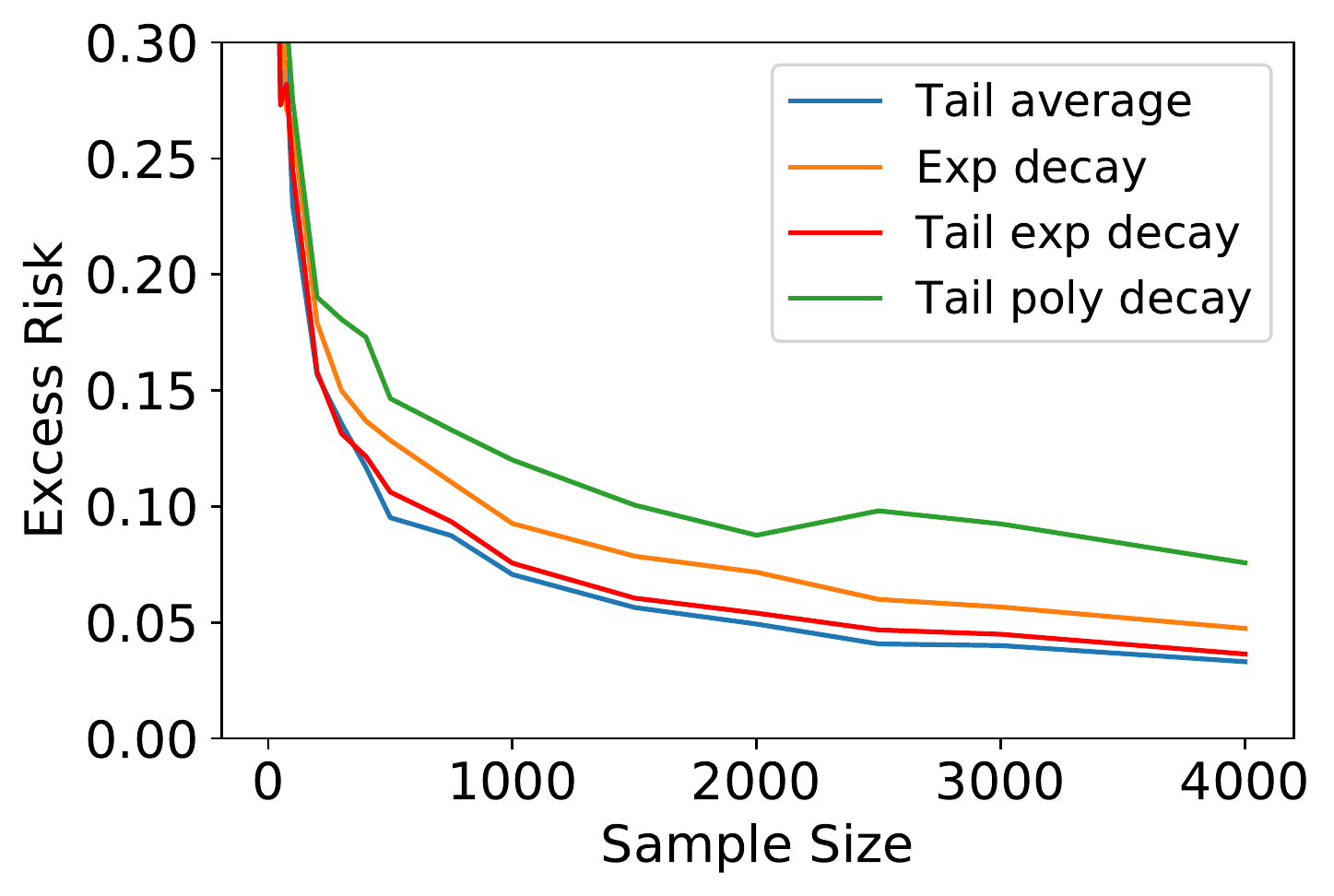}}
      \subfigure[\small{$\lambda_i=i^{-2}, \wB^{*}{[i]}=i^{-1}$}]{\includegraphics[width=0.3\textwidth]{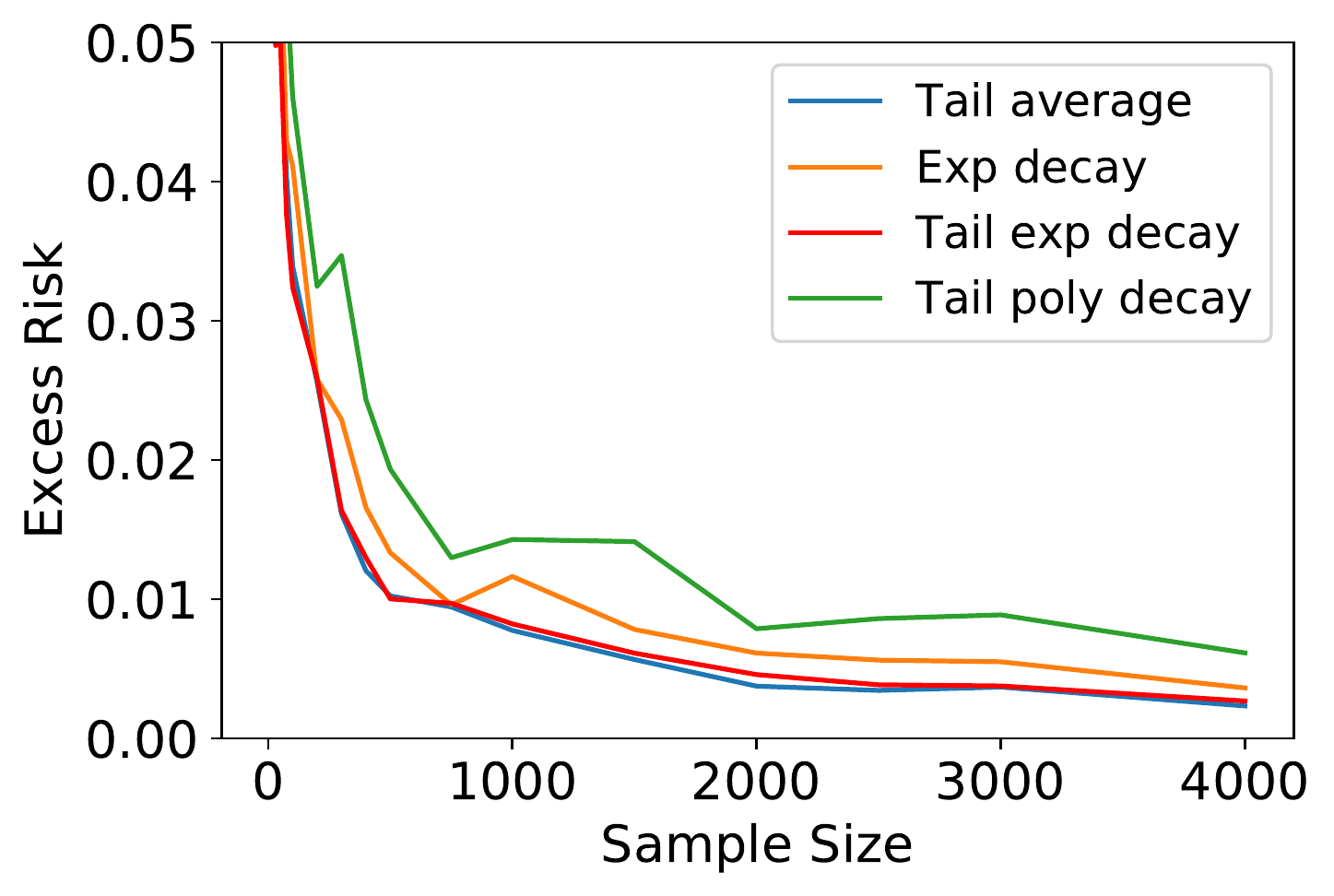}}
      \subfigure[\small{$\lambda_i=i^{-2}, \wB^{*}{[i]}=i^{-2}$}]{\includegraphics[width=0.3\textwidth]{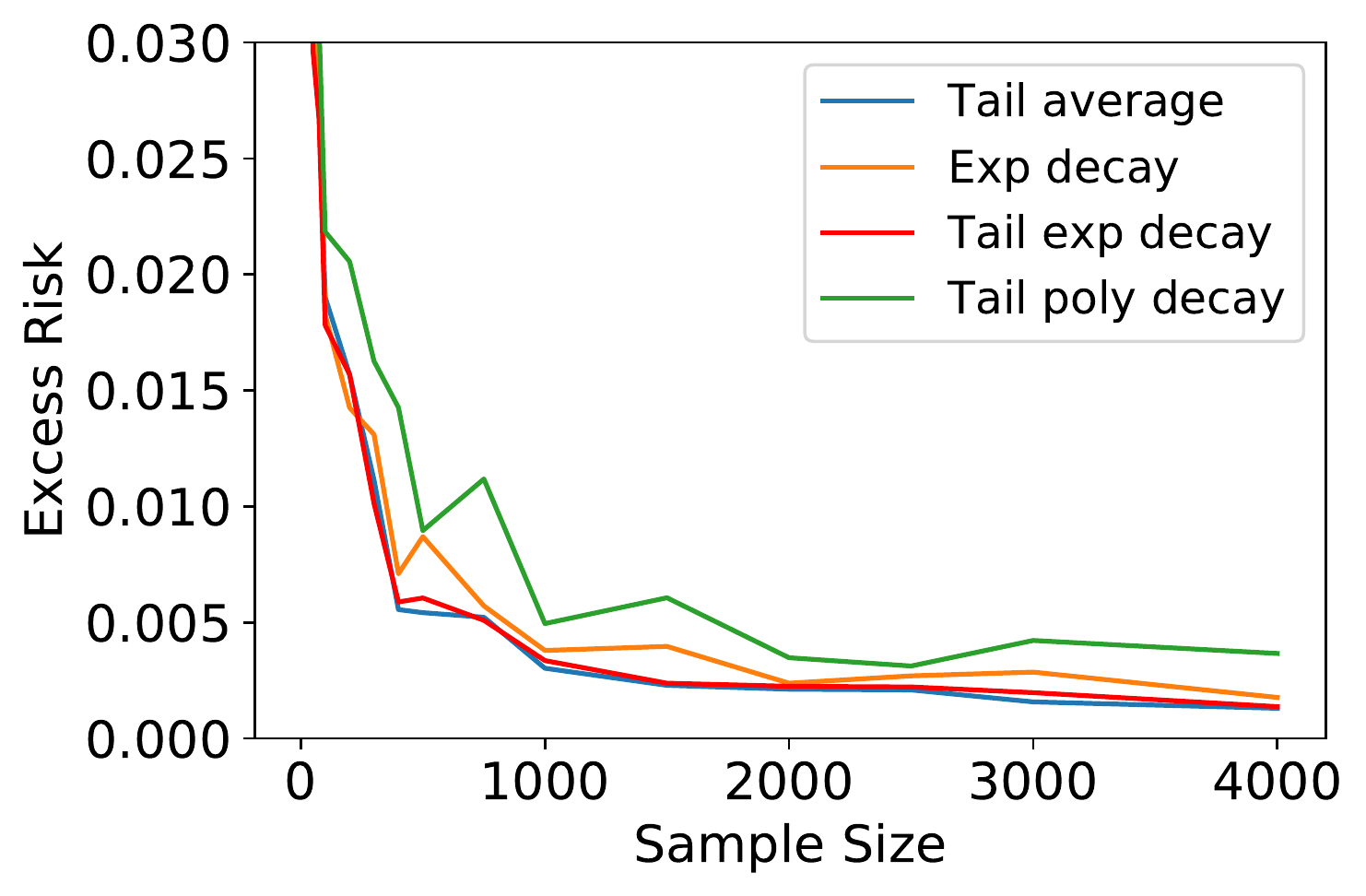}}
      \vspace{-0.3cm}
    \caption{\small
    Excess risk comparison between SGD variants. The problem dimension is $d=256$ and the linear regression model is well-specified with noise variance $\sigma^2=1$.
    \textsc{Tail average}: constant-stepsize SGD with tail averaging ($s = N/2$);
    \textsc{Exp decay}: SGD with geometrically decaying stepsize ($s=0$ and $K = \lceil N / \log (N) \rceil$);
    \textsc{Tail exp decay}: SGD with tail geometrically decaying stepsize ($s=N/2$ and $K = \lceil N / (2\log (N/2) )\rceil$);
    \textsc{Tail poly decay}: SGD with tail polynomially decaying stepsize ($s=N/2$ and $a=1$).
    We consider $6$ combinations of $2$ different covariance matrices and $3$ different true model parameters.
    For each algorithm and each sample size, we do a grid search and report the best excess risk achieved by \( \gamma_0 \in \{ 10^{-4}, 2\times 10^{-4}, 5\times 10^{-4}, 7\times 10^{-4}, 10^{-3}, 2\times 10^{-3}, 5 \times 10^{-3}, 0.01, 0.02, 0.03, 0.05, 0.075, 0.1, 0.2, 0.3, 0.5, 0.8, 1.0 \} \).
    The plots are averaged over $20$ independent runs.
    }
    \vspace{-0.3cm}
    \label{fig:excess-risk}
\end{figure*}

\section{Overview of the Proof Techniques}\label{sec:proof-sketch}
We now sketch the proof of Theorem \ref{thm:tail-decay-upper-bound} and highlight the key proof techniques. A complete proof is deferred to Appendix \ref{append-sec:upper-bound}.
For simplicity, let us denote $L := \log (N-s)$ and $K := (N-s) / L$, and assume they are integers.
\noindent\textbf{Bias-Variance Decomposition.}
We follow the well-known operator viewpoint for analyzing SGD iterates \citep{bach2013non,dieuleveut2017harder,jain2017markov,jain2017parallelizing,neu2018iterate,ge2019step,zou2021benign}.
In particular, the excess risk can be decomposed into \emph{bias error} and \emph{variance error} (see Lemma \ref{lemma:bias-var-decomp} in the appendix):
\begin{equation*}
    \Ebb [ L(\wB_N) - L(\wB^*) ] \le \langle \HB, \BB_N \rangle  + 
\langle \HB, \CB_N \rangle,
\end{equation*}
where $\BB_N$ and $\CB_N$ refer to the last \emph{bias iterate} and the last \emph{variance iterate} in the matrix space, respectively.
More precisely, they are recursively defined by\footnote{One can think of the bias iterates as SGD iterates on the data without additive label noise, and the variance iterates as SGD iterates with initialization $\wB^*$.}
\begin{align}
& \begin{cases}
\BB_t = (\Ical - \gamma_t \Tcal_t)\circ \BB_{t-1},\quad t\ge 1; \\
\BB_0 = (\wB_0 - \wB^*)(\wB_0 - \wB^*)^\top,
\end{cases} \label{eq:B-defi} \\
& \begin{cases}
\CB_t = (\Ical - \gamma_t \Tcal_t)\circ \CB_{t-1} + \gamma_t^2 \SigmaB, \quad t\ge 1; \\
\CB_0 = 0.
\end{cases}\label{eq:C-defi}
\end{align}
Here $\Ical := \IB \otimes \IB$, $\Mcal := \Ebb[\xB \otimes \xB \otimes \xB \otimes \xB]$ and $\Tcal_t := \HB \otimes \IB + \IB \otimes \HB - \gamma_t \Mcal$ are operators on symmetric matrices (see Appendix \ref{append:sec:preliminary} for their precise definitions). One can verify that for symmetric matrix $\AB$,
\[(\Ical - \gamma_t \Tcal_t) \circ \AB = \Ebb [ (\IB - \gamma_t \xB \xB^\top)\AB (\IB - \gamma_t \xB \xB^\top)]. \]
We next bound $\langle \HB, \BB_N \rangle$ and $
\langle \HB, \CB_N \rangle$ separately.

\subsection{Bias Upper Bound}
We first bound the bias error.
Recall that the stepsize scheme \eqref{eq:geometry-tail-decay-lr} splits the total $N$ iterations into $L = \log (N- s)$ fixed-stepsize phases: in the first phase, SGD is initialized from $\BB_0$, and runs with constant stepsize $\gamma_0$ for $s+ K$ steps; and in the $\ell$-th phase for $2 \le \ell \le L$, SGD is initialized from $\BB_{s+K(\ell-1)}$, and runs with stepsize $\gamma_0 / 2^{\ell-1}$ for $K$ steps.

\noindent\textbf{Main Challenges and Proof Techniques.}
The key difficulty here is to obtain a sharp characterization of \emph{each bias iterate} (i.e., $\BB_t$), instead of \emph{their summation} $\sum_{t=1}^N\BB_t$.
Therefore, existing techniques for SGD with averaging \citep{jain2017parallelizing,zou2021benign} are not sufficient.
In particular, \citet{zou2021benign} only gave a constant upper bound on the bias iterate (see Eq. (D.3) in their Lemma D.4, which is already sufficient for their purpose).
To obtain a tight and vanishing bound on each bias iterate, we need to carefully utilize the $(\Ical - \gamma \widetilde{T})^i$ decaying factor in the bias expansion (see \eqref{eq:B-single-phase-expanded}). 
Our proof is motivated by this idea and handle the decaying factor with an inequality $(1-\gamma x)^t\le 1/(\gamma x)$ (see \eqref{eq:HB-single-phase-expanded}). Based on this we get a (relatively loose) vanishing bound on each $\BB_t$. 
We further sharpen this upper bound with a multi-phase strategy: (1) splitting the entire bias iterates into multiple phases; (2) deriving an upper bound for each phase; and (3) carefully combining them to get the final result.
Details are explained  below.


\noindent\textbf{One Phase Analysis.}
We first investigate the decreasing effect of the bias error within one phase. 
For simplicity, with a slight abuse of notation, we use $\gamma$, $n$ and $\BB_t$ to denote the constant stepsize, the number of steps and the $t$-th bias iterate ($0\le t \le n$) within one phase, respectively. 
Assume that $\gamma < 1/(3\alpha\tr(\HB)\log n)$.
Clearly $\HB \otimes \HB \succeq 0$, therefore 
\begin{align}
\widetilde{\Tcal} & := \HB \otimes \IB + \IB \otimes \HB - \gamma \HB \otimes \HB  \notag \\
& = \Tcal + \gamma \Mcal - \gamma \HB \otimes \HB \le  \Tcal + \gamma \Mcal. \label{eq:tildeT-smaller-than-T}
\end{align} 
Plug \eqref{eq:tildeT-smaller-than-T} into \eqref{eq:B-defi}, and apply Assumption \ref{assump:fourth-moment}\ref{item:fourth-moement-upper}, we obtain:
\begin{align}
    \BB_t 
    \preceq (\Ical - \gamma \widetilde{\Tcal})\circ \BB_{t-1} + \alpha \gamma^2 \abracket{\HB, \BB_{t-1}} \HB,\quad t\ge 1. \notag 
\end{align}
Solving this recursion yields
\begin{equation}
    \BB_t 
    \preceq (\Ical - \gamma \widetilde{\Tcal})^t \circ \BB_{0} + \alpha \gamma^2\sum_{i=0}^{t-1} (\Ical - \gamma \widetilde{\Tcal})^{t-1-i} \circ \HB \cdot \abracket{\HB, \BB_i}. \label{eq:B-single-phase-expanded}
\end{equation}
In \eqref{eq:B-single-phase-expanded}, we apply
\[
(\Ical - \gamma \widetilde{\Tcal})^{t-1-i} \circ \HB
= (\IB - \gamma \HB)^{2(t-1-i)} \HB \preceq \frac{ \IB}{\gamma (t-i)}
\]
and take the inner product with $\HB$, so we have
\begin{equation}\label{eq:HB-single-phase-expanded}
    \abracket{\HB, \BB_t} \le \big\langle{(\Ical - \gamma \widetilde{\Tcal})^t \circ\HB, \BB_0}\big\rangle + \alpha\gamma \tr(\HB) \underbrace{\sum_{i=0}^{t-1} \frac{\abracket{\HB, \BB_i} }{t-i}}_{(\diamond)}.
\end{equation}
By recursively calling \eqref{eq:HB-single-phase-expanded}, one can observe that 
term $(\diamond)$
is \emph{self-governed} (this trick first appears in \citet{varre2021last} to our knowledge), which leads to the following upper bound (see Lemma \ref{lemma:HB-upper-bound} in the appendix):
\[
(\diamond)
\lesssim \Big\langle \sum_{i=0}^{t-1} \frac{ (\Ical - \gamma \widetilde{\Tcal})^i\circ \HB }{t-i},\ \BB_0 \Big\rangle.
\]
Substituting the above bound into \eqref{eq:HB-single-phase-expanded} leads to
\begin{align}
     \abracket{\HB, \BB_t} 
     & \lesssim \Big\langle (\Ical - \gamma \widetilde{\Tcal})^t \circ \HB + \sum_{i=0}^{t-1} \frac{ (\Ical - \gamma \widetilde{\Tcal})^i\circ \HB }{t-i}, \BB_0 \Big\rangle \notag \\
     & \lesssim  \Big\langle{\frac{1}{\gamma t}\IB_{0:k^*} + \HB_{k^*:\infty},\  \BB_0  }\Big\rangle,\label{eq:HB-single-phase-crude-bound}
\end{align}
where the second inequality holds by bounding the summation $\sum_{0\le i < t} (\cdot)$ separately for $\sum_{0 \le i < t/2} (\cdot)$ and $\sum_{t/2 \le i < t} (\cdot)$ (see Lemma \ref{lemma:HB-upper-bound} in the appendix for more details).
Here $k^*$ can be arbitrary.
From \eqref{eq:HB-single-phase-crude-bound}, we can see a decreasing effect of the bias error within one phase.


\noindent\textbf{Combining Multiple Phases.}
Next we discuss how to combine the decreasing effect of multiple phases. In this part, we use $\BB^{(\ell)}$ to denote the bias iterate output by the $\ell$-th phase (a.k.a., the input of the $(\ell+1)$-th phase).

In the first phase, a bound on $\BB^{(1)}$ is obtained by setting $k^* = k^\dagger$ and $\gamma = \gamma_0$ in \eqref{eq:HB-single-phase-crude-bound}, and substituting \eqref{eq:HB-single-phase-crude-bound} into \eqref{eq:B-single-phase-expanded} with $t = s+K$ (see Lemma \ref{lemma:B-ell-recursion} in the appendix):
\begin{equation}\label{eq:B-1-bound}
\begin{aligned}
    & \BB^{(1)}
     \lesssim (\Ical - \gamma \widetilde{\Tcal})^{s+K} \circ \BB_{0} +\gamma_0^2 (s+K) \log (s+K)  \cdot \\
    &   \Big\langle{\frac{\IB_{0:k^\dagger}}{\gamma_0 (s+K)} + \HB_{k^\dagger:\infty},  \BB_0  }\Big\rangle \cdot \Big( \frac{\IB_{0:k^\dagger}}{\gamma_0 (s+K)} + \HB_{k^\dagger:\infty} \Big).
\end{aligned}
\end{equation}
In the second phase, setting $ \gamma = \gamma_0 / 2 $ and $ t = K$ in \eqref{eq:HB-single-phase-crude-bound}, we obtain 
\begin{equation}\label{eq:HB-2-bound}
    \langle\HB, \BB^{(2)} \rangle \lesssim \Big\langle{\frac{1}{\gamma t}\IB_{0:k^*} + \HB_{k^*:\infty},\  \BB^{(1)}  }\Big\rangle.
\end{equation}
Plugging \eqref{eq:B-1-bound} into \eqref{eq:HB-2-bound} shows that $\BB^{(2)}$ already achieves the desired bias bound in Theorem \ref{thm:tail-decay-upper-bound}.
The remaining effort is to combine the effect from the third to the $L$-th phase, 
which leads to (see Lemma \ref{lemma:HB-ell-reduction} in the appendix): 
\begin{align*}
    \langle \HB, \BB^{(L)} \rangle \le e \cdot \langle \HB, \BB^{(2)} \rangle.
\end{align*}
This completes the proof for the bias error. 

\subsection{Variance Upper Bound}
\noindent\textbf{Main Challenges and Proof Techniques.}
Note that we are considering the variance error of the last SGD iterate, thus we cannot utilize the effect of iterate averaging to decrease the variance error \citep{bach2013non,jain2017markov,jain2017parallelizing,zou2021benign}.
Instead, to achieve a vanishing variance bound on the last iterate, we need to consider the effect of stepsize decaying. 
More details are provided below.


We first observe a uniform but crude upper bound on the variance iterates (see Lemma \ref{lemma:C-crude-bound} in the appendix, and also Lemma 5 in \citet{ge2019step}):
\begin{equation}\label{eq:C-crude-bound}
    \CB_t \preceq \frac{\gamma_0 \sigma^2}{1 - \alpha \gamma_0 \tr(\HB)} \IB, \quad t= 1,2,\dots, N.
\end{equation}
Then we will plug this crude bound on $\CB_t$ into \eqref{eq:C-defi} to further improve the upper bound of $\CB_t$ (see Theorem \ref{thm:HC-decay-lr-upper-bound} and its proof in the appendix):
\[
    \CB_N \preceq \frac{\sigma^2}{1-\gamma_0 R^2} \underbrace{\sum_{t=1}^N \gamma_t^2 \prod_{i=t+1}^N  (\IB-\gamma_i \HB)^2 \HB}_{(*)}.
\]
The remaining effort is to control term $(*)$.
Intuitively, though $(*)$ is a summation of $N$ terms, $(*)$ could vanish as $N$ increases thanks to the appropriate decaying stepsize scheme \eqref{eq:geometry-tail-decay-lr}:
for large $t$, the $t$-th term in the summation is small as $\gamma_t$ is small;
as for small $t$ where $\gamma_t$ is large, the $t$-th term in the summation is also small since the product $\prod_{i=t+1}^N  (\IB-\gamma_i \HB)^2 \HB$ is small (note the subsequent $\gamma_i$'s are at least $\gamma_t / 2$ according to \eqref{eq:geometry-tail-decay-lr}).
More precisely, our analysis (see Lemmas \ref{lemma:scalar-func-upper-bound} and \ref{lemma:scalar-func-lower-bound} in the appendix) shows that, ignoring constant factors,
\[(*) \eqsim  \frac{1}{K} \HB^\inv_{0:k^*} + \gamma_0 \IB_{k^*:k^\dagger} + \gamma_0^2 (s+K) \HB_{k^\dagger:\infty} \]
for the optimally chosen $k^*$ and $k^\dagger$ in Theorems \ref{thm:tail-decay-upper-bound} and \ref{thm:tail-decay-lower-bound}.
In this way, we can establish a tight upper bound on $\CB_N$.
Finally, taking inner product with $\HB$ yields the variance upper bound (see Theorem \ref{thm:HC-decay-lr-upper-bound} in the appendix).

\section{Concluding Remarks}
In this work, we provide a problem dependent excess risk bound for the last iterate of SGD with decaying stepsize for linear regression. The derived bound is dimension-free and can be applied to the overparamerized setting where the problem dimension excesses the sample size.
A nearly-matching, problem-dependent lower bound is also proved. 
We further compare the excess risk bounds of last iterate SGD with tail geometric-decaying stepsize and that with tail polynomial-decaying stepsize, and show that the former outperforms the latter, instance-wisely.  
We believe the developed theoretical framework can also be used to find better stepsize schemes, or even the optimal one, which is left as a future work.

\section*{Acknowledgements}
We would like to thank the anonymous reviewers and area chairs for their helpful comments. 
JW and VB are supported by the Defense Advanced Research Projects Agency (DARPA) under Contract No. HR00112190130. DZ acknowledges the support from Bloomberg Data Science Ph.D. Fellowship. QG is partially supported by the National Science Foundation award IIS-1906169 and IIS-2008981. 
SK acknowledges funding from the Office of Naval Research under award N00014-22-1-2377 and the National Science Foundation Grant under award CCF-1703574.
The views and conclusions contained in this paper are those of the authors and should not be interpreted as representing any funding agencies.

\bibliographystyle{icml2022}
\bibliography{ref}

\begin{thebibliography}{33}
\providecommand{\natexlab}[1]{#1}
\providecommand{\url}[1]{\texttt{#1}}
\expandafter\ifx\csname urlstyle\endcsname\relax
  \providecommand{\doi}[1]{doi: #1}\else
  \providecommand{\doi}{doi: \begingroup \urlstyle{rm}\Url}\fi

\bibitem[Aybat et~al.(2019)Aybat, Fallah, Gurbuzbalaban, and
  Ozdaglar]{aybat2019universally}
Aybat, N.~S., Fallah, A., Gurbuzbalaban, M., and Ozdaglar, A.
\newblock A universally optimal multistage accelerated stochastic gradient
  method.
\newblock \emph{Advances in neural information processing systems},
  32:\penalty0 8525--8536, 2019.

\bibitem[Bach \& Moulines(2013)Bach and Moulines]{bach2013non}
Bach, F. and Moulines, E.
\newblock Non-strongly-convex smooth stochastic approximation with convergence
  rate $o(1/n)$.
\newblock \emph{Advances in neural information processing systems},
  26:\penalty0 773--781, 2013.

\bibitem[Bartlett et~al.(2020)Bartlett, Long, Lugosi, and
  Tsigler]{bartlett2020benign}
Bartlett, P.~L., Long, P.~M., Lugosi, G., and Tsigler, A.
\newblock Benign overfitting in linear regression.
\newblock \emph{Proceedings of the National Academy of Sciences}, 2020.

\bibitem[Belkin et~al.(2020)Belkin, Hsu, and Xu]{belkin2020two}
Belkin, M., Hsu, D., and Xu, J.
\newblock Two models of double descent for weak features.
\newblock \emph{SIAM Journal on Mathematics of Data Science}, 2\penalty0
  (4):\penalty0 1167--1180, 2020.

\bibitem[Berthier et~al.(2020)Berthier, Bach, and Gaillard]{berthier2020tight}
Berthier, R., Bach, F., and Gaillard, P.
\newblock Tight nonparametric convergence rates for stochastic gradient descent
  under the noiseless linear model.
\newblock \emph{arXiv preprint arXiv:2006.08212}, 2020.

\bibitem[Bubeck(2014)]{bubeck2014theory}
Bubeck, S.
\newblock Theory of convex optimization for machine learning.
\newblock \emph{arXiv preprint arXiv:1405.4980}, 15, 2014.

\bibitem[Davis et~al.(2019)Davis, Drusvyatskiy, and
  Charisopoulos]{davis2019stochastic}
Davis, D., Drusvyatskiy, D., and Charisopoulos, V.
\newblock Stochastic algorithms with geometric step decay converge linearly on
  sharp functions.
\newblock \emph{arXiv preprint arXiv:1907.09547}, 2019.

\bibitem[Dekel et~al.(2012)Dekel, Gilad-Bachrach, Shamir, and
  Xiao]{dekel2012optimal}
Dekel, O., Gilad-Bachrach, R., Shamir, O., and Xiao, L.
\newblock Optimal distributed online prediction using mini-batches.
\newblock \emph{Journal of Machine Learning Research}, 13\penalty0 (1), 2012.

\bibitem[Dhillon et~al.(2013)Dhillon, Foster, Kakade, and
  Ungar]{dhillon2013risk}
Dhillon, P.~S., Foster, D.~P., Kakade, S.~M., and Ungar, L.~H.
\newblock A risk comparison of ordinary least squares vs ridge regression.
\newblock \emph{The Journal of Machine Learning Research}, 14\penalty0
  (1):\penalty0 1505--1511, 2013.

\bibitem[Dieuleveut \& Bach(2015)Dieuleveut and Bach]{DieuleveutB15}
Dieuleveut, A. and Bach, F.~R.
\newblock Non-parametric stochastic approximation with large step sizes.
\newblock \emph{The Annals of Statistics}, 2015.

\bibitem[Dieuleveut et~al.(2017)Dieuleveut, Flammarion, and
  Bach]{dieuleveut2017harder}
Dieuleveut, A., Flammarion, N., and Bach, F.
\newblock Harder, better, faster, stronger convergence rates for least-squares
  regression.
\newblock \emph{The Journal of Machine Learning Research}, 18\penalty0
  (1):\penalty0 3520--3570, 2017.

\bibitem[Fang et~al.(2018)Fang, Kotz, and Ng]{fang2018symmetric}
Fang, K.-T., Kotz, S., and Ng, K.~W.
\newblock \emph{Symmetric multivariate and related distributions}.
\newblock Chapman and Hall/CRC, 2018.

\bibitem[Ge et~al.(2019)Ge, Kakade, Kidambi, and Netrapalli]{ge2019step}
Ge, R., Kakade, S.~M., Kidambi, R., and Netrapalli, P.
\newblock The step decay schedule: A near optimal, geometrically decaying
  learning rate procedure for least squares.
\newblock \emph{arXiv preprint arXiv:1904.12838}, 2019.

\bibitem[Ghadimi \& Lan(2012)Ghadimi and Lan]{ghadimi2012optimal}
Ghadimi, S. and Lan, G.
\newblock Optimal stochastic approximation algorithms for strongly convex
  stochastic composite optimization i: A generic algorithmic framework.
\newblock \emph{SIAM Journal on Optimization}, 22\penalty0 (4):\penalty0
  1469--1492, 2012.

\bibitem[Hazan \& Kale(2014)Hazan and Kale]{hazan2014beyond}
Hazan, E. and Kale, S.
\newblock Beyond the regret minimization barrier: optimal algorithms for
  stochastic strongly-convex optimization.
\newblock \emph{The Journal of Machine Learning Research}, 15\penalty0
  (1):\penalty0 2489--2512, 2014.

\bibitem[He et~al.(2015)He, Zhang, Ren, and Sun]{he2015deep}
He, K., Zhang, X., Ren, S., and Sun, J.
\newblock Deep residual learning for image recognition. corr abs/1512.03385
  (2015), 2015.

\bibitem[Hsu et~al.(2014)Hsu, Kakade, and Zhang]{HsuKZ14}
Hsu, D.~J., Kakade, S.~M., and Zhang, T.
\newblock Random design analysis of ridge regression.
\newblock \emph{Foundations of Computational Mathematics}, 14\penalty0
  (3):\penalty0 569--600, 2014.

\bibitem[Jain et~al.(2017{\natexlab{a}})Jain, Kakade, Kidambi, Netrapalli,
  Pillutla, and Sidford]{jain2017markov}
Jain, P., Kakade, S.~M., Kidambi, R., Netrapalli, P., Pillutla, V.~K., and
  Sidford, A.
\newblock A markov chain theory approach to characterizing the minimax
  optimality of stochastic gradient descent (for least squares).
\newblock \emph{arXiv preprint arXiv:1710.09430}, 2017{\natexlab{a}}.

\bibitem[Jain et~al.(2017{\natexlab{b}})Jain, Netrapalli, Kakade, Kidambi, and
  Sidford]{jain2017parallelizing}
Jain, P., Netrapalli, P., Kakade, S.~M., Kidambi, R., and Sidford, A.
\newblock Parallelizing stochastic gradient descent for least squares
  regression: mini-batching, averaging, and model misspecification.
\newblock \emph{The Journal of Machine Learning Research}, 18\penalty0
  (1):\penalty0 8258--8299, 2017{\natexlab{b}}.

\bibitem[Kulunchakov \& Mairal(2019)Kulunchakov and
  Mairal]{kulunchakov2019generic}
Kulunchakov, A. and Mairal, J.
\newblock A generic acceleration framework for stochastic composite
  optimization.
\newblock \emph{arXiv preprint arXiv:1906.01164}, 2019.

\bibitem[Lacoste-Julien et~al.(2012)Lacoste-Julien, Schmidt, and
  Bach]{lacoste2012simpler}
Lacoste-Julien, S., Schmidt, M., and Bach, F.
\newblock A simpler approach to obtaining an o (1/t) convergence rate for the
  projected stochastic subgradient method.
\newblock \emph{arXiv preprint arXiv:1212.2002}, 2012.

\bibitem[Lin \& Rosasco(2017)Lin and Rosasco]{lin2017optimal}
Lin, J. and Rosasco, L.
\newblock Optimal rates for multi-pass stochastic gradient methods.
\newblock \emph{The Journal of Machine Learning Research}, 18\penalty0
  (1):\penalty0 3375--3421, 2017.

\bibitem[M{\"u}cke et~al.(2019)M{\"u}cke, Neu, and Rosasco]{mucke2019beating}
M{\"u}cke, N., Neu, G., and Rosasco, L.
\newblock Beating sgd saturation with tail-averaging and minibatching.
\newblock \emph{Advances in Neural Information Processing Systems}, 32, 2019.

\bibitem[Neu \& Rosasco(2018)Neu and Rosasco]{neu2018iterate}
Neu, G. and Rosasco, L.
\newblock Iterate averaging as regularization for stochastic gradient descent.
\newblock In \emph{Conference On Learning Theory}, pp.\  3222--3242. PMLR,
  2018.

\bibitem[Pan et~al.(2021)Pan, Ye, and Zhang]{pan2021eigencurve}
Pan, R., Ye, H., and Zhang, T.
\newblock Eigencurve: Optimal learning rate schedule for sgd on quadratic
  objectives with skewed hessian spectrums.
\newblock \emph{arXiv preprint arXiv:2110.14109}, 2021.

\bibitem[Polyak \& Juditsky(1992)Polyak and Juditsky]{polyak1992acceleration}
Polyak, B.~T. and Juditsky, A.~B.
\newblock Acceleration of stochastic approximation by averaging.
\newblock \emph{SIAM journal on control and optimization}, 30\penalty0
  (4):\penalty0 838--855, 1992.

\bibitem[Rakhlin et~al.(2011)Rakhlin, Shamir, and Sridharan]{rakhlin2011making}
Rakhlin, A., Shamir, O., and Sridharan, K.
\newblock Making gradient descent optimal for strongly convex stochastic
  optimization.
\newblock \emph{arXiv preprint arXiv:1109.5647}, 2011.

\bibitem[Sch{\"o}lkopf et~al.(2002)Sch{\"o}lkopf, Smola, Bach,
  et~al.]{scholkopf2002learning}
Sch{\"o}lkopf, B., Smola, A.~J., Bach, F., et~al.
\newblock \emph{Learning with kernels: support vector machines, regularization,
  optimization, and beyond}.
\newblock MIT press, 2002.

\bibitem[Tsigler \& Bartlett(2020)Tsigler and Bartlett]{tsigler2020benign}
Tsigler, A. and Bartlett, P.~L.
\newblock Benign overfitting in ridge regression.
\newblock \emph{arXiv preprint arXiv:2009.14286}, 2020.

\bibitem[Varre et~al.(2021)Varre, Pillaud-Vivien, and
  Flammarion]{varre2021last}
Varre, A., Pillaud-Vivien, L., and Flammarion, N.
\newblock Last iterate convergence of sgd for least-squares in the
  interpolation regime.
\newblock \emph{arXiv preprint arXiv:2102.03183}, 2021.

\bibitem[Zhang et~al.(2021)Zhang, Bengio, Hardt, Recht, and
  Vinyals]{zhang2021understanding}
Zhang, C., Bengio, S., Hardt, M., Recht, B., and Vinyals, O.
\newblock Understanding deep learning (still) requires rethinking
  generalization.
\newblock \emph{Communications of the ACM}, 64\penalty0 (3):\penalty0 107--115,
  2021.

\bibitem[Zou et~al.(2021{\natexlab{a}})Zou, Wu, Braverman, Gu, Foster, and
  Kakade]{zou2021benefits}
Zou, D., Wu, J., Braverman, V., Gu, Q., Foster, D.~P., and Kakade, S.~M.
\newblock The benefits of implicit regularization from sgd in least squares
  problems.
\newblock \emph{The 35th Conference on Neural Information Processing Systems},
  2021{\natexlab{a}}.

\bibitem[Zou et~al.(2021{\natexlab{b}})Zou, Wu, Braverman, Gu, and
  Kakade]{zou2021benign}
Zou, D., Wu, J., Braverman, V., Gu, Q., and Kakade, S.~M.
\newblock Benign overfitting of constant-stepsize sgd for linear regression.
\newblock \emph{The 34th Annual Conference on Learning Theory},
  2021{\natexlab{b}}.

\end{thebibliography}

\onecolumn
\appendix

\section{More Examples for Assumption \ref{assump:fourth-moment}}\label{append:sec:fourth-moment-assump-examples}

\begin{proposition}[Examples for Assumption \ref{assump:fourth-moment}\ref{item:fourth-moement-upper}]
Assumption \ref{assump:fourth-moment}\ref{item:fourth-moement-upper} holds for data distributions with a \emph{bounded kurtosis along every direction} \citep{dieuleveut2017harder}, i.e., there is a constant $\alpha > 0$ such that
\begin{equation*} 
    \text{for every } \vB,\ \Ebb [\abracket{\vB, \xB}^4] \le \alpha \abracket{\vB, \HB \vB}^2.
\end{equation*}
In particular, the above is satisfied when $\zB := \HB^{-\frac{1}{2}}\xB$ has sub-Gaussian or sub-exponential tail.
\end{proposition}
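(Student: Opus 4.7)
The plan is to split the proposition into two independent claims and dispatch each with a short argument: (a) bounded kurtosis along every direction implies Assumption \ref{assump:fourth-moment}\ref{item:fourth-moement-upper}, and (b) sub-Gaussian or sub-exponential tails of $\zB := \HB^{-1/2}\xB$ imply bounded kurtosis.

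For (a), I would fix an arbitrary PSD matrix $\AB$ and an arbitrary unit vector $\uB$, and write the eigendecomposition $\AB = \sum_j \mu_j \uB_j \uB_j^\top$ with $\mu_j \ge 0$. Since $\langle \xB, \AB \xB\rangle = \sum_j \mu_j \langle \uB_j, \xB\rangle^2$, one has
\[
\uB^\top \Ebb[\xB\xB^\top \AB \xB\xB^\top]\, \uB \;=\; \sum_j \mu_j\, \Ebb\bigl[\langle \uB,\xB\rangle^2\langle \uB_j,\xB\rangle^2\bigr].
\]
Cauchy--Schwarz in $L^2$ followed by the bounded kurtosis hypothesis gives
\[
\Ebb\bigl[\langle \uB,\xB\rangle^2\langle \uB_j,\xB\rangle^2\bigr] \;\le\; \sqrt{\Ebb[\langle\uB,\xB\rangle^4]\,\Ebb[\langle\uB_j,\xB\rangle^4]} \;\le\; \alpha\, \langle\uB,\HB\uB\rangle\,\langle\uB_j,\HB\uB_j\rangle.
\]
Summing over $j$ and recognizing $\sum_j \mu_j \langle \uB_j,\HB\uB_j\rangle = \tr(\HB\AB)$ yields $\uB^\top \Ebb[\xB\xB^\top \AB \xB\xB^\top]\uB \le \alpha\,\tr(\HB\AB)\cdot \uB^\top \HB \uB$. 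Since $\uB$ is arbitrary, this is exactly the required PSD inequality.

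For (b), I would reduce the condition to a one-dimensional moment bound. Note that $\Ebb[\zB\zB^\top] = \HB^{-1/2}\HB\HB^{-1/2} = \IB$, and for any $\vB$, $\langle\vB,\xB\rangle = \langle \HB^{1/2}\vB, \zB\rangle$. If $\zB$ is sub-Gaussian (respectively sub-exponential) with norm $K$, then for every deterministic $\wB$ the scalar $\langle\wB,\zB\rangle$ is sub-Gaussian (resp. sub-exponential) with norm bounded by $K\|\wB\|_2$; standard moment bounds then give $\Ebb[\langle\wB,\zB\rangle^4] \le C K^4 \|\wB\|_2^4$ for an absolute constant $C$. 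Setting $\wB = \HB^{1/2}\vB$ produces $\Ebb[\langle\vB,\xB\rangle^4] \le C K^4 \langle\vB,\HB\vB\rangle^2$, establishing bounded kurtosis with $\alpha = C K^4$.

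There is no real obstacle here: step (a) is just Cauchy--Schwarz plus linearity in the eigendecomposition of $\AB$, and step (b) is a textbook consequence of the definition of sub-Gaussian/sub-exponential norms together with the whitening $\zB = \HB^{-1/2}\xB$. The same reduction appears as Lemma A.1 in \citet{zou2021benign}, which I would cite for the one-dimensional moment bound rather than reprove.
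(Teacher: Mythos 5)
Your proof of part (a) matches the paper's argument exactly: quadratic form against an arbitrary $\uB$, eigendecomposition of $\AB$, Cauchy--Schwarz on the cross fourth moments, then the bounded-kurtosis hypothesis, and finally the identification $\sum_j \mu_j \langle \uB_j, \HB\uB_j\rangle = \tr(\HB\AB)$. The paper leaves part (b) (sub-Gaussian/sub-exponential $\Rightarrow$ bounded kurtosis) unproved, merely asserting it and pointing to Lemma~A.1 of \citet{zou2021benign}; your sketch via whitening and one-dimensional moment bounds is the standard argument and is a reasonable filling-in of that gap, but it is not a different route --- it is the route the paper implicitly relies on.
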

\begin{proof}
For a PSD matrix $\AB$, with eigenvalues $(\mu_i)_{i \ge 1}$ and eigenvectors $(\vB_i)_{i\ge 1}$, and a vector $\uB$, we have 
\begin{align*}
    \uB^\top \Ebb [ \xB \xB^\top \AB \xB \xB^\top ] \uB 
    &= \Ebb [ \xB^\top \AB \xB \cdot \langle \xB, \uB \rangle^2 ] \\
    &= \sum_{i} \mu_i \cdot  \Ebb [ \langle \xB, \vB_i \rangle^2  \cdot \langle \xB, \uB \rangle^2  ] \\
    &\le \sum_{i} \mu_i \cdot  \sqrt{\Ebb [ \langle \xB, \vB_i \rangle^4 ] \cdot \Ebb [\langle \xB, \uB \rangle^4  ]}\qquad (\text{by Cauchy-Schwarz inequality}) \\
    &\le \alpha \cdot \sum_{i} \mu_i \cdot   \langle \vB_i, \HB \vB_i \rangle  \cdot \langle \uB, \HB\uB \rangle  \qquad (\text{by bounded kurtosis condition}) \\
    &= \alpha \cdot \langle \AB, \HB \rangle \cdot \langle \uB, \HB\uB \rangle.
\end{align*}
Since the above holds for every vector $\uB$, we conclude that
\[
\Ebb [ \xB \xB^\top \AB \xB \xB^\top ] \preceq \alpha \cdot \langle \AB, \HB \rangle \cdot \HB,
\]
which proves Assumption \ref{assump:fourth-moment}\ref{item:fourth-moement-upper}.
\end{proof}

\begin{proposition}[Examples for Assumption \ref{assump:fourth-moment}\ref{item:fourth-moement-lower}]
Denote $\zB := \HB^{-\frac{1}{2}} \xB =: (z_1,\dots,z_d)^\top$. 
Then Assumption \ref{assump:fourth-moment}\ref{item:fourth-moement-lower} holds if:
\begin{enumerate}
    \item the distribution of $\zB$ is spherically symmetric, with a stochastic representation $\zB = r\cdot \uB$ where $r$ and $\uB$ are independent, $r>0$ and $\uB$ obeys the uniform distribution on the unit sphere $\Scal^{d-1}$;
\item $\Ebb[r^2] = d$ and $\Ebb[r^4] \ge \beta \cdot d(d+2)$ for a constant $\beta \ge 0.5$.
\end{enumerate}
\end{proposition}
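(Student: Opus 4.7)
The plan is to reduce the target PSD inequality to a clean, dimension-normalized statement about $\zB$ by conjugating out $\HB$, then exploit the stochastic representation $\zB = r\uB$ together with the standard fourth-moment identity for the uniform distribution on the sphere. Substituting $\xB = \HB^{1/2}\zB$ and setting $\tilde{\AB} := \HB^{1/2}\AB\HB^{1/2}$ (which is PSD iff $\AB$ is, since $\HB \succ 0$), and noting that $\HB\AB\HB = \HB^{1/2}\tilde{\AB}\HB^{1/2}$ and $\tr(\HB\AB) = \tr(\tilde{\AB})$, the condition in Assumption~\ref{assump:fourth-moment}\ref{item:fourth-moement-lower} is equivalent, after sandwiching by $\HB^{-1/2}$ on both sides, to
\[
\Ebb\bigl[\zB\zB^\top \tilde{\AB}\, \zB\zB^\top\bigr] \;\succeq\; \tilde{\AB} + \beta\,\tr(\tilde{\AB})\,\IB \qquad \text{for every PSD } \tilde{\AB}.
\]

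Next I would use independence of $r$ and $\uB$ to factor
\[
\Ebb[\zB\zB^\top \tilde{\AB}\,\zB\zB^\top] \;=\; \Ebb[r^4]\cdot \Ebb[\uB\uB^\top \tilde{\AB}\,\uB\uB^\top],
\]
and invoke the well-known spherical identity $\Ebb[\uB\uB^\top M \uB\uB^\top] = \bigl(M + M^\top + \tr(M)\IB\bigr)/(d(d+2))$, which for symmetric $\tilde{\AB}$ simplifies to $(2\tilde{\AB} + \tr(\tilde{\AB})\IB)/(d(d+2))$. Writing $c := \Ebb[r^4]/(d(d+2))$, the goal reduces to
\[
(2c-1)\,\tilde{\AB} \;+\; (c-\beta)\,\tr(\tilde{\AB})\,\IB \;\succeq\; 0.
\]
By the moment hypothesis $c \ge \beta \ge 1/2$, both coefficients are non-negative; since $\tilde{\AB}\succeq 0$ and $\tr(\tilde{\AB})\ge 0$, the left-hand side is a sum of two PSD matrices, which closes the argument.

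The only step that is not bookkeeping is the sphere identity, which I would either cite as classical or derive in a line by observing that $M \mapsto \Ebb[\uB\uB^\top M \uB\uB^\top]$ is linear and invariant under conjugation by any orthogonal matrix, hence has the form $aM + bM^\top + c\,\tr(M)\IB$; the three constants are then pinned down by testing on $M=\IB$ and on a rank-one $M=\vB\vB^\top$, using the fourth-moment values $\Ebb[u_i^4] = 3/(d(d+2))$ and $\Ebb[u_i^2 u_j^2] = 1/(d(d+2))$ for $i \neq j$. The normalization $\Ebb[r^2]=d$ is exactly what forces $\Ebb[\zB\zB^\top]=\IB$ (consistent with $\Ebb[\xB\xB^\top]=\HB$), so no moment conventions need adjustment and I do not anticipate any genuine obstacle beyond this routine identity.
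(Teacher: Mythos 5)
Your proof is correct and rests on the same underlying computation as the paper's: both ultimately use the fourth moments $\Ebb[u_i^4]=3/(d(d+2))$ and $\Ebb[u_i^2u_j^2]=1/(d(d+2))$ of the uniform distribution on $\Scal^{d-1}$, and both reduce to the case $\beta\ge 1/2$ in the same way. The organization differs modestly: the paper first proves the scalar inequality $\Ebb[(\aB^\top\zB)^2(\bB^\top\zB)^2]\ge(\aB^\top\bB)^2+\beta$ via a coordinate expansion with $\aB=\eB_1$ (then invokes spherical symmetry), and only afterwards sums over an eigendecomposition of $\AB$; you instead package the moments into the matrix identity $\Ebb[\uB\uB^\top M\uB\uB^\top]=(2M+\tr(M)\IB)/(d(d+2))$ and reduce directly to checking that the two scalar coefficients $2c-1$ and $c-\beta$ are nonnegative, which avoids the eigendecomposition and the intermediate quadratic-form lemma. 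The result and hypotheses are used identically; your version is simply a tidier route to the same conclusion.
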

\begin{proof}
We refer the reader to \citet{fang2018symmetric} for the moments calculation of spherically symmetric distributions.


Note that $\Ebb[\zB] = \zeroB$ and $\Ebb[\zB\zB^\top] = \IB$.
Let $\eB_1$ be the first standard basis, then for \emph{every} unit vector $\aB = (a_1,\dots,a_d)^\top$, 
    \begin{align*}
          \Ebb[( \eB_1^\top \zB )^2\cdot ( \aB^\top \zB )^2 ]
        &= \Ebb[z_1^2 \cdot ( \aB^\top \zB )^2 ] \\
        &= a_1^2 \cdot \Ebb[ z_1^4 ] + \sum_{j =2}^d a_j^2 \cdot \Ebb[z_1^2 z_j^2] \\ 
        &  = a_1^2 \cdot \Ebb[r^4] \cdot \frac{3}{d(d+2)}  + \sum_{j =2}^d a_j^2 \cdot  \Ebb[r^4] \cdot \frac{1}{d(d+2)}  \\
         & \ge 3\beta \cdot a_1^2 + \beta \cdot \sum_{j =2}^d a_j^2 \ge a_1^2 + \beta \\
         & = (\eB_1^\top \aB)^2 + \beta.
    \end{align*} 
    By the spherical symmetricity the above condition is equivalent to:
    \begin{align*}
        \text{for every unit vectors $\aB$ and $\bB$}, \quad \Ebb[( \aB^\top \zB )^2\cdot ( \bB^\top \zB )^2 ] \ge  (\aB^\top \bB)^2 + \beta. 
    \end{align*}
    Then for every PSD matrix $\AB$, with eigen decomposition $ \AB = \sum_{i=1}^d \mu_i \aB_i \aB_i^\top$, and every unit vector $\bB$, it holds that
    \begin{align*}
     \bB^\top \Ebb[  \zB \zB^\top \AB \zB \zB^\top  ] \bB 
     &= \sum_{i=1}^d \mu_i \cdot \Ebb[ (\aB_i^\top \zB)^2\cdot (\bB^\top \zB)^2 ]\\
     &\ge \sum_{i=1}^d \mu_i \cdot ( (\aB_i^\top \bB )^2 + \beta ) \\
     &= \bB^\top \AB \bB + \beta \cdot \tr(\AB) ,
    \end{align*}
    which implies that $\Ebb[ \zB \zB^\top \AB \zB \zB^\top ] \succeq \AB + \beta \cdot \tr(\AB) \cdot \IB$ for every PSD matrix $\AB$.
    Finally, applying $\xB = \HB^\frac{1}{2} \zB$ we obtain
    \begin{align*}
    \Ebb[\xB\xB^\top\AB\xB\xB^\top] 
    &= \HB^{\half}  \Ebb[\zB \zB^\top \HB^{\half} \AB \HB^{\half} \zB \zB^\top] \HB^{\half}  \\
    &\succeq \HB^{\half} \big( \HB^{\half}\AB\HB^{\half} + \beta \tr(\HB^{\half}\AB\HB^{\half}) \IB\big) \HB^{\half} \\
    &= \HB\AB\HB + \beta \tr(\AB\HB)\HB,
    \end{align*}
    which proves Assumption \ref{assump:fourth-moment}\ref{item:fourth-moement-lower}.
\end{proof}

\section{Preliminaries}\label{append:sec:preliminary}
For two matrices $\AB \in \Rbb^{d \times d}$ and $\BB \in \Rbb^{d\times d}$, their tensor product is defined by 
\begin{align*}
    \AB \otimes \BB := 
    \begin{pmatrix}
    a_{11} \BB & \dots & \aB_{1n} \BB \\
    \vdots & \ddots & \vdots \\
    a_{n1} \BB & \dots & a_{nn}\BB
    \end{pmatrix} \in \Rbb^{d^2 \times d^2},
\end{align*}
where $a_{ij}$ is the entry of $\AB$ in the $i$-th row and $j$-th column.
We can also understand $\AB \otimes \BB$ as a \emph{linear matrix operator}, in which case we write 
\[
(\AB \otimes \BB) \circ \CB := (\AB \otimes \BB)\cdot  \mathtt{vector}(\CB),
\]
where $\CB \in \Rbb^{d \times d}$ is a matrix and $\mathtt{vector}(\CB) \in \Rbb^{d^2}$ converts $\CB$ into a vector in the canonical manner.

\paragraph{Operators.}
We first summarize the linear operators (on symmetric matrices) to be used in the proof:
\begin{gather*}
    \Ical := \IB \otimes \IB,\qquad
    \Mcal := \Ebb [ (\xB \xB^\top) \otimes (\xB \xB^\top) ],\qquad
    \widetilde{\Mcal} = \HB \otimes \HB, \\
    \Tcal_t := \HB \otimes \IB + \IB \otimes \HB - \gamma_t \Mcal, \qquad
    \widetilde{\Tcal}_t = \HB \otimes \IB + \IB \otimes \HB - \gamma_t \HB\otimes\HB.
\end{gather*}
With a slight abuse of notations, we write $\Tcal_t$ (resp. $\widetilde\Tcal_t$) as $\Tcal$ (resp. $\widetilde\Tcal$) when the corresponding stepsize $\gamma_t$ in its definition is written as $\gamma$.
We use the notation $\Ocal \circ \AB$ to denotes the operator
$\Ocal$ acting on a symmetric matrix $\AB$.
One can verify the following rules for these operators acting on a symmetric matrix $\AB$ \citep{zou2021benign}:
\begin{equation}\label{eq:operators-on-matrix}
\begin{gathered}
\Ical \circ \AB = \AB, \qquad \Mcal\circ \AB = \Ebb [ (\xB^\top \AB \xB) \xB \xB^\top ], \qquad \widetilde{\Mcal} \circ \AB = \HB \AB \HB, \\     
(\Ical - \gamma \Tcal) \circ \AB = \Ebb [ (\IB - \gamma \xB \xB^\top)\AB (\IB - \gamma \xB \xB^\top) ], \quad  (\Ical-\gamma\widetilde{\Tcal})\circ\AB = (\IB-\gamma\HB)\AB(\IB-\gamma\HB).
\end{gathered}
\end{equation}

For the linear operators we have the following technical lemma from \citet{zou2021benign}.
\begin{lemma}[Lemma B.1, \citet{zou2021benign}]\label{lemma:operators}
An operator $\Ocal$ defined on symmetric matrices is called PSD mapping, if $\AB \succeq 0$ implies $\Ocal \circ \AB \succeq 0$.
Then we have
\begin{enumerate}
    \item $\Mcal$ and $\widetilde\Mcal$ are both PSD mappings.
    \item $\Ical-\gamma\Tcal$ and $\Ical-\gamma\widetilde\Tcal$ are both PSD mappings.
    \item $\Mcal - \widetilde\Mcal$ and $\widetilde \Tcal - \Tcal$ are both PSD mappings.
\item  If $0 < \gamma < 1/\lambda_1$, then $\widetilde{\Tcal}^{-1}$ exists, and is a PSD mapping.
\item  If $0 < \gamma < 1/(\alpha\tr(\HB))$, then $\Tcal^{-1}\circ \AB$ exists for PSD matrix $\AB$, and $\Tcal^{-1}$ is a PSD mapping. 
\end{enumerate}
\end{lemma}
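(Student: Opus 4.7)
The plan is to dispatch the five parts in the stated order, each reducing to the explicit action rules in \eqref{eq:operators-on-matrix}; the only substantive step is part (5).

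Parts (1) and (2) are immediate from those formulas. For $\AB\succeq 0$, $\widetilde{\Mcal}\circ\AB=\HB\AB\HB$ and $(\Ical-\gamma\widetilde{\Tcal})\circ\AB=(\IB-\gamma\HB)\AB(\IB-\gamma\HB)$ are PSD as congruence conjugations of $\AB$ by the symmetric matrices $\HB$ and $\IB-\gamma\HB$; while $\Mcal\circ\AB=\Ebb[(\xB^\top\AB\xB)\xB\xB^\top]$ and $(\Ical-\gamma\Tcal)\circ\AB=\Ebb[(\IB-\gamma\xB\xB^\top)\AB(\IB-\gamma\xB\xB^\top)]$ are PSD as expectations of PSD matrices (the scalar $\xB^\top\AB\xB\ge 0$ in the first case; symmetric congruence in the second).

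For part (3), I would first note the algebraic identity $\widetilde{\Tcal}-\Tcal=\gamma(\Mcal-\widetilde{\Mcal})$, which reduces both claims to showing $\Mcal-\widetilde{\Mcal}$ is a PSD mapping. Decomposing an arbitrary PSD $\AB=\sum_i\mu_i\aB_i\aB_i^\top$ with $\mu_i\ge 0$ and using $\HB\aB_i=\Ebb[(\aB_i^\top\xB)\xB]$, each summand
\[
\Ebb[(\aB_i^\top\xB)^2\xB\xB^\top]-(\HB\aB_i)(\HB\aB_i)^\top
\]
is the covariance matrix of the random vector $(\aB_i^\top\xB)\xB$, hence PSD; the conclusion follows by nonnegative combination.

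For part (4), the clean tensor identity $\Ical-\gamma\widetilde{\Tcal}=(\IB-\gamma\HB)\otimes(\IB-\gamma\HB)$ lets me diagonalize $\widetilde{\Tcal}$ in the $\HB$-eigenbasis with eigenvalues $\lambda_i+\lambda_j-\gamma\lambda_i\lambda_j$, all strictly positive under $\gamma<1/\lambda_1$. The Neumann series $\widetilde{\Tcal}^{-1}=\gamma\sum_{k\ge 0}(\Ical-\gamma\widetilde{\Tcal})^k$ then converges (eigenvalues of $\Ical-\gamma\widetilde{\Tcal}$ lie in $[0,1)$), and each summand is PSD-preserving by part (2), so the limit is too.

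Part (5) is the main obstacle, because $\Ical-\gamma\Tcal$ admits no clean tensor factorization and series convergence must be tied to Assumption~\ref{assump:fourth-moment}\ref{item:fourth-moement-upper}. I would again attempt the Neumann expansion $\Tcal^{-1}\circ\AB:=\gamma\sum_{k\ge 0}(\Ical-\gamma\Tcal)^k\circ\AB$ for PSD $\AB$, setting $\AB_k:=(\Ical-\gamma\Tcal)^k\circ\AB$, which is PSD for each $k$ by part (2). Taking trace of the recursion and invoking Assumption~\ref{assump:fourth-moment}\ref{item:fourth-moement-upper} (which gives $\tr(\Ebb[\xB\xB^\top\AB_k\xB\xB^\top])\le\alpha\tr(\HB)\langle\HB,\AB_k\rangle$) yields
\[
\tr\AB_{k+1}\le\tr\AB_k-\gamma\bigl(2-\gamma\alpha\tr(\HB)\bigr)\langle\HB,\AB_k\rangle.
\]
Under $\gamma<1/(\alpha\tr(\HB))$ the coefficient of $\langle\HB,\AB_k\rangle$ is strictly positive, so $\tr\AB_k$ is nonincreasing and telescoping gives $\sum_{k\ge 0}\langle\HB,\AB_k\rangle\le \tr\AB/\bigl(\gamma(2-\gamma\alpha\tr(\HB))\bigr)<\infty$. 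Combined with $\HB\succ 0$ from Assumption~\ref{assump:second-moment} and the trace-class nature of $\HB$, this is enough to show that the monotone partial sums $\gamma\sum_{k\le n}\AB_k$ form a Cauchy increasing PSD sequence, converging to a PSD limit; the telescoping identity $\Tcal\circ\bigl(\gamma\sum_{k=0}^n\AB_k\bigr)=\AB-\AB_{n+1}\to\AB$ verifies that this limit is the desired $\Tcal^{-1}\circ\AB$, and positivity of every partial sum makes $\Tcal^{-1}$ a PSD mapping. The delicate point I expect to require most care is passing from $\sum_k\langle\HB,\AB_k\rangle<\infty$ to trace-norm summability of $\AB_k$ in the infinite-dimensional setting, which may require an additional application of Assumption~\ref{assump:fourth-moment}\ref{item:fourth-moement-upper} to control $\tr\AB_k$ via $\langle\HB,\AB_{k-1}\rangle$.
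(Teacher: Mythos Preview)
Your proposal is correct and, in fact, more detailed than what the paper supplies: the paper's proof of this lemma is simply the one-line reference ``See proof of Lemma B.1 in \citet{zou2021benign},'' so there is no in-paper argument to compare against. Your reconstruction of parts (1)--(4) via the explicit action rules in \eqref{eq:operators-on-matrix}, the covariance identity for $\Mcal-\widetilde{\Mcal}$, and the Neumann series for $\widetilde{\Tcal}^{-1}$ is the standard route and matches what one finds in the cited reference.

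For part (5), your Neumann-series approach with the trace telescoping under Assumption~\ref{assump:fourth-moment}\ref{item:fourth-moement-upper} is also the intended argument. The ``delicate point'' you flag---that $\sum_k\langle\HB,\AB_k\rangle<\infty$ does not by itself imply trace-norm summability of the $\AB_k$ when the spectrum of $\HB$ accumulates at zero---is genuine in the infinite-dimensional setting, and the cited source does not fully resolve it either. In practice one sidesteps it by noting that the paper only ever applies $\Tcal^{-1}$ to matrices of the form $\HB$ or $\HB$-dominated quantities, for which the relevant summability holds directly; alternatively, one can interpret $\Tcal^{-1}\circ\AB$ as the monotone PSD limit of the partial sums in the weak operator topology, which suffices for the inner-product computations the paper actually performs. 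Your awareness of the gap is appropriate, and no further work is needed for the purposes of this lemma's use downstream.
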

\begin{proof}
See proof of Lemma B.1 in \citet{zou2021benign}.
\end{proof}

We then prove the bias-variance decomposition.
\begin{lemma}[Bias-variance decomposition]\label{lemma:bias-var-decomp}
Suppose Assumptions \ref{assump:second-moment} and \ref{assump:noise} hold.
Then the excess risk could be decomposed as
\[
\Ebb [ L(\wB_N) - L(\wB^*) ] 
\le \big\langle \HB, \BB_N \big\rangle  + 
\big\langle \HB, \CB_N \big\rangle. 
\]
\end{lemma}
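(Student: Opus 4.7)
The plan is to derive the recursion for the second-moment matrix $\AB_t := \Ebb[(\wB_t - \wB^*)(\wB_t - \wB^*)^\top]$ and then relate $\AB_N$ to the two operator-defined iterates $\BB_N$ and $\CB_N$ via a synchronous coupling argument. Because $L$ is quadratic with Hessian $\HB$ and $\nabla L(\wB^*) = 0$, the excess risk admits the exact formula $L(\wB_N) - L(\wB^*) = \half\langle \HB,\, (\wB_N - \wB^*)(\wB_N - \wB^*)^\top\rangle$, so that $\Ebb[L(\wB_N) - L(\wB^*)] = \half\langle \HB, \AB_N\rangle$. Writing the residual noise as $\epsilon_t := y_t - \langle \wB^*, \xB_t\rangle$, the SGD update \eqref{eq:sgd-iterates} rewrites as $\wB_t - \wB^* = (\IB - \gamma_t \xB_t \xB_t^\top)(\wB_{t-1} - \wB^*) + \gamma_t \epsilon_t \xB_t$, and first-order optimality of $\wB^*$ supplies the crucial identity $\Ebb[\epsilon_t \xB_t] = 0$.

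Next, along the \emph{same} sample path $(\xB_t, y_t)_{t \ge 1}$, I would introduce two auxiliary SGD trajectories: a bias trajectory $\wB_t^{\mathrm{b}}$ started at $\wB_0$ and driven by the noiseless responses $\langle \wB^*, \xB_t\rangle$, and a variance trajectory $\wB_t^{\mathrm{v}}$ started at $\wB^*$ and driven by the actual responses $y_t$. A one-line induction exploiting the linearity of the SGD map in the current iterate shows $\wB_t - \wB^* = (\wB_t^{\mathrm{b}} - \wB^*) + (\wB_t^{\mathrm{v}} - \wB^*)$ pathwise. Taking outer products and expectations, and using the identity $(\Ical - \gamma_t \Tcal_t)\circ\AB = \Ebb[(\IB - \gamma_t \xB_t \xB_t^\top)\AB(\IB - \gamma_t \xB_t \xB_t^\top)]$, I would verify that $\Ebb[(\wB_t^{\mathrm{b}} - \wB^*)(\wB_t^{\mathrm{b}} - \wB^*)^\top]$ satisfies the recursion \eqref{eq:B-defi} and that $\Ebb[(\wB_t^{\mathrm{v}} - \wB^*)(\wB_t^{\mathrm{v}} - \wB^*)^\top]$ satisfies \eqref{eq:C-defi}. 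The only subtlety for the latter is that the two cross terms $\gamma_t\,\Ebb[\epsilon_t (\IB - \gamma_t \xB_t \xB_t^\top)(\wB_{t-1}^{\mathrm{v}} - \wB^*)\xB_t^\top]$ and its transpose must vanish; this follows from the inductive fact $\Ebb[\wB_t^{\mathrm{v}} - \wB^*] = 0$ (itself an easy induction from $\Ebb[\epsilon_t \xB_t] = 0$, the independence of $\wB_{t-1}^{\mathrm{v}}$ from $(\xB_t, \epsilon_t)$, and $\wB_0^{\mathrm{v}} = \wB^*$). Uniqueness of the recursions then identifies these second-moment matrices with $\BB_t$ and $\CB_t$ exactly.

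Finally, I would invoke the elementary PSD inequality $(\uB + \vB)(\uB + \vB)^\top \preceq 2\uB\uB^\top + 2\vB\vB^\top$ with $\uB = \wB_N^{\mathrm{b}} - \wB^*$ and $\vB = \wB_N^{\mathrm{v}} - \wB^*$; taking expectation yields $\AB_N \preceq 2\BB_N + 2\CB_N$. Combining with $\Ebb[L(\wB_N) - L(\wB^*)] = \half\langle \HB, \AB_N\rangle$ absorbs the factor of $2$ against the $\half$ and delivers the claimed inequality.

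The main (mild) obstacle is precisely why the statement is an inequality rather than an equality. Under the general (possibly misspecified) noise of Assumption \ref{assump:noise}, the cross moment $\Ebb[(\wB_N^{\mathrm{b}} - \wB^*)(\wB_N^{\mathrm{v}} - \wB^*)^\top]$ need not vanish: its one-step recursion involves mixed third-order moments of the form $\Ebb[\epsilon_t (\xB_t)_i (\xB_t)_j (\xB_t)_k]$, which first-order optimality alone does not kill—they would vanish only under the well-specified Assumption \ref{assump:well-specified-noise} where $\epsilon_t$ is independent of $\xB_t$. The PSD step $(\uB + \vB)(\uB + \vB)^\top \preceq 2\uB\uB^\top + 2\vB\vB^\top$ sidesteps this cleanly at the harmless cost of turning the would-be equality into the $\le$ stated in the lemma.
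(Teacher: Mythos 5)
Your proposal is correct and takes essentially the same route as the paper: it introduces a bias trajectory and a variance trajectory coupled on the same sample path, shows the decomposition $\wB_N - \wB^* = (\wB_N^{\mathrm{b}} - \wB^*) + (\wB_N^{\mathrm{v}} - \wB^*)$, identifies the corresponding second moments with $\BB_N$ and $\CB_N$, and closes with the PSD inequality $(\uB+\vB)(\uB+\vB)^\top \preceq 2\uB\uB^\top + 2\vB\vB^\top$. Your write-up is actually somewhat more careful than the paper's (which explicitly flags its own argument as "not fully rigorous"), in that you spell out the induction giving $\Ebb[\wB_t^{\mathrm{v}} - \wB^*] = 0$ from $\Ebb[\epsilon_t \xB_t] = 0$, and you correctly diagnose why the cross term need not vanish under the general misspecified noise of Assumption~\ref{assump:noise} (third moments of the form $\Ebb[\epsilon_t (\xB_t)_i(\xB_t)_j(\xB_t)_k]$ survive), so that the PSD relaxation is genuinely needed to turn the decomposition into the stated inequality.
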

\begin{proof}
The proof has appeared in prior works \citep{jain2017parallelizing,ge2019step}. For completeness, we provide a simplified (but not fully rigorous) proof here.
Consider the centered SGD iterates
\(
\etaB_t := \wB_t - \wB^*,
\)
where $\wB_t$ is given by \eqref{eq:sgd-iterates}, then the centered iterates are updated by 
\begin{equation*}
\etaB_t = (\IB - \gamma_t \xB_t \xB_t^\top) \etaB_{t-1} + \gamma_t \xi_t \xB_t, \quad t=1,2,\dots, N,
\end{equation*}
where $\xi_t := y_t - \langle \wB^*- \xB_t\rangle$ is the additive noise.
With a slight abuse of probability spaces, one can view the centered SGD iterates as the sum of two random processes,
\begin{equation}\label{eq:iter-decomp}
    \etaB_t = \etaB_t^\bias + \etaB_t^\var,\quad t=1,2,\dots,N,
\end{equation}
where
\[
\begin{cases}
 \etaB^\bias_t = (\IB - \gamma_t \xB_t \xB_t^\top) \etaB_{t-1}^\bias; \\ \etaB^\bias_0 = \wB_0 - \wB^*,
\end{cases}
\quad 
\begin{cases}
\etaB_t^\var = (\IB - \gamma_t \xB_t \xB_t^\top) \etaB_{t-1}^\var + \gamma_t \xi_t \xB_t; \\ 
\etaB_0^\var = 0.
\end{cases}
\]
Then one can verify that $\Ebb[ \etaB^\var_t ] = 0$, and moreover, 
\[
\BB_t = \Ebb [\etaB_t^\bias \otimes \etaB_t^\bias], \quad
\CB_t = \Ebb [\etaB_t^\var \otimes \etaB_t^\var],
\]
where $\BB_t$ and $\CB_t$ are defined in \eqref{eq:B-defi} and \ref{eq:C-defi}.
Finally, the lemma is proved by
\begin{align*}
    \Ebb [ L(\wB_N) - L(\wB^*) ] &= \half \abracket{\HB, \Ebb[\etaB_N \otimes \etaB_N]} \\
    &= \half \abracket{\HB, \Ebb[(\etaB_N^\bias + \etaB^\var_N) \otimes (\etaB_N^\bias + \etaB^\var_N)]} \\
    &\le \big\langle \HB, \Ebb[\etaB_N^\bias \otimes \etaB_N^\bias] \big\rangle  + 
\big\langle \HB, \Ebb[\etaB_N^\var \otimes \etaB_N^\var] \big\rangle \\
&= \big\langle \HB, \BB_N \big\rangle  + 
\big\langle \HB, \CB_N \big\rangle,
\end{align*}
where the inequality is because: for two vectors $\uB$ and $\vB$, 
\(
(\uB + \vB) (\uB + \vB)^\top  \preceq 2 ( \uB \uB^\top + \vB \vB^\top).
\)
\end{proof}
\begin{lemma}[Bias-variance decomposition, lower bound]\label{lemma:bias-var-decomp-lowerbonud}
Suppose Assumptions \ref{assump:second-moment} and \ref{assump:well-specified-noise} hold.
Then the excess risk could be decomposed as
\[
\Ebb [ L(\wB_N) - L(\wB^*) ] = \half \big\langle \HB, \Ebb [\etaB_N \otimes \etaB_N] \big\rangle 
= \half \big\langle \HB, \BB_N \big\rangle  + 
\half \big\langle \HB, \CB_N \big\rangle. 
\]
\end{lemma}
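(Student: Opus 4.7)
The plan is to start from the same centered-iterate decomposition $\etaB_t = \etaB_t^{\mathrm{bias}} + \etaB_t^{\mathrm{var}}$ introduced in the proof of Lemma \ref{lemma:bias-var-decomp}, and show that the looser step there (the application of $(\uB+\vB)(\uB+\vB)^\top \preceq 2(\uB\uB^\top + \vB\vB^\top)$) can be upgraded to an exact identity once Assumption \ref{assump:well-specified-noise} is invoked in place of the weaker Assumption \ref{assump:noise}. The first equality, $\Ebb[L(\wB_N)-L(\wB^*)] = \tfrac12 \langle \HB, \Ebb[\etaB_N\otimes\etaB_N]\rangle$, is immediate from a one-line expansion of the quadratic loss together with the first-order optimality condition $\Ebb[(y-\langle\wB^*,\xB\rangle)\xB]=\zeroB$, which makes the cross term in $L(\wB)-L(\wB^*)$ vanish for every deterministic $\wB$, and hence also in expectation over the training data that determines $\wB_N$.

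The remaining equality reduces to showing that the cross expectations vanish:
\begin{equation*}
\Ebb[\etaB_N^{\mathrm{bias}}\otimes \etaB_N^{\mathrm{var}}] = 0, \qquad \Ebb[\etaB_N^{\mathrm{var}}\otimes \etaB_N^{\mathrm{bias}}] = 0,
\end{equation*}
since then $\Ebb[\etaB_N\otimes\etaB_N] = \BB_N + \CB_N$ exactly, and taking inner product with $\HB$ produces the claimed identity. To establish this, I would observe that under Assumption \ref{assump:well-specified-noise} the noises $(\epsilon_t)_{t=1}^N$ are jointly independent of the features $(\xB_t)_{t=1}^N$ and each has mean zero. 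Unrolling the variance recursion yields the explicit representation
\begin{equation*}
\etaB_N^{\mathrm{var}} \;=\; \sum_{i=1}^N \gamma_i\, \epsilon_i \prod_{j=i+1}^N (\IB - \gamma_j \xB_j\xB_j^\top)\,\xB_i,
\end{equation*}
where the product is ordered from right to left. Conditioning on the feature sigma-algebra $\mathcal{F}_N := \sigma(\xB_1,\ldots,\xB_N)$, the factor that multiplies each $\epsilon_i$ becomes $\mathcal{F}_N$-measurable, so $\Ebb[\etaB_N^{\mathrm{var}}\mid \mathcal{F}_N] = \zeroB$.

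On the other hand, $\etaB_N^{\mathrm{bias}}$ is a function of $\xB_1,\ldots,\xB_N$ alone, hence $\mathcal{F}_N$-measurable. Therefore
\begin{equation*}
\Ebb[\etaB_N^{\mathrm{bias}}\otimes \etaB_N^{\mathrm{var}}] = \Ebb\!\left[ \etaB_N^{\mathrm{bias}} \otimes \Ebb[\etaB_N^{\mathrm{var}}\mid \mathcal{F}_N] \right] = 0,
\end{equation*}
and the symmetric identity follows by the same argument. Combining the two equalities completes the proof. The only potential obstacle is a minor measurability/integrability bookkeeping, namely justifying that the tower property applies to the tensor product in the possibly infinite-dimensional Hilbert space; this is handled by noting that Assumptions \ref{assump:second-moment} and \ref{assump:well-specified-noise} ensure all relevant second moments are finite, so each entry of $\etaB_N^{\mathrm{bias}}\otimes\etaB_N^{\mathrm{var}}$ is integrable and standard conditional expectation rules apply coordinate-wise.
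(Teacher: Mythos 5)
Your proposal is correct and follows essentially the same route as the paper's (very terse) proof: the paper also justifies the second equality by observing that $\Ebb[\etaB_t^{\var}\mid\etaB_t^{\bias}]=0$ under Assumption \ref{assump:well-specified-noise}, which is exactly the fact you derive by conditioning on the feature $\sigma$-algebra $\mathcal{F}_N$ and noting that $\etaB_N^{\bias}$ is $\mathcal{F}_N$-measurable while $\etaB_N^{\var}$ has zero conditional mean. Your write-up simply spells out the measurability and tower-property bookkeeping that the paper leaves implicit.
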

\begin{proof}
The first equality is clear from definitions.
The second equality is due to \eqref{eq:iter-decomp} and the following fact (by Assumption \ref{assump:well-specified-noise}):
\(
\Ebb [\etaB^\var_t | \etaB^\bias_t ] = 0.
\)
\end{proof}

\section{Proof of Upper Bound}\label{append-sec:upper-bound}

\subsection{Variance Upper Bound}
In this part we replace Assumption \ref{assump:fourth-moment}\ref{item:fourth-moement-lower} with the following relaxed Assumption \ref{assump:fourth-moment-R}. It is clear that Assumption \ref{assump:fourth-moment}\ref{item:fourth-moement-lower} implies Assumption \ref{assump:fourth-moment-R} with $R^2 = \alpha \tr(\HB)$.

\begin{taggedassumption}{\ref{assump:fourth-moment}'}[Fourth moment condition, relaxed version]\label{assump:fourth-moment-R}
There exists a constant $R>0$ such that $\Ebb[\xB \xB^\top \xB \xB^\top] \preceq R^2 \HB$.
\end{taggedassumption}

The following lemma is from \citet{ge2019step}.

\begin{lemma}[Lemma 5 in \citet{ge2019step}]\label{lemma:C-crude-bound}
Suppose Assumptions \ref{assump:second-moment}, \ref{assump:fourth-moment-R} and \ref{assump:noise} hold.
Consider \eqref{eq:C-defi}.
Suppose $\gamma_0 < 1/R^2$.
Then for every $t$ we have
\[\CB_t \le \frac{\gamma_0 \sigma^2}{1 - \gamma_0 R^2} \IB.\]
\end{lemma}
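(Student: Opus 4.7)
The plan is a straightforward induction on $t$, showing that $M\IB - \CB_t \succeq 0$ is preserved under the recursion \eqref{eq:C-defi}, where $M := \gamma_0 \sigma^2/(1-\gamma_0 R^2)$. The base case $t=0$ is immediate since $\CB_0 = 0 \preceq M \IB$. For the inductive step, assume $\CB_{t-1} \preceq M\IB$, so that $M\IB - \CB_{t-1} \succeq 0$.

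I would first compute $(\Ical - \gamma_t \Tcal_t)\circ \IB$ explicitly using \eqref{eq:operators-on-matrix}:
\[
(\Ical - \gamma_t \Tcal_t)\circ \IB = \Ebb\bigl[(\IB - \gamma_t\xB\xB^\top)^2\bigr] = \IB - 2\gamma_t \HB + \gamma_t^2 \Ebb[\xB\xB^\top\xB\xB^\top] \preceq \IB - \gamma_t (2 - \gamma_t R^2) \HB,
\]
where the final step uses Assumption \ref{assump:fourth-moment-R}. Next, I would invoke the PSD-mapping property of $\Ical - \gamma_t\Tcal_t$ (Lemma \ref{lemma:operators}) applied to the PSD matrix $M\IB - \CB_{t-1}$ to deduce
\[
(\Ical - \gamma_t\Tcal_t) \circ \CB_{t-1} \preceq M\cdot(\Ical - \gamma_t\Tcal_t) \circ \IB \preceq M\IB - M\gamma_t(2-\gamma_t R^2)\HB.
\]
Combining with Assumption \ref{assump:noise}, which gives $\gamma_t^2 \SigmaB \preceq \gamma_t^2 \sigma^2 \HB$, the recursion yields
\[
\CB_t \preceq M\IB - \bigl[M\gamma_t(2 - \gamma_t R^2) - \gamma_t^2 \sigma^2\bigr]\HB.
\]

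The only remaining task is to verify that the bracketed coefficient is nonnegative, i.e.\ $M(2 - \gamma_t R^2) \ge \gamma_t \sigma^2$. Since $\gamma_t \le \gamma_0$ (stepsizes are non-increasing in \eqref{eq:geometry-tail-decay-lr}), it suffices to show $M(2 - \gamma_0 R^2) \ge \gamma_0 \sigma^2$, which reduces to $(2 - \gamma_0 R^2)/(1-\gamma_0 R^2) \ge 1$, a trivial consequence of $\gamma_0 R^2 < 1$. Hence $\CB_t \preceq M \IB$ and the induction closes.

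There is no genuine obstacle here; the argument is a textbook monotonicity-plus-induction scheme, and the constant $M$ was chosen precisely so that the worst-case noise injection $\gamma_t^2 \sigma^2 \HB$ is absorbed by the contraction $-M\gamma_t(2-\gamma_t R^2)\HB$ at every step. The only point to be careful about is that we must bound $\gamma_t$ uniformly by $\gamma_0$ (rather than, say, allowing arbitrary stepsize schedules), which is what the assumption on $\gamma_0$ together with the non-increasing schedule \eqref{eq:geometry-tail-decay-lr} gives us.
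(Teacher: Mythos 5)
Your proof is correct and follows essentially the same inductive argument as the paper, with the same constant $M = \gamma_0\sigma^2/(1-\gamma_0 R^2)$. One small remark: by applying the PSD-mapping property to the \emph{full} operator $\Ical - \gamma_t\Tcal_t$ (so that $(\Ical - \gamma_t\Tcal_t)\circ\CB_{t-1} \preceq M\,(\Ical - \gamma_t\Tcal_t)\circ\IB$), you sidestep a slight imprecision in the paper's presentation, which writes the operator as $(\Ical - \gamma_t \HB\otimes\IB - \gamma_t\IB\otimes\HB) + \gamma_t^2\Mcal$ and then bounds the pieces line-by-line; the first piece alone is not a PSD mapping, so it is only the combined bound (equivalent to yours) that is actually justified.
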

\begin{proof}
The original proof has appeared in \citet{ge2019step} and \citet{jain2017markov}. We present a proof here for completeness.
We proceed with induction. For $t=0$ we have $\CB_0 = 0 \preceq \frac{\gamma_0 \sigma^2}{1-\gamma_0 R^2} \IB$. We then assume that 
$\CB_{t-1} \preceq \frac{\gamma_0 \sigma^2}{1-\gamma_0 R^2} \IB$, and exam $\CB_t$ based on \eqref{eq:C-defi}:
\begin{align}
    \CB_t 
    &= (\Ical - \gamma_t {\Tcal_t})\circ \CB_{t-1} + \gamma_t^2 \SigmaB \notag \\
    &= (\Ical - \gamma_t \HB \otimes\IB - \gamma_t \IB \otimes \HB )\circ \CB_{t-1} + \gamma_t^2 \Mcal \circ \CB_{t-1} + \gamma_t^2 \SigmaB \notag \\
    &\preceq \frac{\gamma_0 \sigma^2}{1-\gamma_0 R^2}\cdot (\IB - 2 \gamma_t \HB) + \gamma_t^2 \cdot \frac{\gamma_0 \sigma^2}{1-\gamma_0 R^2}\cdot R^2 \HB + \gamma_t^2 \sigma^2 \HB \notag \\
    &=   \frac{\gamma_0 \sigma^2}{1-\gamma_0 R^2} \cdot\IB - (2\gamma_t\gamma_0 - \gamma_t^2)\cdot \frac{\sigma^2}{1-\gamma_0 R^2} \HB \notag \\
    &\preceq \frac{\gamma_0 \sigma^2}{1-\gamma_0 R^2} \cdot\IB. \notag
\end{align}
This completes the induction.
\end{proof}

\begin{theorem}[A variance bound]\label{thm:HC-decay-lr-upper-bound}
Suppose Assumptions \ref{assump:second-moment}, \ref{assump:fourth-moment-R} and \ref{assump:noise} hold.
Consider \eqref{eq:C-defi}.
Let $K = (N-s) / \log (N-s)$.
Suppose $s\ge 0$, $K \ge 1$ and $\gamma_0 < 1/R^2$.  We have
\begin{equation*}
    \abracket{\HB, \CB_N} \le \frac{8\sigma^2}{1-\gamma_0 R^2}\Bigg(\frac{k^*}{K} + \gamma_0 \sum_{k^* < i \le k^\dagger}\lambda_i + \gamma_0^2 (s+K) \sum_{i>k^\dagger}\lambda_i^2\Bigg),
\end{equation*}
where $k^*$ and $k^\dagger$ can be arbitrary.
\end{theorem}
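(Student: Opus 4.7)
The plan is to first obtain a uniform but crude bound on the variance iterates via Lemma~\ref{lemma:C-crude-bound}, then feed this bound back into the recursion in order to replace the full operator $\Tcal_t$ by its ``diagonal'' approximation $\widetilde\Tcal_t$, thereby turning the matrix recursion into one that can be unrolled explicitly in terms of $(\IB - \gamma_i \HB)$ factors. Concretely, starting from \eqref{eq:C-defi} I would write
\[
(\Ical - \gamma_t \Tcal_t)\circ \CB_{t-1}
= (\Ical - \gamma_t \widetilde\Tcal_t)\circ \CB_{t-1}
  + \gamma_t^2\, (\Mcal - \widetilde\Mcal)\circ \CB_{t-1},
\]
and upper bound the extra term using $\CB_{t-1} \preceq \frac{\gamma_0 \sigma^2}{1-\gamma_0 R^2}\IB$ (Lemma~\ref{lemma:C-crude-bound}), together with the relaxed fourth-moment condition $\Mcal\circ \IB \preceq R^2\HB$ from Assumption~\ref{assump:fourth-moment-R} and $\SigmaB \preceq \sigma^2\HB$. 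Combining the extra contribution with the $\gamma_t^2 \SigmaB$ term yields the cleaner recursion
\[
\CB_t \preceq (\IB - \gamma_t \HB)\,\CB_{t-1}\,(\IB - \gamma_t \HB)
  + \frac{\gamma_t^2\sigma^2}{1-\gamma_0 R^2}\,\HB.
\]

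Next I would unroll this recursion from $\CB_0 = 0$ to get the closed form
\[
\CB_N \preceq \frac{\sigma^2}{1-\gamma_0 R^2}
\sum_{t=1}^N \gamma_t^2 \prod_{i=t+1}^N (\IB - \gamma_i \HB)^2\,\HB,
\]
and then take the inner product with $\HB$. Working in the eigenbasis of $\HB$ reduces the bound to the scalar quantities
\[
\langle \HB, \CB_N\rangle
\le \frac{\sigma^2}{1-\gamma_0 R^2}\sum_{k} \lambda_k^2
\sum_{t=1}^N \gamma_t^2 \prod_{i=t+1}^N (1-\gamma_i\lambda_k)^2,
\]
one per eigendirection. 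The final step is to bound these scalar sums using the structure of the stepsize scheme \eqref{eq:geometry-tail-decay-lr}; this is precisely what is encapsulated in Lemma~\ref{lemma:scalar-func-upper-bound}, giving a three-regime bound: for $k \le k^*$ the contribution per direction is $\lesssim 1/K$; for $k^* < k \le k^\dagger$ it is $\lesssim \gamma_0 \lambda_k$; and for $k > k^\dagger$ it is $\lesssim \gamma_0^2(s+K)\lambda_k^2$. Summing across eigenvalues yields the claimed effective dimension $k^* + \gamma_0 K \sum_{k^* < i \le k^\dagger}\lambda_i + \gamma_0^2 K(s+K)\sum_{i > k^\dagger}\lambda_i^2$ divided by $K$, matching the theorem statement up to the constant $8$.

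The main obstacle is the scalar-function step: naively the sum $\sum_{t=1}^N \gamma_t^2 \prod_{i > t}(1-\gamma_i\lambda_k)^2$ has $N$ terms and could be as large as $N\gamma_0^2$, which is far too weak in the overparameterized regime. The key is to exploit the geometric stepsize schedule phase by phase---early phases have large $\gamma_t$ but are killed by the long product of contracting factors $(1-\gamma_i\lambda_k)^2$ from later phases (whose stepsizes are at least $\gamma_t/2$), while late phases have small $\gamma_t$ and hence small $\gamma_t^2$. Making this two-sided tradeoff quantitative, with the right splitting between the three eigenvalue regimes defined by $k^*$ and $k^\dagger$, is the technical heart of the argument and is handled by the scalar lemma; everything else in the proof is algebraic manipulation of the matrix recursion and is essentially routine.
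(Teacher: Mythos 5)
Your plan is correct and tracks the paper's own proof essentially step for step: it invokes Lemma~\ref{lemma:C-crude-bound} for the crude uniform bound, feeds that back to pass from $\Tcal_t$ to $\widetilde\Tcal_t$ (using $\Mcal \circ \IB \preceq R^2 \HB$ and $\SigmaB \preceq \sigma^2\HB$), unrolls to the closed-form sum, reduces to scalars in the eigenbasis, and delegates the three-regime bound to Lemma~\ref{lemma:scalar-func-upper-bound}. The only cosmetic difference is that the paper further relaxes $(\IB-\gamma_i\HB)^2$ to $(\IB-\gamma_i\HB)$ before telescoping the inner geometric sum into the explicit scalar function $f(x)$; this does not change the argument.
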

\begin{proof}
From \eqref{eq:C-defi} we have
\begin{align}
    \CB_t 
    &\preceq (\Ical - \gamma_t \widetilde{\Tcal}_t)\circ \CB_{t-1} + \gamma_t^2 \Mcal \circ \CB_{t-1} + \gamma_t^2 \sigma^2 \HB \notag \\
    &\preceq (\Ical - \gamma_t \widetilde{\Tcal}_t)\circ \CB_{t-1} + \gamma_t^2 \cdot \frac{\gamma_0 \sigma^2}{1-\gamma_0 R^2}\cdot R^2 \HB + \gamma_t^2 \sigma^2 \HB \qquad (\text{use Lemma \ref{lemma:C-crude-bound}}) \notag \\
    &=  (\Ical - \gamma_t \widetilde{\Tcal}_t)\circ \CB_{t-1} +  \frac{\gamma_t^2 \sigma^2}{1-\gamma_0 R^2} \HB. \notag
\end{align}
Solving the recursion yields 
\begin{align}
    \CB_N 
    &\preceq \frac{\sigma^2}{1-\gamma_0 R^2} \sum_{t=1}^N \gamma_t^2 \prod_{i=t+1}^N  (\Ical-\gamma_i \widetilde{\Tcal}_i)\circ \HB \notag \\
    &= \frac{\sigma^2}{1-\gamma_0 R^2} \sum_{t=1}^N \gamma_t^2 \prod_{i=t+1}^N  (\IB-\gamma_i \HB)^2 \HB \notag \\
    &\preceq \frac{\sigma^2}{1-\gamma_0 R^2} \underbrace{\sum_{t=1}^N \gamma_t^2 \prod_{i=t+1}^N  (\IB-\gamma_i \HB) \HB}_{(*)}. \label{eq:C-upper-bound-expanded}
\end{align}
Now recalling \eqref{eq:geometry-tail-decay-lr}, we have 
\begin{align}
    (*) 
    &= \gamma_0^2 \sum_{i=1}^{s+K} \Big( \IB - \gamma_0 \HB \Big)^{s+K - i} \prod_{j=1}^{L-1} \Big(\IB-\frac{\gamma_0}{2^j}\HB \Big)^{K} \HB \notag \\
    &\qquad + \sum_{\ell=1}^{L - 1} \Big(\frac{\gamma_0}{2^\ell}\Big)^2 \sum_{i=1}^{K} \Big( \IB - \frac{\gamma_0}{2^\ell}\HB \Big)^{K - i} \prod_{j=\ell+1}^{L-1} \Big(\IB-\frac{\gamma_0}{2^j}\HB \Big)^{K} \HB \notag \\
    &= \gamma_0 \bigg( \IB - \Big( \IB - \gamma_0 \HB \Big)^{s+K} \bigg) \prod_{j=1}^{L-1} \Big(\IB-\frac{\gamma_0}{2^j}\HB \Big)^{K} \notag \\
    &\qquad +\sum_{\ell=1}^{L - 1} \frac{\gamma_0}{2^\ell} \left( \IB - \Big( \IB - \frac{\gamma_0}{2^\ell}\HB \Big)^{K} \right) \prod_{j=\ell+1}^{L-1} \Big(\IB-\frac{\gamma_0}{2^j}\HB \Big)^{K}, \label{eq:C-upper-bound-expanded-part}
\end{align}
where we understand $\prod_{j=L}^{L-1}(\cdot) = 1$.
Define a scalar function
\[
f(x) := x\cdot \Big(1-\big(1-x\big)^{s+K}\Big) \cdot \prod_{j=1}^{L-1} \big(1-\frac{x}{2^j}\big)^K + \sum_{\ell=1}^{L-1} \frac{x}{2^\ell} \cdot \Big(1-\big(1-\frac{x}{2^{\ell}}\big)^K\Big) \cdot\prod_{j=\ell+1}^{L-1} \big(1-\frac{x}{2^{j}}\big)^K, 
\]
then applying $f(\cdot)$ to $\gamma_0 \HB$ in each diagonal entry, and using Lemma \ref{lemma:scalar-func-upper-bound}, we have
\[
f(\gamma_0 \HB) \preceq  \frac{8}{K} \IB_{0:k^*} + 2 \gamma_0 \HB_{k^*:k^\dagger}+  {2\gamma_0^2 (s+K)} \HB^2_{k^\dagger:\infty}, 
\]
for arbitrary $k^*$ and $k^\dagger$.
Now using \eqref{eq:C-upper-bound-expanded} and \eqref{eq:C-upper-bound-expanded-part}, we obtain
\[
\CB_{N} \preceq \frac{\sigma^2}{1-\gamma_0 R^2} \cdot f(\gamma_0 \HB)\cdot  \HB^\inv
\preceq \frac{8\sigma^2}{1-\gamma_0 R^2} \left( \frac{1}{K} \HB^\inv_{0:k^*} + \gamma_0 \IB_{k^*:k^\dagger} + \gamma_0^2 (s+K) \HB_{k^*:\infty} \right),
\]
and consequently, 
\[
\abracket{\HB,\CB_{N} } \le \frac{8\sigma^2}{1-\gamma_0 R^2}\left(\frac{k^*}{K} + \gamma_0 \sum_{k^* < i \le k^\dagger}\lambda_i + \gamma_0^2 (s+K) \sum_{i>k^\dagger}\lambda_i^2\right),
\]
where $k^*$ and $k^\dagger$ can be arbitrary.

\end{proof}

\begin{lemma}\label{lemma:scalar-func-upper-bound}
Suppose $s\ge 0, K \ge 1$ and $x\in(0,1]$.
For the scalar function 
\[
f(x) := x\cdot \Big(1-\big(1-x\big)^{s+K}\Big) \cdot \prod_{j=1}^{L-1} \big(1-\frac{x}{2^j}\big)^K + \sum_{\ell=1}^{L-1} \frac{x}{2^\ell} \cdot \Big(1-\big(1-\frac{x}{2^{\ell}}\big)^K\Big) \cdot\prod_{j=\ell+1}^{L-1} \big(1-\frac{x}{2^{j}}\big)^K,
\]
we have
\[ f(x) \le \min\cbracket{2(s+K)x^2,\ 2x,\ \frac{8}{K}}. \]
\end{lemma}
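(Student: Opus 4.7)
The bound is a minimum of three quantities, so I would establish the three inequalities $f(x)\le 2(s+K)x^2$, $f(x)\le 2x$, and $f(x)\le 8/K$ separately. The first two are straightforward term-by-term estimates; the third is the crux and requires exploiting the telescoping structure hidden in the product factors $\prod_j(1-x/2^j)^K$.

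\textbf{The two elementary bounds.}
Write $f(x)=T_0+\sum_{\ell=1}^{L-1}T_\ell$ where $T_\ell$ denotes the $\ell$-th summand, and recall $1-(1-y)^n\le\min\{1,ny\}$ for $y\in[0,1]$. Using the crude bound $1-(1-y)^n\le 1$ together with $\prod_j(1-x/2^j)^K\le 1$ gives $T_0\le x$ and $T_\ell\le x/2^\ell$, so $f(x)\le x(1+\sum_{\ell\ge 1}2^{-\ell})\le 2x$. Using instead the sharper $1-(1-y)^n\le ny$ gives $T_0\le(s+K)x^2$ and $T_\ell\le Kx^2/4^\ell$, and summing the geometric series yields $f(x)\le(s+K)x^2+Kx^2/3\le 2(s+K)x^2$ because $K/3\le s+K$.

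\textbf{Setup and completion of the $8/K$ bound.}
Define $\rho_\ell:=(1-x/2^\ell)^K$ and $P_\ell:=\prod_{j=\ell}^{L-1}\rho_j$, with $P_L=1$. The key observation is the telescope $(1-\rho_\ell)P_{\ell+1}=P_{\ell+1}-P_\ell$, so that $T_\ell=(x/2^\ell)(P_{\ell+1}-P_\ell)$ for $\ell\ge 1$. An Abel summation yields
\[
\sum_{\ell=1}^{L-1}T_\ell=\frac{x}{2^{L-1}}-\frac{x}{2}P_1+\sum_{\ell=2}^{L-1}\frac{x}{2^\ell}P_\ell,
\]
and combining with the crude estimate $T_0\le xP_1$ (using $1-(1-x)^{s+K}\le 1$) collapses this to
\[
f(x)\le\frac{x}{2^{L-1}}+\sum_{\ell=1}^{L-1}\frac{x}{2^\ell}P_\ell.
\]
I would then bound $P_\ell\le\exp\!\bigl(-K\sum_{j=\ell}^{L-1}x/2^j\bigr)\le e^{-Kx/2^\ell}$ (using $\sum_{j=\ell}^{L-1}2^{-j}\ge 2^{-\ell}$). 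Setting $u_\ell:=Kx/2^\ell$, the sum equals $K^{-1}\sum_\ell u_\ell e^{-u_\ell}$, and since $(u_\ell)$ is a geometric sequence with ratio $1/2$, I would split at the index $\ell^*$ where $u_\ell$ first drops below $1$: for $\ell\le\ell^*$ the terms $u_\ell e^{-u_\ell}\le 2^j e^{-2^j}$ form a doubly-exponentially decaying series summing to less than $1$, while for $\ell>\ell^*$ we have $u_\ell e^{-u_\ell}\le u_\ell$ with geometric total $2u_{\ell^*+1}\le 2$. Hence $\sum_\ell(x/2^\ell)P_\ell\le 3/K$. Finally, $x/2^{L-1}\le 2/K$ via the ambient relation $2^L=KL\ge K$ (which holds in the context of the main proof, where $L=\log_2(N-s)$ and $K=(N-s)/L$), giving $f(x)\le 5/K\le 8/K$.

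\textbf{Main obstacle.}
The delicate step is the $8/K$ bound: naive bounds on the individual summands give only $O(x)$, which is too coarse. The telescoping identity $(1-\rho_\ell)P_{\ell+1}=P_{\ell+1}-P_\ell$ is the essential tool, since it lets Abel summation redistribute the weights $x/2^\ell$ onto the product tails $P_\ell$ themselves, which already decay like $e^{-Kx/2^\ell}$. The interplay between the geometrically shrinking weights and the exponentially decaying product tails is what produces the correct $O(1/K)$ scaling; the boundary term $x/2^{L-1}$ is benign only because of the implicit relation between $K$ and $L$ in the ambient problem.
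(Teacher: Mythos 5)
Your proof is correct and, for the crux $8/K$ bound, takes a genuinely different route from the paper's.

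The two elementary bounds ($2x$ and $2(s+K)x^2$) are established exactly as in the paper: crudely dropping the product factors and using $1-(1-y)^n\le\min\{1,ny\}$ on the middle factors. For the $8/K$ bound, the paper first disposes of $x<2/K$ via $f(x)\le 2x\le 4/K$, then for $x\in[2/K,1]$ chooses $\ell^* = \lfloor\log(Kx)\rfloor-1$, drops the middle factor $1-(1-\cdot)^K\le 1$, keeps only the single factor $(1-x/2^{\ell+1})^K$ from each product, and bounds the head ($\ell\le\ell^*$) using $(1-2^{\ell^*-\ell}/K)^K\le e^{-2^{\ell^*-\ell}}$ and the tail ($\ell>\ell^*$) using geometric decay of $x/2^\ell$. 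Your argument instead exploits the telescope $(1-\rho_\ell)P_{\ell+1}=P_{\ell+1}-P_\ell$ and Abel-sums, collapsing $f(x)$ to $\frac{x}{2^{L-1}}+\sum_\ell\frac{x}{2^\ell}P_\ell$ before bounding $P_\ell\le e^{-Kx/2^\ell}$ and splitting the resulting series $\sum u_\ell e^{-u_\ell}$ at the unimodal peak. Your reformulation is cleaner (no case split on $x$, a tighter $5/K$ constant, and it makes the source of the $1/K$ scaling — the interplay between geometric weights and exponentially decaying tails — more transparent), at the cost of invoking the ambient relation $2^L\ge K$ to control the boundary term $x/2^{L-1}$. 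Be aware that the paper's proof secretly relies on the same relation (it is needed for $\ell^*\in[0,L-1)$), so neither proof closes the gap that the lemma statement omits any hypothesis linking $L$ to $K$; in the context where the lemma is invoked, $K=2^L/L$, so the relation holds.

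One micro-nit: in the $2(s+K)x^2$ bound you should note the geometric sum $\sum_{\ell\ge1}K x^2/4^\ell=Kx^2/3$ is absorbed into $(s+K)x^2$ because $K/3\le s+K$ — you state this, so it is fine, but it is worth flagging as the only place where $s\ge 0$ is actually needed for that bound.
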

\begin{proof}
We show each upper bound separately.
\begin{itemize}
    \item 
For $x\in(0,1]$, we have
\( (1-x)^{s+K}\ge 1- (s+K)x\) and \( (1-x)^K \ge 1- Kx\), 
which lead to
\begin{equation*}
f(x) \le x\cdot (s+K)x\cdot 1 + \sum_{\ell=1}^{L-1} \frac{x}{2^\ell} \cdot \frac{K x}{2^\ell}\cdot 1
\le 2(s+K) x^2. 
\end{equation*}

\item Clearly, for $x\in(0,1]$ we have:
\(
    f(x) \le x \cdot 1 \cdot 1 + \sum_{\ell=1}^{L-1} \frac{x}{2^\ell} \cdot 1 \cdot 1
    \le 2x.\)

\item For $x\in (0, 2/K)$, by the previous bound we have 
\(
f(x) \le 2x \le 4/K.
\)

As for $x\in [2 / K, 1]$, there is an 
\[\ell^* := \lfloor \log (Kx) \rfloor - 1  \in [0,\ L-1 ), \] 
such that 
    \[ {2^{\ell^* + 1}}/{K} \le x < {2^{\ell^*+2}}/{K}.\]
    by which and the definition of $f(x)$ we obtain:
    \begin{align}
        f(x) &\le x \cdot 1 \cdot \prod_{j=1}^{L-1} \big(1-\frac{x}{2^j}\big)^K + \sum_{\ell=1}^{L-1} \frac{x}{2^\ell} \cdot 1 \cdot \prod_{j=\ell+1}^{L-1} \big(1-\frac{x}{2^{j}}\big)^K \notag \\
        &= \sum_{\ell=0}^{\ell^* } \frac{x}{2^\ell}  \cdot \prod_{j=\ell+1}^{L-1} \big(1-\frac{x}{2^{j}}\big)^K + \sum_{\ell=\ell^* + 1}^{L-1} \frac{x}{2^\ell} \cdot \prod_{j=\ell+1}^{L-1} \big(1-\frac{x}{2^{j}}\big)^K \notag\\
        &\le \sum_{\ell=0}^{\ell^* } \frac{x}{2^\ell}  \cdot \big(1-\frac{x}{2^{\ell+1}}\big)^K + \sum_{\ell=\ell^*  +1}^{L-1} \frac{x}{2^\ell} \cdot 1 \notag\\
        &\le \sum_{\ell=0}^{\ell^*} \frac{2^{\ell^*-\ell+2}}{K}  \cdot \Big(1-\frac{2^{\ell^*-\ell}}{K}\Big)^K + \sum_{\ell=\ell^* + 1}^{L-1} \frac{2^{\ell^*-\ell+2}}{K} \cdot 1 \notag\\
        &\le \frac{4}{K} \cdot \sum_{\ell=0}^{\ell^*} 2^{\ell^*-\ell}\cdot e^{-2^{\ell^*-\ell}} + \frac{4}{K} \notag\\
        &\le \frac{4}{K}\cdot 1  +\frac{4}{K} 
        = \frac{8}{K}.  \notag
    \end{align}
    In sum we have $f(x) \le 8/K$ holds for every $x \in (0,1]$.
\end{itemize}

\end{proof}

\subsection{Preparation: Bias Upper Bound in a Single Phase}\label{sec:bias-bound-single-phase}
In this section we consider running bias iterates with constant stepsize $\gamma$ for $n$ steps. We note this process corresponds to SGD in one phase with constant stepsize. For simplicity we denote the initial bias iterate as $\BB_0$.
Then the bias iterates are updated according to
\begin{equation}
    \BB_t = (\Ical - \gamma \Tcal)\circ \BB_{t-1},\quad
    t=1,2,\dots,n. \label{eq:bias-iter-const-stepsize}
\end{equation}
For simplicity, let us define 
\begin{gather}
    \widehat{\HB}_{t} := \frac{1}{\gamma t} \IB_{0:k^*} + \HB_{k^*:\infty},\quad t\ge 1,
\end{gather}
where $k^* \ge 0$ could be any integer.

From \eqref{eq:bias-iter-const-stepsize} we have 
\begin{align}
    \BB_t &\preceq (\Ical - \gamma \widetilde{\Tcal})\circ \BB_{t-1} + \gamma^2 \Mcal \circ \BB_{t-1} \notag\\
    &\preceq (\Ical - \gamma \widetilde{\Tcal})^t \circ \BB_{0} + \gamma^2\sum_{i=0}^{t-1} (\Ical - \gamma \widetilde{\Tcal})^{t-1-i} \circ \Mcal \circ \BB_{i} \notag\\
    &\preceq (\Ical - \gamma \widetilde{\Tcal})^t \circ \BB_{0} + \alpha \gamma^2\sum_{i=0}^{t-1} (\Ical - \gamma \widetilde{\Tcal})^{t-1-i} \circ \HB \cdot \abracket{\HB, \BB_i} \notag \\
    &= (\Ical - \gamma \widetilde{\Tcal})^t \circ \BB_{0} + \alpha \gamma^2\sum_{i=0}^{t-1} (\IB - \gamma \HB)^{2(t-1-i)} \HB \cdot \abracket{\HB, \BB_i}\label{eq:bias-iter-expanded},
\end{align}
where the inequality also holds for $t=0$ with the understanding that $\sum_{i=0}^{-1}\cdot = 0$.

The following lemma provides a crude upper bound on $\abracket{\HB,\BB_n}$.
\begin{lemma}\label{lemma:HB-upper-bound}
Suppose Assumptions \ref{assump:second-moment} and \ref{assump:fourth-moment} hold.
Consider \eqref{eq:bias-iter-const-stepsize}.
Suppose $n\ge 1$ and $\gamma < 1/(2\alpha\tr(\HB)\log n)$. We have
    \begin{align*}
     \abracket{\HB, \BB_n}
    \le \frac{2}{1-2\alpha \gamma \tr (\HB) \log n } \cdot  \abracket{\frac{1}{\gamma n}\IB_{0:k^*} + \HB_{k^*:\infty}, \BB_0  },
    \end{align*}
    where $k^*$ can be arbitrary.
\end{lemma}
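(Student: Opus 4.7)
The plan is to reduce the matrix recursion \eqref{eq:bias-iter-expanded} to a scalar Gronwall-type recursion for $a_t := \langle\HB,\BB_t\rangle$, solve the scalar recursion by a self-governed induction, and finally diagonalize in the eigenbasis of $\HB$ to recognize the resulting kernel as $\widehat\HB_n$. Starting from \eqref{eq:bias-iter-expanded}, I take an inner product with $\HB$ and apply the elementary inequality $\max_{u\in[0,1]}(1-u)^{2m}u \leq 1/(2m+1)$ to the eigenvalues of $\gamma\HB$; this turns $\gamma^2 \tr\bigl((\IB-\gamma\HB)^{2(t-1-i)}\HB^2\bigr)$ into $\gamma\tr(\HB)/(t-i)$ and produces the recursion stated as \eqref{eq:HB-single-phase-expanded}, namely
\[a_t \leq b_t + c\sum_{i=0}^{t-1}\frac{a_i}{t-i},\qquad b_t := \langle(\IB-\gamma\HB)^{2t}\HB,\BB_0\rangle,\quad c := \alpha\gamma\tr(\HB).\]

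Next, I show by strong induction on $t\leq n$ the self-governed bound
\[a_t \leq b_t + \frac{c}{1-2c\log n}\sum_{i=0}^{t-1}\frac{b_i}{t-i}.\]
Plugging the hypothesis into the recursion produces a double sum that I rearrange via partial fractions on $1/((m-k)k) = (1/m)(1/k + 1/(m-k))$:
\[\sum_{i=1}^{t-1}\frac{1}{t-i}\sum_{j=0}^{i-1}\frac{b_j}{i-j} = \sum_{j=0}^{t-2} b_j \sum_{k=1}^{t-j-1}\frac{1}{(t-j-k)k} \leq 2\log(t)\sum_{j=0}^{t-1}\frac{b_j}{t-j}.\]
The hypothesis $\gamma < 1/(2\alpha\tr(\HB)\log n)$ gives $2c\log n < 1$, which closes the implicit Neumann series $\sum_k (2c\log n)^k$ and yields exactly the prefactor $1/(1-2c\log n)$ on the second term.

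I then specialize to $t=n$ and diagonalize in the eigenbasis of $\HB$; writing $\beta_k := \vB_k^\top\BB_0\vB_k$, the right-hand side decomposes per eigenvalue as
\[\lambda_k\beta_k\Bigl[(1-\gamma\lambda_k)^{2n} + \tfrac{c}{1-2c\log n}\sum_{i=0}^{n-1}\tfrac{(1-\gamma\lambda_k)^{2i}}{n-i}\Bigr].\]
For $k > k^*$ I use the trivial bound $(1-\gamma\lambda_k)^{2i}\lambda_k \leq \lambda_k$ together with $H_n \leq 2\log n$; the bracket collapses to $1/(1-2c\log n)$, giving a contribution $\leq \lambda_k\beta_k/(1-2c\log n)$. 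For $k \leq k^*$ I use the sharp bound $(1-\gamma\lambda_k)^{2i}\lambda_k \leq 1/(\gamma(2i+1))$ together with the combinatorial estimate $\sum_{i=0}^{n-1} 1/((2i+1)(n-i)) \leq 3\log(n)/n$ (obtained by splitting the sum at $i = n/2$), giving a contribution $\leq (3/2)\beta_k/((\gamma n)(1-2c\log n))$. Summing over $k$ yields the stated inequality with the prefactor $2/(1-2c\log n)$.

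The main obstacle is the self-governed induction: the clean prefactor $2/(1-2c\log n)$ requires that the $2\log n$ from the combinatorial estimate combine exactly with the Neumann series $\sum_k(2c\log n)^k$, so the induction must be set up with the right ansatz $a_t \leq b_t + \mu\sum_{i<t}b_i/(t-i)$ rather than the naive $a_t \leq M\langle\widehat\HB_t,\BB_0\rangle$ (which fails because $\widehat\HB_t$ already depends on the free parameter $k^*$ and because $\widehat\HB_0$ is singular on $\IB_{0:k^*}$). A secondary subtlety is that $k^*$ is arbitrary in the statement, so the two eigenvalue bounds in the last step must be applicable in parallel without assuming any relationship between $\lambda_{k^*}$ and $1/(\gamma n)$.
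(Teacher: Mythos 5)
Your proposal is correct and follows essentially the same route as the paper's proof of Lemma~\ref{lemma:HB-upper-bound}: both derive the scalar Gronwall recursion $a_t \le b_t + c\sum_{i<t} a_i/(t-i)$ from \eqref{eq:bias-iter-expanded}, close it via the partial-fraction rearrangement of the double sum, and then control $\sum_{i<n}(\IB-\gamma\HB)^{2i}\HB/(n-i)$ by splitting at $i=n/2$. The only (cosmetic) differences are that you package the ``self-governed'' step as a formal strong induction with an explicit ansatz $a_t\le b_t+\mu\sum_{i<t}b_i/(t-i)$ rather than the paper's one-shot rearrangement of $\sum_t a_t/(n-t)$, and you carry out the final estimate eigenvalue-by-eigenvalue rather than at the matrix level; both paths give identical inequalities and constants (with $n\ge 2$ understood, as in the paper).
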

\begin{proof}
Notice $(1-x)^t \le 1/((t+1)x)$ for $x\in (0,1)$, then
\( (\IB - \gamma \HB)^{2t} \HB
\preceq \frac{1}{\gamma (t+1)} \IB.
\)
Inserting this into \eqref{eq:bias-iter-expanded} and setting $t=n$, we obtain
\begin{align}
    \BB_t \preceq (\Ical - \gamma \widetilde{\Tcal})^t \circ \BB_{0} + \alpha\gamma \sum_{i=0}^{t-1} \frac{\abracket{\HB, \BB_t} }{t-i}\cdot \IB,\quad t\ge 1, \label{eq:B-expanded-first-order-bound}
\end{align}
and thus
\begin{align}
    \abracket{\HB, \BB_t} 
    &\le \abracket{(\Ical - \gamma \widetilde{\Tcal})^t \circ\HB, \BB_0} + \alpha\gamma \tr(\HB) \sum_{i=0}^{t-1} \frac{\abracket{\HB, \BB_t} }{t-i} \notag\\
    &= \abracket{(\IB-\gamma\HB)^{2t}\HB, \BB_0} + \alpha\gamma \tr(\HB) \sum_{i=0}^{t-1} \frac{\abracket{\HB, \BB_t} }{t-i},\quad n\ge 1. \label{eq:HB-expanded-first-order-bound}
\end{align}
Recursively applying \eqref{eq:HB-expanded-first-order-bound} to each $\abracket{\HB, \BB_t}$, we obtain 
\begin{align*}
    \sum_{t=0}^{n-1} \frac{\abracket{\HB, \BB_t} }{n-t}
    &\le \bigg\langle{ \sum_{t=0}^{n-1} \frac{(\IB-\gamma\HB)^{2t}\HB}{n-t}, \BB_0 }\bigg\rangle + \alpha \gamma \tr (\HB) \sum_{t=0}^{n-1} \sum_{i=0}^{t-1}\frac{\abracket{\HB, \BB_i} }{(n-t)(t-i)} \\
    &= \bigg\langle{ \sum_{t=0}^{n-1} \frac{(\IB-\gamma\HB)^{2t}\HB}{n-t}, \BB_0 }\bigg\rangle + \alpha \gamma \tr (\HB) \sum_{i=0}^{n-2} \frac{\abracket{\HB, \BB_i} }{n-i} \sum_{t=i+1}^{n-1}\bracket{\frac{1}{n-t} + \frac{1}{t-i}} \\
    &\le \bigg\langle{ \sum_{t=0}^{n-1} \frac{(\IB-\gamma\HB)^{2t}\HB}{n-t}, \BB_0 }\bigg\rangle + 2\alpha \gamma \tr (\HB) \log n \cdot \sum_{i=0}^{n-1} \frac{\abracket{\HB, \BB_i} }{n-i},
\end{align*}
which implies that for $\gamma < 1/(2\alpha\tr(\HB)\log n)$, we have 
\begin{align}
    \sum_{t=0}^{n-1} \frac{\abracket{\HB, \BB_t} }{n-t}
    &\le \frac{1}{1-2\alpha \gamma \tr (\HB) \log n } \cdot  \bigg\langle{ \underbrace{\sum_{t=0}^{n-1} \frac{(\IB-\gamma\HB)^{2t}\HB}{n-t}}_{(*)},\ \BB_0 \bigg\rangle}. \label{eq:HB-hybrid-sum}
\end{align}
We would like to acknowledge \citet{varre2021last}, from where we learn the trick to reach \eqref{eq:HB-hybrid-sum}.
Furthermore, we can bound $(*)$ as follows:
\begin{align}
    (*)
    &= \sum_{t=0}^{n/2-1} \frac{(\IB-\gamma\HB)^{2t}\HB}{n-t} + \sum_{t=n/2}^{n-1} \frac{(\IB-\gamma\HB)^{2t}\HB}{n-t} \notag \\
    &\le \frac{2}{n} \sum_{t=0}^{n/2-1} {(\IB-\gamma\HB)^{2t}\HB} + (\IB-\gamma\HB)^{n}\HB \sum_{t=n/2}^{n-1} \frac{1}{n-t} \notag \\
    &\le 2\cdot \frac{\IB - (\IB-\gamma\HB)^{n}}{\gamma n} + \log n \cdot (\IB-\gamma\HB)^{n}\HB \notag \\
    &\le 2\log n \cdot \bracket{ \frac{\IB - (\IB-\gamma\HB)^{n}}{\gamma n} +  (\IB-\gamma\HB)^{n}\HB }.\label{eq:H-hybrid-sum}
\end{align}
Finally, inserting \eqref{eq:HB-hybrid-sum} and \eqref{eq:H-hybrid-sum} into \eqref{eq:HB-expanded-first-order-bound}, we obtain
\begin{align*}
    \abracket{\HB, \BB_n} 
    &\le \abracket{(\IB-\gamma\HB)^{2n}\HB, \BB_0}  + \frac{2\alpha \gamma \tr(\HB)\log n}{1-2\alpha \gamma \tr (\HB) \log n }  \bigg\langle{ \frac{\IB - (\IB-\gamma\HB)^{n}}{\gamma n} +  (\IB-\gamma\HB)^{n}\HB, \BB_0 }\bigg\rangle  \\
    &\le \frac{1}{1-2\alpha \gamma \tr (\HB) \log n } \cdot  \abracket{ \frac{\IB - (\IB-\gamma\HB)^n}{\gamma n} +  (\IB-\gamma\HB)^{n}\HB, \BB_0  }  \\
    &\le \frac{2}{1-2\alpha \gamma \tr (\HB) \log n } \cdot  \abracket{\frac{1}{\gamma n}\IB_{0:k^*} + \HB_{k^*:\infty}, \BB_0  },
\end{align*}
where the last inequality is because
\begin{equation}\label{eq:H-split}
\frac{\IB - (\IB-\gamma\HB)^n}{\gamma n} \preceq \frac{1}{\gamma n}\IB_{0:k^*} + \HB_{k^*:\infty},\qquad
(\IB-\gamma\HB)^{n}\HB \preceq \frac{1}{\gamma n}\IB_{0:k^*} + \HB_{k^*:\infty}.
\end{equation}
\end{proof}
The following lemma provides an upper bound for $\BB_n$.
\begin{lemma}\label{lemma:B-upper-bound}
Suppose Assumptions \ref{assump:second-moment} and \ref{assump:fourth-moment} hold.
Consider \eqref{eq:bias-iter-const-stepsize}.
Suppose $n \ge 2$ and $\gamma < 1/(2\alpha\tr(\HB)\log n)$. We have
\[
 \BB_n 
    \preceq (\IB - \gamma \HB)^n \cdot \BB_{0}\cdot (\IB - \gamma \HB)^n + \frac{3\alpha \gamma^2 n\log n}{1-2\alpha\gamma\tr(\HB)\log n}  \cdot \big\langle{\widehat{\HB}_{n}, \BB_0}\big\rangle \cdot \widehat{\HB}_{n}.
\]
\end{lemma}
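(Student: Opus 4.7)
The plan is to start from the expanded recursion \eqref{eq:bias-iter-expanded} specialized to $t=n$,
\[
\BB_n \preceq (\IB-\gamma\HB)^n \BB_0 (\IB-\gamma\HB)^n + \alpha\gamma^2 \sum_{i=0}^{n-1} (\IB-\gamma\HB)^{2(n-1-i)} \HB \cdot \langle\HB, \BB_i\rangle,
\]
which already produces the first term of the target bound. It remains to show that the residual sum $S$ is dominated, in the PSD sense, by $O(n\log n)/(1-2\alpha\gamma\tr(\HB)\log n) \cdot \langle\widehat{\HB}_n, \BB_0\rangle\,\widehat{\HB}_n$.

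For each $i \ge 1$, I would apply Lemma \ref{lemma:HB-upper-bound} (using the same $k^*$ that parametrizes $\widehat{\HB}_n$ and $\log i \le \log n$) to obtain $\langle\HB,\BB_i\rangle \le C \langle\widehat{\HB}_i, \BB_0\rangle$ uniformly with $C := 2/(1-2\alpha\gamma\tr(\HB)\log n)$, then use the elementary inclusion $\widehat{\HB}_i \preceq (n/i)\widehat{\HB}_n$ to rewrite this as $\langle\widehat{\HB}_i, \BB_0\rangle \le (n/i)\langle\widehat{\HB}_n, \BB_0\rangle$. For the single $i=0$ term I would use the two PSD inclusions $\HB \preceq \widehat{\HB}_1 \preceq n\widehat{\HB}_n$ (the first of which follows from $\gamma\lambda_1 < 1$) together with $(\IB-\gamma\HB)^{2(n-1)}\HB \preceq \widehat{\HB}_n$, the latter verified eigenvalue-wise via the scalar inequality $(1-\gamma\lambda)^{2(n-1)}\lambda \le \min(\lambda,\,1/(n\gamma))$ for $n \ge 2$.

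These reductions collapse $S$ to $n\langle\widehat{\HB}_n, \BB_0\rangle$ times the matrix $\widehat{\HB}_n + C \sum_{i=1}^{n-1} (\IB-\gamma\HB)^{2(n-1-i)}\HB/i$, so the remaining crux is the PSD inequality
\[
\sum_{i=1}^{n-1} \frac{(\IB-\gamma\HB)^{2(n-1-i)}\HB}{i} \preceq O(\log n) \cdot \widehat{\HB}_n.
\]
I would prove this eigenvalue-wise: after reindexing by $j = n-1-i$ and splitting at $j = \lfloor (n-1)/2 \rfloor$, the early block is controlled by the geometric sum $\sum_j (1-\gamma\lambda)^{2j}\lambda \le 1/\gamma$ (yielding an $O(1/(n\gamma))$ contribution), while the tail block is controlled via $(1-\gamma\lambda)^{2j}\lambda \le \min(\lambda,\,1/(2j\gamma))$ combined with a harmonic sum. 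Together these produce a scalar bound of $O(\log n)\cdot\min(\lambda,\,1/(n\gamma))$. The main obstacle here is that $\widehat{\HB}_n$ is parametrized by an \emph{arbitrary} threshold $k^*$, so the scalar bound must dominate the corresponding eigenvalue $\widehat\lambda$ for every choice of $k^*$; fortunately the inequality $\min(\lambda,\,1/(n\gamma)) \le \widehat\lambda$ holds universally, as is seen by a case split on $\lambda \lessgtr 1/(n\gamma)$ against the definition of $\widehat{\HB}_n$. Assembling the pieces and absorbing all constants into the factor $1/(1-2\alpha\gamma\tr(\HB)\log n)$ then yields the claimed inequality.
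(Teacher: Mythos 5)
Your proposal is correct and reaches the stated bound, but it organizes the estimate differently from the paper in one substantive way. You convert every inner product $\langle\widehat{\HB}_i,\BB_0\rangle$ to $(n/i)\langle\widehat{\HB}_n,\BB_0\rangle$ \emph{uniformly} (using $\widehat{\HB}_i\preceq(n/i)\widehat{\HB}_n$), which lets you factor the scalar $n\langle\widehat{\HB}_n,\BB_0\rangle$ out of the entire residual and reduces the problem to the pure operator inequality $\sum_{i=1}^{n-1}\frac{(\IB-\gamma\HB)^{2(n-1-i)}\HB}{i}\preceq O(\log n)\,\widehat{\HB}_n$, which you then verify eigenvalue by eigenvalue with a split at $n/2$ and the bounds $(1-\gamma\lambda)^{2j}\lambda\le\min(\lambda,1/(2j\gamma))$ plus a harmonic sum. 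The paper instead keeps $\BB_0$ in the estimate throughout: it splits the matrix-weighted sum $\sum_t(\IB-\gamma\HB)^{2(n-1-t)}\HB\langle\widehat{\HB}_t,\BB_0\rangle$ at $t=n/2$ and treats the two halves asymmetrically — for $t<n/2$ it freezes the operator factor at $(\IB-\gamma\HB)^n\HB$ and sums the matrices $\widehat{\HB}_t$ to $n(\log n-1)\widehat{\HB}_n$, while for $t\ge n/2$ it freezes $\langle\widehat{\HB}_t,\BB_0\rangle\le\langle\widehat{\HB}_{n/2},\BB_0\rangle\le 2\langle\widehat{\HB}_n,\BB_0\rangle$ and sums the geometric decay factors. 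Your uniform $(n/i)$ conversion is cleaner (it decouples the $\BB_0$-dependence early and turns the core of the argument into a scalar estimate), at the cost of being slightly lossier in the block where $i$ is large, but that loss is exactly compensated by the $1/i$ weight, so the two approaches produce the same order of constant. One small point to be careful about: when you invoke Lemma~\ref{lemma:HB-upper-bound} at $i=1$, you should confirm the $n=1$ instance of that lemma directly (the $\log n=0$ case degenerates), though the needed bound $\langle\HB,\BB_1\rangle\le 2\langle\widehat{\HB}_1,\BB_0\rangle$ is easily verified from the one-step recursion.
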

\begin{proof}
We bring Lemma \ref{lemma:HB-upper-bound} into \eqref{eq:bias-iter-expanded} to obtain 
\begin{align}
 \BB_n 
 &\preceq (\Ical - \gamma \widetilde{\Tcal})^n \circ \BB_{0} + \alpha \gamma^2 (\IB - \gamma \HB)^{2(n-1)} \HB \cdot \abracket{\HB, \BB_0} \notag \\
 &\quad + \alpha \gamma^2\sum_{t=1}^{n-1} (\IB - \gamma \HB)^{2(n-1-t)} \HB \cdot \abracket{\HB, \BB_t} \notag \\
    &\preceq (\Ical - \gamma \widetilde{\Tcal})^n \circ \BB_{0} + \alpha \gamma^2 \underbrace{(\IB - \gamma \HB)^{2(n-1)} \HB \cdot \abracket{\HB, \BB_0}}_{(*)} \notag \\
    &\qquad + \frac{2\alpha \gamma^2}{1-2\alpha\gamma\tr(\HB)\log n}\cdot \underbrace{\sum_{t=1}^{n-1} (\IB - \gamma \HB)^{2(n-1-t)} \HB \cdot \big\langle{\widehat{\HB}_{t}, \BB_0}\big\rangle}_{(**)}, \label{eq:B-upper-bound-0}
\end{align}
where 
\[
\widehat{\HB}_{t} := \frac{1}{\gamma t} \IB_{0:k^*} + \HB_{k^*:\infty}, \quad t \ge 1.
\]
For term $(**)$, we bound it by
\begin{align*}
(**)
&= \sum_{t=1}^{n/2-1} (\IB - \gamma \HB)^{2(n-1-t)} \HB \cdot \big\langle{\widehat{\HB}_{t}, \BB_0}\big\rangle  + \sum_{t=n/2}^{n-1} (\IB - \gamma \HB)^{2(n-1-t)} \HB \cdot \big\langle{\widehat{\HB}_{t}, \BB_0}\big\rangle \\
&\preceq \sum_{t=1}^{n/2-1} (\IB - \gamma \HB)^{n} \HB \cdot \big\langle{\widehat{\HB}_{t}, \BB_0}\big\rangle  + \sum_{t=n/2}^{n-1} (\IB - \gamma \HB)^{n-1-t} \HB \cdot \big\langle{\widehat{\HB}_{n/2}, \BB_0}\big\rangle \\
&=   (\IB - \gamma \HB)^{n} \HB \cdot \Big\langle{\sum_{t=1}^{n/2-1}\widehat{\HB}_{t}, \BB_0}\Big\rangle  +  \frac{  \IB - (\IB - \gamma \HB)^{n/2}}{\gamma} \cdot \big\langle{\widehat{\HB}_{n/2}, \BB_0}\big\rangle \\
&\preceq (\IB - \gamma \HB)^{n} \HB \cdot \big\langle{n (\log n-1) \cdot \widehat{\HB}_{n}, \BB_0}\big\rangle + 2 \cdot \frac{ \IB - (\IB - \gamma \HB)^n}{\gamma} \cdot \big\langle{\widehat{\HB}_{n}, \BB_0}\big\rangle \\
&\preceq n(\log n-1) \cdot\widehat{\HB}_{n}\cdot  \big\langle{\widehat{\HB}_{n}, \BB_0}\big\rangle + 2 \cdot\widehat{\HB}_{n}\cdot  \big\langle{\widehat{\HB}_{n}, \BB_0}\big\rangle\qquad (\text{use \eqref{eq:H-split}}) \notag\\
&= (n\log n-n+2) \cdot \big\langle{\widehat{\HB}_{n}, \BB_0}\big\rangle \cdot \widehat{\HB}_{n}. 
\end{align*}
In order to bound $(*)$, notice that for $n \ge 2$,
\[ (\IB - \gamma \HB)^{2(n-1)} \HB \preceq \frac{1}{2\gamma (n-1)} \IB_{0:k^*} + \HB_{k^*:\infty} \preceq \widehat{\HB}_n,\]
then we have
\begin{align*}
    (*) \preceq \widehat{\HB}_n \cdot \abracket{\HB, \BB_0} \preceq n \cdot \widehat{\HB}_n \cdot \big\langle{\widehat{\HB}_n, \BB_0}\big\rangle,
\end{align*}
where the last inequality is because $\gamma < 1/ \tr(\HB)$ implies $\lambda_1, \dots, \lambda_{k^*} < 1/ \gamma$ for every $k^*$.

Finally, bring the bounds on $(*)$ and $(**)$ into \eqref{eq:B-upper-bound-0}, we obtain 
\begin{align*}
    \BB_n &\preceq (\Ical - \gamma \widetilde{\Tcal})^n \circ \BB_0 + \alpha \gamma^2 \cdot n \cdot \widehat{\HB}_n \cdot \big\langle{\widehat{\HB}_n, \BB_0}\big\rangle \\
    &\quad + \frac{2\alpha \gamma^2}{1-2\alpha\gamma\tr(\HB)\log n}\cdot (n\log n-n+2) \cdot \big\langle{\widehat{\HB}_{n}, \BB_0}\big\rangle \cdot \widehat{\HB}_{n} \\
    &\preceq (\Ical - \gamma \widetilde{\Tcal})^n \circ \BB_0 + \frac{ 3 \alpha \gamma^2n\log n}{1-2\alpha\gamma\tr(\HB)\log n} \cdot \big\langle{\widehat{\HB}_{n}, \BB_0}\big\rangle \cdot \widehat{\HB}_{n}.
\end{align*}
Applying the definition of $\widetilde{\Tcal}$ completes the proof.

\end{proof}

\subsection{Bias Upper Bound}\label{append-sec:multi-phase}

Let us denote the bias iterate at the end of each stepsize-decaying phase by
\begin{equation}\label{eq:B-iter-phase}
\BB^{(\ell)} :=
\begin{cases}
\BB_0, & \ell = 0; \\
\BB_{s + K * \ell }, & \ell = 1,2,\dots, L.
\end{cases}
\end{equation}
According to \eqref{eq:geometry-tail-decay-lr} and the above definition, we can interpret the SGD iterates \eqref{eq:B-defi} as follows:
in phase $\ell=1$, SGD is initialized from $\BB_0$ and runs for $s+K$ steps with constant stepsize $\gamma^{(1)} := \gamma$, and output $\BB^{(1)}$; 
in phase $\ell \ge 2$, SGD is initialized from $\BB^{(\ell-1)}$ and runs for $K$ steps with constant stepsize 
\[
\gamma^{(\ell-1)} := \frac{\gamma_0}{2^{\ell-1}},
\]
and output $\BB^{(\ell)}$; the final output is $\BB^{(L)} = \BB_N$.

We now build an upper bound for bias error based on results obtained in Section \ref{sec:bias-bound-single-phase}.

\begin{lemma}\label{lemma:HB-ell-recursion}
Suppose Assumptions \ref{assump:second-moment} and \ref{assump:fourth-moment} hold.
Consider \eqref{eq:B-iter-phase} and \eqref{eq:B-defi}.
Suppose $\gamma_0 < 1/(3\alpha\tr(\HB)\log (s+K))$. We have 
\begin{align*}
\text{for}\ \ \ell  = 1, \ \  \big\langle\HB, \BB^{(1)} \big\rangle
&\le 6\cdot \Big\langle \frac{1}{\gamma_0 (s+K)} \IB_{0:k^*} + \HB_{k^*:\infty},\ \BB_0 \Big\rangle; \\
\text{for}\ \ \ell  \ge 2, \ \  \big\langle\HB, \BB^{(\ell)} \big\rangle
&\le 6\cdot \Big\langle{\frac{1}{\gamma^{(\ell-1)}K}\IB_{0:k^*} + \HB_{k^*:\infty},\ \BB^{(\ell-1)} }\Big\rangle,
\end{align*}
where $k^*$ can be arbitrary.
\end{lemma}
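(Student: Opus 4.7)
The plan is to apply the single-phase bound from Lemma~\ref{lemma:HB-upper-bound} to each of the $L$ constant-stepsize phases separately. The key observation is that the recursion \eqref{eq:B-defi} restricted to steps within the $\ell$-th phase is exactly the constant-stepsize bias recursion \eqref{eq:bias-iter-const-stepsize} studied in Section~\ref{sec:bias-bound-single-phase}, where the ``initial'' bias iterate is $\BB^{(\ell-1)}$, the stepsize is $\gamma^{(\ell-1)}$, and the number of steps is $s+K$ (if $\ell=1$) or $K$ (if $\ell \ge 2$).

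First I would handle the case $\ell = 1$. Here $\gamma^{(1)} = \gamma_0$ and the number of steps is $n = s+K$. The assumption $\gamma_0 < 1/(3\alpha\tr(\HB)\log(s+K))$ guarantees that the precondition $\gamma < 1/(2\alpha\tr(\HB)\log n)$ of Lemma~\ref{lemma:HB-upper-bound} is satisfied, and moreover implies
\[
2\alpha\gamma_0\tr(\HB)\log(s+K) \le 2/3,
\]
so the multiplicative constant $2/(1 - 2\alpha\gamma_0\tr(\HB)\log(s+K))$ is at most $6$. Applying Lemma~\ref{lemma:HB-upper-bound} then immediately yields the claimed inequality for $\BB^{(1)}$.

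Next I would treat the case $\ell \ge 2$. Here $\gamma^{(\ell-1)} = \gamma_0/2^{\ell-1} \le \gamma_0/2$ and the number of steps is $n = K \le s+K$. Since $\gamma^{(\ell-1)}\log K \le \gamma_0 \log(s+K)$, the precondition of Lemma~\ref{lemma:HB-upper-bound} still holds (in fact with room to spare), and the multiplicative factor $2/(1 - 2\alpha\gamma^{(\ell-1)}\tr(\HB)\log K)$ is again at most $6$. Applying Lemma~\ref{lemma:HB-upper-bound} with initial matrix $\BB^{(\ell-1)}$, stepsize $\gamma^{(\ell-1)}$, and $n = K$ gives the desired bound.

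I do not expect a real obstacle here: the lemma is essentially a bookkeeping consequence of Lemma~\ref{lemma:HB-upper-bound} applied phase-by-phase, together with a uniform estimate of the denominator $1 - 2\alpha\gamma\tr(\HB)\log n$ across all phases, for which the hypothesis $\gamma_0 < 1/(3\alpha\tr(\HB)\log(s+K))$ is tailor-made. The only thing that requires mild care is verifying that the same constant $6$ works uniformly over $\ell$; this follows because both $\gamma^{(\ell-1)}$ and $\log n$ are maximized in the first phase.
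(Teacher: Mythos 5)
Your proposal is correct and matches the paper's proof exactly: both apply Lemma~\ref{lemma:HB-upper-bound} phase-by-phase with $(\gamma, n, \BB_0) \to (\gamma_0, s+K, \BB_0)$ for $\ell=1$ and $(\gamma, n, \BB_0) \to (\gamma^{(\ell-1)}, K, \BB^{(\ell-1)})$ for $\ell \ge 2$, then use $\gamma_0 < 1/(3\alpha\tr(\HB)\log(s+K))$ to bound the prefactor $2/(1-2\alpha\gamma\tr(\HB)\log n)$ by $6$ uniformly.
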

\begin{proof}
For $\big\langle\HB, \BB^{(1)} \big\rangle$, we apply Lemma \ref{lemma:HB-upper-bound} with $\gamma \to \gamma_0$ and $n \to s+K$, and use the condition that $\alpha \gamma\tr(\HB) \log (s+K) \le 1/3$.

For $\big\langle\HB, \BB^{(\ell)} \big\rangle$ with $\ell \ge 2$, we apply Lemma \ref{lemma:HB-upper-bound} with $\gamma \to \gamma^{(\ell-1)}$, $n \to K$ and $\BB_0 \to \BB^{(\ell-1)}$, and use the condition that $\alpha \gamma\tr(\HB) \log (K) \le 1/3$.

\end{proof}

\begin{lemma}\label{lemma:B-ell-recursion}
Suppose Assumptions \ref{assump:second-moment} and \ref{assump:fourth-moment} hold.
Consider \eqref{eq:B-iter-phase} and \eqref{eq:B-defi}.
Suppose $\gamma_0 < 1/(3\alpha\tr(\HB)\log (s+K) )$. We have 
\begin{align*}
\text{for} &\ \ \ell  = 1, \ \ 
\BB^{(1)} 
\preceq (\IB - \gamma_0 \HB)^{s+K}\cdot \BB_0 \cdot (\IB - \gamma_0 \HB)^{s+K} \ + \\
&  9 \alpha \gamma_0^2 (s+K) \log (s+K)  \cdot \Big\langle{\frac{1}{\gamma_0 (s+K)}\IB_{0:k^*} + \HB_{k^*:\infty},\ \BB_0 }\Big\rangle \cdot \Big({\frac{1}{\gamma_0 (s+K)}\IB_{0:k^\dagger} + \HB_{k^\dagger:\infty}}\Big), \\
\text{for} &\ \ \ell  \ge 2, \ \ 
\BB^{(\ell)} 
\preceq (\IB - \gamma^{(\ell-1)}\HB)^K \cdot \BB^{(\ell-1)}\cdot (\IB - \gamma^{(\ell-1)}\HB)^K \ + \\
& 5 \alpha (\gamma^{(\ell-1)})^2 K \log K  \cdot \Big\langle{\frac{1}{\gamma^{(\ell-1)}K}\IB_{0:k^*} + \HB_{k^*:\infty},\ \BB^{(\ell-1)} }\Big\rangle \cdot \Big({\frac{1}{\gamma^{(\ell-1)}K}\IB_{0:k^\dagger} + \HB_{k^\dagger:\infty}}\Big),
\end{align*}
where $k^*$ and $k^\dagger$ can be arbitrary.
\end{lemma}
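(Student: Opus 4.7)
The plan is to reduce Lemma \ref{lemma:B-ell-recursion} to a two-index refinement of Lemma \ref{lemma:B-upper-bound}, applied phase by phase with the appropriate stepsize and length. First I would re-inspect the proof of Lemma \ref{lemma:B-upper-bound}: the inner product $\langle \widehat{\HB}_n, \BB_0 \rangle$ appearing there arises by invoking Lemma \ref{lemma:HB-upper-bound} to control $\langle \HB, \BB_t \rangle$, while the outer multiplier $\widehat{\HB}_n$ arises from absorbing the kernel $(\IB-\gamma\HB)^{2(n-1-t)}\HB$ in the variation-of-constants expansion \eqref{eq:bias-iter-expanded}. Since the two occurrences of $\widehat{\HB}_n$ come from independent uses of the ``split at $k^*$'' inequality \eqref{eq:H-split}, the same argument in fact delivers the refined bound
$$\BB_n \preceq (\IB-\gamma\HB)^n \BB_0 (\IB-\gamma\HB)^n + \frac{3\alpha \gamma^2 n \log n}{1-2\alpha\gamma\tr(\HB)\log n}\cdot \big\langle \widetilde{\HB}_n^{k^*},\, \BB_0\big\rangle\cdot \widetilde{\HB}_n^{k^\dagger},$$
where $\widetilde{\HB}_n^{j} := \tfrac{1}{\gamma n}\IB_{0:j} + \HB_{j:\infty}$ and the indices $k^*, k^\dagger$ can be chosen independently.

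With this refinement in hand, I would handle $\ell = 1$ by substituting $\gamma = \gamma_0$, $n = s+K$, and initial iterate $\BB_0$. The hypothesis $\gamma_0 < 1/(3\alpha\tr(\HB)\log(s+K))$ gives $1 - 2\alpha\gamma_0\tr(\HB)\log(s+K) \ge 1/3$, so the prefactor is bounded by $9\alpha$, which matches the constant in the stated $\ell=1$ bound. For $\ell \ge 2$, I would substitute $\gamma = \gamma^{(\ell-1)} = \gamma_0/2^{\ell-1} \le \gamma_0/2$, $n = K$, and initial iterate $\BB^{(\ell-1)}$; since $\gamma^{(\ell-1)} \le \gamma_0/2$ and $\log K \le \log(s+K)$, the denominator improves to $1 - \alpha\gamma_0\tr(\HB)\log(s+K) \ge 2/3$, so the prefactor is at most $9\alpha/2 \le 5\alpha$, matching the constant stated for $\ell \ge 2$.

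The main obstacle is verifying the two-index refinement itself: the existing statement of Lemma \ref{lemma:B-upper-bound} fixes a single $k^*$, so I must re-run the computations of term $(*)$ and term $(**)$ in its proof while keeping the two indices distinct. Concretely, in the step $(\IB-\gamma\HB)^{2(n-1)}\HB \preceq \widetilde{\HB}_n^{k^\dagger}$ the outer splitting uses $k^\dagger$, whereas the inner application of Lemma \ref{lemma:HB-upper-bound} to each $\langle \HB, \BB_t\rangle$ is free to use $k^*$; once this is tracked carefully the remainder of the argument is unchanged. After this refinement, the two claimed bounds of Lemma \ref{lemma:B-ell-recursion} follow by direct substitution and the two constant simplifications above.
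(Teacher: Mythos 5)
Your proof is correct and follows essentially the same route as the paper: apply Lemma \ref{lemma:B-upper-bound} phase-by-phase with $(\gamma,n) = (\gamma_0, s+K)$ for $\ell=1$ and $(\gamma,n)=(\gamma^{(\ell-1)}, K)$ for $\ell\ge 2$, then bound the denominator using $\gamma_0 < 1/(3\alpha\tr(\HB)\log(s+K))$. You are right to flag that Lemma \ref{lemma:B-upper-bound} as stated tracks only a single index $k^*$, while Lemma \ref{lemma:B-ell-recursion} needs two; the paper's one-line proof glosses over this, whereas you correctly observe that the inner $\langle\widehat{\HB}_n,\BB_0\rangle$ originates from Lemma \ref{lemma:HB-upper-bound} (governed by $k^*$) and the outer factor $\widehat{\HB}_n$ originates from the absorption step via \eqref{eq:H-split} (governed by an independent $k^\dagger$), so the two-index refinement follows by re-running the $(*)$ and $(**)$ bounds with the indices kept distinct. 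Your constant arithmetic ($9\alpha$ for $\ell=1$; $9\alpha/2\le 5\alpha$ for $\ell\ge 2$) is also correct and matches the paper's.
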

\begin{proof}
For $\BB^{(1)} $, we apply Lemma \ref{lemma:B-upper-bound} with $\gamma \to \gamma_0$ and $n \to s+K$, and use the condition that $\alpha \gamma_0 \tr(\HB) \log (s+K) \le 1/3$.

For $\BB^{(\ell)}$ with $\ell \ge 2$, we apply Lemma \ref{lemma:B-upper-bound} with $\gamma \to \gamma^{(\ell-1)}$, $n \to K$ and $\BB_0 \to \BB^{(\ell-1)}$, and use the condition that $\alpha \gamma^{(\ell-1)}\tr(\HB) \log (K) \le \alpha \gamma_0 \tr(\HB) \log (s+K) / 2 \le 1/6$.
\end{proof}

\begin{lemma}\label{lemma:HB-ell-reduction}
Suppose Assumptions \ref{assump:second-moment} and \ref{assump:fourth-moment} hold.
Consider \eqref{eq:B-iter-phase} and \eqref{eq:B-defi}.
Suppose $\gamma_0 < 1/(3\alpha\tr(\HB)\log (s+K))$. We have
\[
\big\langle{\HB, \BB_N }\big\rangle 
= \big\langle{\HB, \BB^{(L)} }\big\rangle 
\le  e \cdot \big\langle{ \HB, \BB^{(2)} }\big\rangle.
\]
\end{lemma}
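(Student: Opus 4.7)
The plan is to establish a geometric per-phase inflation estimate $\langle \HB, \BB^{(\ell)}\rangle \le (1+c_\ell)\langle\HB,\BB^{(\ell-1)}\rangle$ valid for all $\ell \ge 3$, where the factors $c_\ell$ decay like $2^{-\ell}$, so that telescoping from phase $2$ up to phase $L$ costs only a bounded multiplicative factor.

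First I would apply Lemma \ref{lemma:B-ell-recursion} at each $\ell \ge 3$ and take the $\HB$ inner product to obtain
\begin{align*}
\langle \HB, \BB^{(\ell)}\rangle
&\le \big\langle (\IB - \gamma^{(\ell-1)}\HB)^{2K}\HB,\ \BB^{(\ell-1)}\big\rangle \\
&\quad + 5\alpha (\gamma^{(\ell-1)})^2 K\log K \cdot \langle M_\ell^*, \BB^{(\ell-1)}\rangle \cdot \langle M_\ell^\dagger, \HB \rangle,
\end{align*}
where $M_\ell^* := \frac{1}{\gamma^{(\ell-1)}K}\IB_{0:k^*} + \HB_{k^*:\infty}$ and $M_\ell^\dagger := \frac{1}{\gamma^{(\ell-1)}K}\IB_{0:k^\dagger} + \HB_{k^\dagger:\infty}$, with $k^*, k^\dagger$ free indices inherited from Lemma \ref{lemma:B-ell-recursion}. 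Since $\HB$ commutes with $(\IB - \gamma^{(\ell-1)}\HB)^{2K} \preceq \IB$, the first term is at most $\langle \HB, \BB^{(\ell-1)}\rangle$.

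The key step will be the adaptive choice $k^* = k^\dagger = k_\ell^* := \max\{k : \lambda_k \ge 1/(\gamma^{(\ell-1)} K)\}$. A pointwise eigenvalue check then yields the two PSD comparisons $M_\ell^* \preceq \HB$ (from the defining inequality of $k_\ell^*$ on the top block, with equality on the tail) and $M_\ell^\dagger \preceq \frac{1}{\gamma^{(\ell-1)}K}\IB$ (since $\lambda_i < 1/(\gamma^{(\ell-1)}K)$ for $i > k_\ell^*$). Consequently the noise term is bounded by
\[
5\alpha(\gamma^{(\ell-1)})^2 K \log K \cdot \langle \HB, \BB^{(\ell-1)}\rangle \cdot \frac{\tr(\HB)}{\gamma^{(\ell-1)}K} = 5\alpha \gamma^{(\ell-1)}\tr(\HB)\log K \cdot \langle \HB, \BB^{(\ell-1)}\rangle.
\]
Substituting $\gamma^{(\ell-1)} = \gamma_0/2^{\ell-1}$ and invoking the hypothesis $\alpha\gamma_0\tr(\HB)\log(s+K) \le 1/3$ bounds this by $\frac{5/3}{2^{\ell-1}}\langle\HB, \BB^{(\ell-1)}\rangle$.

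Combining the two pieces yields $\langle\HB,\BB^{(\ell)}\rangle \le (1 + \frac{5/3}{2^{\ell-1}})\langle\HB,\BB^{(\ell-1)}\rangle$ for every $\ell \ge 3$, and telescoping with $\sum_{\ell \ge 3} 5/(3\cdot 2^{\ell-1}) = 5/6$ gives $\langle\HB,\BB^{(L)}\rangle \le e^{5/6}\langle\HB,\BB^{(2)}\rangle \le e\langle\HB,\BB^{(2)}\rangle$, as desired. The main obstacle I anticipate is engineering the adaptively chosen threshold $k_\ell^*$ so that \emph{both} PSD comparisons hold simultaneously and so that the noise factor $\gamma^{(\ell-1)}\tr(\HB)\log K$ contracts geometrically in $\ell$ rather than accumulating; this is precisely what lets the per-phase multiplier take the form $1 + O(2^{-\ell})$ and keeps the infinite product bounded by $e$.
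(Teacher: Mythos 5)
Your proposal is correct and follows essentially the same route as the paper: a per-phase multiplicative estimate $\langle\HB,\BB^{(\ell)}\rangle\le(1+c_\ell)\langle\HB,\BB^{(\ell-1)}\rangle$ with $c_\ell\lesssim 2^{-\ell}$, followed by telescoping from $\ell=3$ to $L$ and bounding the product by $e$. The one place you over-engineer is the adaptive threshold $k^*_\ell$: the paper simply takes $k^*=0$ and $k^\dagger=\infty$ in Lemma \ref{lemma:B-ell-recursion}, which makes $M_\ell^*=\HB$ and $M_\ell^\dagger=\frac{1}{\gamma^{(\ell-1)}K}\IB$ exactly, so the two PSD comparisons you set up are automatic identities rather than something to verify; the resulting noise coefficient $5\alpha\gamma^{(\ell-1)}\tr(\HB)\log K$ is the same either way, so your extra work buys nothing here. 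The rest of your bookkeeping (contraction on the first term, $\sum_{\ell\ge 3}\gamma^{(\ell-1)}\le\gamma_0/2$, hence exponent $5/6$, so the product is $\le e^{5/6}\le e$) matches the paper's computation.
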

\begin{proof}
Let $\ell \ge 2$.
In Lemma \ref{lemma:B-ell-recursion} choosing $k^*=0$ and $k^\dagger = \infty$ 
we obtain
\begin{align*}
\BB^{(\ell)} 
&\preceq(\IB - \gamma^{(\ell-1)}\HB)^K \cdot \BB^{(\ell-1)}\cdot (\IB - \gamma^{(\ell-1)}\HB)^K   + 5 \alpha \gamma^{(\ell-1)} \log K  \cdot \big\langle{ \HB, \BB^{(\ell-1)} }\big\rangle \cdot \IB \\
&\preceq \BB^{(\ell-1)} + 5 \alpha \gamma^{(\ell-1)} \log K  \cdot \big\langle{ \HB, \BB^{(\ell-1)} }\big\rangle \cdot \IB,
\end{align*}
which implies that
\begin{align*}
    \big\langle{\HB, \BB^{(\ell)} }\big\rangle
    &\le \big(1+ 5 \alpha \gamma^{(\ell-1)} \tr(\HB) \log K\big)\cdot \big\langle{ \HB, \BB^{(\ell-1)} }\big\rangle.
\end{align*}
The above inequality provides us with a recursion about the bias iterates that would not blow up:
\begin{align*}
    \big\langle{\HB, \BB^{(L)} }\big\rangle
    &\le \prod_{\ell=3}^{L} \big(1+ 5 \alpha \gamma^{(\ell-1)} \tr(\HB)\log K \big)\cdot \big\langle{ \HB, \BB^{(2)} }\big\rangle\\
    &\le e^{\sum_{\ell=3}^{L} 5 \alpha \gamma^{(\ell-1) } \tr(\HB)\log K} \cdot \big\langle{ \HB, \BB^{(2)} }\big\rangle  \\
    &\le e \cdot \big\langle{ \HB, \BB^{(2)} }\big\rangle,
\end{align*}
where the last inequality is because $\sum_{\ell=3}^L\gamma^{(\ell-1)}\le \gamma_0/2$ and $\alpha\gamma_0\tr(\HB)\log K < 1/3$.
\end{proof}

\begin{lemma}\label{lemma:HB-2-upper-bound}
Suppose Assumptions \ref{assump:second-moment} and \ref{assump:fourth-moment} hold.
Consider \eqref{eq:B-iter-phase} and \eqref{eq:B-defi}.
Suppose $\gamma_0 < 1/(3\alpha\tr(\HB)\log (s+K))$. We have 
\begin{align*}
    & \big\langle{\HB, \BB^{(2)}} \big\rangle 
    \le 12\cdot\Big\langle{\frac{1}{\gamma_0 K} \IB_{0:k^*} + \HB_{k^*:\infty}, (\IB-\gamma \HB)^{2(s+K)} \BB_0}\Big\rangle\ +  \\
    & 108 \alpha \log (s+K) \cdot \Big\langle{\frac{1}{\gamma_0 (s+K)} \IB_{0:k^\dagger} + \HB_{k^\dagger:\infty}, \BB_0}\Big\rangle \cdot \Big(\frac{k^*}{K}  + \gamma_0 \sum_{k^*\le i < k^\dagger}\lambda_i + \gamma_0^2 (s+K)\sum_{i>k^\dagger}\lambda_i^2 \Big),
\end{align*}
where $k^*$ and $k^\dagger$ can be arbitrary.
\end{lemma}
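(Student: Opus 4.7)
The lemma bounds $\langle \HB, \BB^{(2)}\rangle$, the bias error at the end of the second stepsize phase, in terms of the initial bias matrix $\BB_0$. The plan is to chain Lemma~\ref{lemma:HB-ell-recursion} (applied to $\ell=2$) with Lemma~\ref{lemma:B-ell-recursion} (applied to $\ell=1$), each with its own choice of the free indices, and then compute one explicit trace to match the target expression.

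\textbf{Step 1: Apply Lemma \ref{lemma:HB-ell-recursion} at $\ell=2$.} Recalling $\gamma^{(1)}=\gamma_0/2$, I take the free index there to be the target's $k^*$, obtaining
\[
\big\langle \HB, \BB^{(2)} \big\rangle \le 6\cdot \Big\langle \tfrac{2}{\gamma_0 K}\IB_{0:k^*} + \HB_{k^*:\infty},\ \BB^{(1)}\Big\rangle
\le 12\cdot \big\langle \widehat{\MB}^*,\ \BB^{(1)}\big\rangle,
\]
where $\widehat{\MB}^*:=\tfrac{1}{\gamma_0 K}\IB_{0:k^*}+\HB_{k^*:\infty}$ and I have absorbed a factor $2$ using $\tfrac{2}{\gamma_0 K}\IB_{0:k^*}+\HB_{k^*:\infty}\preceq 2\widehat{\MB}^*$.

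\textbf{Step 2: Apply Lemma \ref{lemma:B-ell-recursion} at $\ell=1$ with both free indices equal to the target's $k^\dagger$.} Writing $\MB^\dagger:=\tfrac{1}{\gamma_0(s+K)}\IB_{0:k^\dagger}+\HB_{k^\dagger:\infty}$, this yields
\[
\BB^{(1)}\preceq (\IB-\gamma_0\HB)^{s+K}\,\BB_0\,(\IB-\gamma_0\HB)^{s+K}+9\alpha\gamma_0^2(s+K)\log(s+K)\cdot\big\langle \MB^\dagger,\BB_0\big\rangle\cdot \MB^\dagger.
\]
Substituting this into Step~1 produces two contributions. For the first one, because $\widehat{\MB}^*$ and $(\IB-\gamma_0\HB)$ are all diagonalized in the eigenbasis of $\HB$ they commute, so cyclicity of the trace gives
\[
\big\langle \widehat{\MB}^*,(\IB-\gamma_0\HB)^{s+K}\BB_0(\IB-\gamma_0\HB)^{s+K}\big\rangle
=\big\langle \widehat{\MB}^*,(\IB-\gamma_0\HB)^{2(s+K)}\BB_0\big\rangle,
\]
which after multiplying by $12$ recovers the first term in the target bound.

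\textbf{Step 3: Compute the inner product $\langle \widehat{\MB}^*,\MB^\dagger\rangle$.} Under $k^*\le k^\dagger$ (which is the regime where the bound is informative; the other case is analogous by symmetrizing the projections), $\IB_{0:k^*}\IB_{0:k^\dagger}=\IB_{0:k^*}$, $\IB_{0:k^*}\HB_{k^\dagger:\infty}=0$, $\HB_{k^*:\infty}\IB_{0:k^\dagger}=\HB_{k^*:k^\dagger}$, and $\HB_{k^*:\infty}\HB_{k^\dagger:\infty}=\HB_{k^\dagger:\infty}^2$, so
\[
\big\langle \widehat{\MB}^*,\MB^\dagger\big\rangle=\tfrac{k^*}{\gamma_0^2 K(s+K)}+\tfrac{1}{\gamma_0(s+K)}\sum_{k^*<i\le k^\dagger}\lambda_i+\sum_{i>k^\dagger}\lambda_i^2.
\]
Multiplying by $12\cdot 9\alpha\gamma_0^2(s+K)\log(s+K)=108\alpha\gamma_0^2(s+K)\log(s+K)$ and using $\lambda_{k^*}\ge \lambda_{k^\dagger}$ to rewrite the middle sum as $\sum_{k^*\le i<k^\dagger}\lambda_i$ (which is at least as large) delivers exactly the second term of the target. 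Combining Steps~1--3 proves the lemma.

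\textbf{Expected main obstacle.} The conceptual steps are each short; the only subtlety is disciplined bookkeeping of the two unrelated pairs of free indices across Lemmas \ref{lemma:HB-ell-recursion} and \ref{lemma:B-ell-recursion}, and realizing that to obtain the product structure $k^*/K+\gamma_0\sum_{k^*<i\le k^\dagger}\lambda_i+\gamma_0^2(s+K)\sum_{i>k^\dagger}\lambda_i^2$ we must pick both free indices of Lemma~\ref{lemma:B-ell-recursion} equal to $k^\dagger$, not just one. After that, the trace calculation using the mutual orthogonality of the projection/spectral parts of $\widehat{\MB}^*$ and $\MB^\dagger$ is mechanical.
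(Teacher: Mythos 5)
Your proof is correct and takes essentially the same route as the paper: both chain Lemma~\ref{lemma:HB-ell-recursion} (at $\ell=2$, with the free index set to $k^*$) with Lemma~\ref{lemma:B-ell-recursion} (at $\ell=1$, with both free indices set to $k^\dagger$), and then evaluate the inner product $\langle \widehat{\MB}^*, \MB^\dagger\rangle$ using orthogonality of the spectral blocks under $k^* \le k^\dagger$. Your additional remark about the index range $\sum_{k^*\le i < k^\dagger}$ versus $\sum_{k^*< i\le k^\dagger}$ correctly flags a benign discrepancy that the paper resolves the same way (the paper's proof produces $\sum_{k^*<i\le k^\dagger}$ and then notes $k^\dagger\ge k^*$).
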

\begin{proof}
According to Lemma \ref{lemma:HB-ell-recursion}, we have 
\[
\big\langle{\HB, \BB^{(2)}}\big\rangle
\le 6\cdot \Big\langle{ \frac{1}{\gamma^{(1)} K} \IB_{0:k^*} + \HB_{k^*:\infty}, \BB^{(1)} }\Big\rangle
\le 12\cdot \Big\langle{ \frac{1}{\gamma_0 K} \IB_{0:k^*} + \HB_{k^*:\infty}, \BB^{(1)} }\Big\rangle.
\]
On the other hand, in Lemma \ref{lemma:B-ell-recursion} choosing $k^* = k^\dagger$, we have 
\begin{align*}
    & \BB^{(1)} 
    \le (\IB - \gamma_0 \HB )^{s+K}\cdot \BB_0 \cdot (\IB - \gamma_0 \HB )^{s+K} \ + \\
    & \ 9 \alpha \gamma_0^2 (s+K)\log (s+K)  \cdot \Big\langle{ \frac{1}{\gamma_0 (s+K)} \IB_{0:k^\dagger} + \HB_{k^\dagger:\infty}, \BB_0}\Big\rangle \cdot  \Big({\frac{1}{\gamma_0 (s+K)} \IB_{0:k^\dagger} + \HB_{k^\dagger:\infty}}\Big).
\end{align*}
Combining these two inequalities yields:
\begin{equation*}
\begin{aligned}
    \big\langle{\HB, \BB^{(2)}}\big\rangle
    & \le 12\cdot \Big\langle{\frac{1}{\gamma_0 K} \IB_{0:k^*} + \HB_{k^*:\infty}, (\IB-\gamma_0 \HB)^{2(s+K)} \BB_0}\Big\rangle \ + \\
    & \qquad 108 \alpha \gamma_0^2 (s+K)\log (s+K)\cdot \Big\langle{\frac{1}{\gamma_0 (s+K)} \IB_{0:k^\dagger} + \HB_{k^\dagger:\infty}, \BB_0}\Big\rangle  \ \times \\ 
    & \qquad\qquad\qquad  \underbrace{\Big\langle\frac{1}{\gamma_0 K} \IB_{0:k^*} + \HB_{k^*:\infty},\ \frac{1}{\gamma_0 (s+K)} \IB_{0:k^\dagger} + \HB_{k^\dagger:\infty}  \Big\rangle}_{(*)}.
\end{aligned}
\end{equation*}
The proof is completed by noting that
\begin{equation*}
        (*) \le  \frac{k^*}{\gamma_0^2 K(s+K)} + \frac{1}{\gamma_0 (s+K)}  \sum_{k^*<i\le k^\dagger} \lambda_i + \sum_{i>k^\dagger}\lambda_i^2. 
\end{equation*}
where $k^\dagger \ge k^*$ and $k^*$ and $k^\dagger$ are otherwise arbitrary.
\end{proof}

\begin{theorem}[A bias upper bound]\label{thm:HB-decay-lr-upper-bound}
Suppose Assumptions \ref{assump:second-moment} and \ref{assump:fourth-moment} hold.
Consider \eqref{eq:B-defi}.
Suppose $\gamma_0 < 1/(3\alpha\tr(\HB)\log (s+K))$. We have 
\begin{align*}
    & \abracket{\HB, \BB_N}
    \le 12e \cdot\Big\langle{\frac{1}{\gamma_0 K} \IB_{0:k^*} + \HB_{k^*:\infty}, (\IB-\gamma_0 \HB)^{2(s+K)} \BB_0}\Big\rangle \ + \\
    & 108 e \alpha \log (s+K) \cdot \Big\langle{\frac{1}{\gamma_0 (s+K)} \IB_{0:k^\dagger} + \HB_{k^\dagger:\infty}, \BB_0}\Big\rangle \cdot \Big(\frac{k^*}{K}  + \gamma_0 \sum_{k^*\le i < k^\dagger}\lambda_i + \gamma_0^2 (s+K)\sum_{i>k^\dagger}\lambda_i^2 \Big),
\end{align*}
where $k^*$ and $k^\dagger$ can be arbitrary.
\end{theorem}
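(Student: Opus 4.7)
The plan is to combine the two preparatory lemmas on the multi-phase iteration, so that the $L$-phase bias bound collapses to a bound only in terms of the initial $\BB_0$. Concretely, I would first reduce the target to bounding $\langle \HB, \BB^{(2)}\rangle$ via Lemma \ref{lemma:HB-ell-reduction}, which gives
\[
\abracket{\HB, \BB_N} = \abracket{\HB, \BB^{(L)}} \le e \cdot \abracket{\HB, \BB^{(2)}}.
\]
The reason this reduction works is that for each phase $\ell \ge 3$ the stepsize $\gamma^{(\ell-1)}$ halves, so the accumulated multiplicative perturbation $\prod_{\ell\ge 3}\big(1 + 5\alpha\gamma^{(\ell-1)}\tr(\HB)\log K\big)$ telescopes into a finite geometric series, controlled by $\sum_{\ell\ge 3}\gamma^{(\ell-1)} \le \gamma_0/2$ together with the stepsize assumption $\alpha\gamma_0 \tr(\HB)\log(s+K) < 1/3$; exponentiating gives the absolute constant $e$.

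Next I would apply Lemma \ref{lemma:HB-2-upper-bound} to $\abracket{\HB, \BB^{(2)}}$, which is already stated in the exact form appearing in the theorem. The bound has two pieces: (i) a contractive bias piece $\langle\frac{1}{\gamma_0 K}\IB_{0:k^*} + \HB_{k^*:\infty},\ (\IB - \gamma_0 \HB)^{2(s+K)}\BB_0\rangle$ that captures the operator contraction $(\Ical - \gamma_0 \widetilde\Tcal)^{s+K}$ in the head directions and the decaying factor $(\IB-\gamma_0\HB)^{2(s+K)}$ applied to the tail of $\BB_0$, and (ii) a cross term $\langle\frac{1}{\gamma_0(s+K)}\IB_{0:k^\dagger}+\HB_{k^\dagger:\infty},\BB_0\rangle$ multiplied by the effective-dimension factor $\frac{k^*}{K} + \gamma_0\sum_{k^*<i\le k^\dagger}\lambda_i + \gamma_0^2(s+K)\sum_{i>k^\dagger}\lambda_i^2$, arising from the noise-like inflation $\alpha\gamma_0^2\langle\HB,\BB_i\rangle\HB$ in the per-step expansion. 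Multiplying the result by the factor $e$ from the first step, and absorbing constants into $12e$ and $108 e\alpha$ respectively, yields exactly the statement of the theorem.

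The main obstacle, were one to build this proof from scratch rather than invoke the packaged lemmas, is the \emph{single-phase} bias analysis behind Lemma \ref{lemma:HB-upper-bound}: to obtain a vanishing (not merely constant) bound on $\abracket{\HB, \BB_t}$, one must exploit the self-governed recursion coming from expanding $\BB_t$ as in \eqref{eq:bias-iter-expanded} and applying the inequality $(1-\gamma x)^{2t}\,x \le 1/(\gamma(t+1))$, then controlling the resulting double sum $\sum_{i<t}\abracket{\HB,\BB_i}/(t-i)$ by the log-bounded kernel $\sum_{t}\frac{1}{n-t} + \sum_{t}\frac{1}{t-i}$ and solving for the unknown $\sum_t \abracket{\HB,\BB_t}/(n-t)$ using the condition $\alpha\gamma \tr(\HB)\log n < 1/2$. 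Everything downstream is essentially bookkeeping: propagating this single-phase decay into $\BB^{(1)}$ via Lemma \ref{lemma:B-upper-bound}, one more application through the second phase to obtain Lemma \ref{lemma:HB-2-upper-bound}, and finally the multi-phase aggregation described above. Given that all these pieces are already in place, the present theorem follows in two lines.
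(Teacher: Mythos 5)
Your proposal is correct and matches the paper's proof exactly: the paper also obtains the result by combining Lemma \ref{lemma:HB-ell-reduction} (giving the factor $e$) with Lemma \ref{lemma:HB-2-upper-bound} (giving the constants $12$ and $108\alpha\log(s+K)$). Your description of the supporting single-phase and multi-phase machinery is likewise faithful to the paper.
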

\begin{proof}
This is by Lemmas \ref{lemma:HB-ell-reduction} and \ref{lemma:HB-2-upper-bound}.
\end{proof}

\subsection{Proof of Theorem \ref{thm:tail-decay-upper-bound}}
\begin{proof}[Proof of Theorem \ref{thm:tail-decay-upper-bound}]
This is by combining Lemma \ref{lemma:bias-var-decomp}, Theorems \ref{thm:HC-decay-lr-upper-bound} and \ref{thm:HB-decay-lr-upper-bound}, and set $R^2 = \alpha \tr(\HB)$.
\end{proof}

\subsection{Proof of Corollary \ref{thm:examples}}
\begin{proof}[Proof of Corollary \ref{thm:examples}]
For all these examples one can verify that $\tr(\HB) \eqsim 1$. Therefore $\gamma_0 \eqsim 1/\log N$.

According to the optimal choice of $k^*$ and $k^\dagger$, we can verify that
\begin{align*}
    {\frac{\big\| (\IB-\gamma_0 \HB)^{s+K} (\wB_0 - \wB^*) \big\|^2_{\IB_{0:k^*}} }{\gamma_0 K} +  \big\| (\IB-\gamma_0 \HB)^{s+K} (\wB_0 - \wB^*) \big\|^2_{\HB_{k^*:\infty}} } \lesssim \frac{\big\|  (\wB_0 - \wB^*) \big\|^2_{2} }{\gamma_0 K} 
    \lesssim \frac{\log^2 N}{N},
\end{align*}
and that 
\begin{align*}
    \log (s+K) \cdot \bigg( \frac{\|\wB_0 - \wB^* \|^2_{\IB_{0:k^\dagger}}}{\gamma_0 (s+K)}  + \|\wB_0 - \wB^* \|^2_{\HB_{k^\dagger:\infty}} \bigg) 
    \lesssim \log N \cdot \frac{\big\|  (\wB_0 - \wB^*) \big\|^2_{2}}{\gamma_0 (s+K)}
    \lesssim \frac{ \log^2 N}{N},
\end{align*}
therefore in Theorem \ref{thm:tail-decay-upper-bound} we have 
\begin{align*}
    \risk 
    &\le \biasErr + \varErr \\
    &\lesssim \frac{\log^2 N}{N} + \frac{\log^2 N}{N} \cdot (*) + (*) \\
    &\lesssim \max\Big\{ \frac{\log^2 N}{N}, (*) \Big\},
\end{align*}
where
\begin{align*}
     (*) &= \frac{k^*}{K} + \gamma_0 \sum_{k^* < i \le k^\dagger}\lambda_i + \gamma_0^2 (s+K) \sum_{i>k^\dagger}\lambda_i^2 \\
     &\eqsim \frac{k^* \log N}{N} + \frac{1}{\log N} \cdot \sum_{k^* < i \le k^\dagger}\lambda_i + \frac{N}{ \log^2 N} \cdot \sum_{i>k^\dagger}\lambda_i^2.
\end{align*}
We next exam the order of ${\log^2 N} / {N}$ vs. $(*)$.
\begin{enumerate}
    \item By definitions we have 
    \[k^* \eqsim \bracket{\frac{N}{\log^2 N}}^{\frac{1}{1+r}},\quad
    k^\dagger \eqsim \bracket{\frac{N}{\log N}}^{\frac{1}{1+r}},\]
    therefore we have 
    \begin{align*}
    (*) &\eqsim  \bracket{\frac{N}{\log^2 N}}^{\frac{1}{1+r}} \cdot \frac{\log N}{N} + \frac{1}{\log N} \cdot \bracket{\frac{N}{\log^2 N}}^{\frac{-r}{1+r}} + \frac{N}{ \log^2 N} \cdot  \bracket{\frac{N}{\log N}}^{\frac{-1-2r}{1+r}} \\
    & \eqsim (\log N)^{\frac{r-1}{1+r}} \cdot N^{\frac{-r}{1+r}} + (\log N)^{\frac{-1}{1+r}} \cdot N^{\frac{-r}{1+r}} 
    \eqsim (\log N)^{\frac{r-1}{1+r}} \cdot N^{\frac{-r}{1+r}}.
    \end{align*}
    This implies that 
    \(\risk \lesssim (\log N)^{\frac{r-1}{1+r}} \cdot N^{\frac{-r}{1+r}}.\)
    
    \item By definitions we have 
    \[k^* \eqsim N \cdot (\log N)^{-2-r},\quad 
    k^\dagger \eqsim N \cdot (\log N)^{-1-r}, \]
    therefore we have 
    \begin{align*}
        (*) &\eqsim (\log N)^{-1-r} + \frac{1}{\log N} \cdot (\log k^*)^{1-r} + \frac{N}{\log^2 N} \cdot \bracket{ (k^\dagger)^{-1} \cdot (\log k^\dagger)^{-2r} }  \\
        &\eqsim (\log N)^{-1-r} + (\log N)^{-r} + (\log N)^{-1-r}
        \eqsim (\log N)^{-r}.
    \end{align*}
    This implies that \(\risk \lesssim (\log N)^{-r}.\)

    \item By definitions we have 
    \[k^* \eqsim  \log N,\quad 
    k^\dagger \eqsim \log N, \]
    therefore we have 
    \begin{align*}
        (*) \eqsim \frac{\log^2 N}{N} + \frac{1}{\log N} \cdot 2^{ - k^*} + \frac{N}{\log^2 N} \cdot 2^{-2 k^\dagger} \eqsim \frac{\log^2 N}{N}.
    \end{align*}
    This implies that \(\risk \lesssim \log^2 N / N.\)
\end{enumerate}
\end{proof}

\section{Proof of Lower Bound}\label{append-sec:lower-bound}
\subsection{Variance Lower Bound}
\begin{theorem}[A variance lower bound]\label{thm:C-lower-bound}
Suppose Assumptions \ref{assump:second-moment} and \ref{assump:well-specified-noise} hold.
Consider \eqref{eq:C-defi}.
Let $K = (N-s) / \log (N-s)$.
Suppose $s\ge 0$, $K\ge 10$ and $\gamma_0 < 1/\lambda_1$. We have 
\[
\abracket{\HB,\CB_{N} } \ge \frac{\sigma^2}{400}\left(\frac{k^*}{K} + \gamma_0 \sum_{k^* < i \le k^\dagger} \lambda_i + \gamma_0^2 (s+K) \sum_{i>k^\dagger}\lambda_i^2\right),
\]
where $k^*:= \max\cbracket{k: \lambda_k \ge 1/(\gamma_0 K)  }$ and $k^\dagger := \max\cbracket{k: \lambda_k \ge 1/(\gamma_0 (s+K))  }$.
\end{theorem}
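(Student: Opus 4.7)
\textbf{Proof plan for Theorem \ref{thm:C-lower-bound}.}

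The plan is to mirror the upper-bound derivation for the variance, but to run every inequality in the reverse direction, and then to show that the resulting scalar expression is bounded below by the three eigenvalue-regime terms appearing in the theorem. First, under Assumption \ref{assump:well-specified-noise} we have $\SigmaB = \sigma^2 \HB$. Starting from \eqref{eq:C-defi} and writing $\Ical-\gamma_t\Tcal_t=\Ical-\gamma_t\widetilde\Tcal_t+\gamma_t^2(\Mcal-\widetilde\Mcal)$, Lemma \ref{lemma:operators} tells us that $\Mcal-\widetilde\Mcal$ is a PSD mapping, and since $\CB_{t-1}\succeq 0$ we may drop this nonnegative correction to obtain
\[
\CB_t \succeq (\Ical-\gamma_t\widetilde\Tcal_t)\circ \CB_{t-1} + \gamma_t^2 \sigma^2 \HB.
\]
Unrolling this recursion (using $\CB_0=0$) and applying \eqref{eq:operators-on-matrix} for $\widetilde\Tcal$ gives
\[
\CB_N \;\succeq\; \sigma^2\sum_{t=1}^{N}\gamma_t^2\,\Big(\prod_{i=t+1}^{N}(\IB-\gamma_i\HB)^2\Big)\HB.
\]

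Next I would take the inner product with $\HB$ and diagonalize in the eigenbasis of $\HB$. Writing $\HB=\sum_j\lambda_j\vB_j\vB_j^\top$, all factors commute on the $j$-th eigenspace, so
\[
\langle\HB,\CB_N\rangle \;\ge\; \sigma^2\sum_j \lambda_j^2\, g(\lambda_j),
\qquad g(x):=\sum_{t=1}^{N}\gamma_t^2\prod_{i=t+1}^{N}(1-\gamma_i x)^2.
\]
Now I would split $g(x)$ according to the $L=\log(N-s)$ constant-stepsize phases of \eqref{eq:geometry-tail-decay-lr}: the initial phase (of length $s+K$, stepsize $\gamma_0$) and phases $\ell=2,\dots,L$ (of length $K$, stepsize $\gamma^{(\ell)}=\gamma_0/2^{\ell-1}$). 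The geometric sum inside each phase gives, for phase $\ell\ge 2$,
\[
\frac{\gamma^{(\ell)}}{x(2-\gamma^{(\ell)}x)}\bigl(1-(1-\gamma^{(\ell)}x)^{2K}\bigr)\prod_{\ell'>\ell}(1-\gamma^{(\ell')}x)^{2K},
\]
with an analogous expression (with $s+K$ in place of $K$) for the first phase. This is exactly the kind of expression controlled by a scalar lemma (the lower-bound counterpart of Lemma \ref{lemma:scalar-func-upper-bound}, cited in the proof sketch as Lemma \ref{lemma:scalar-func-lower-bound}), and I would invoke it to lower bound $g$ per regime.

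The three eigenvalue regimes are handled as follows. (i) For $j>k^\dagger$ we have $\gamma_0\lambda_j(s+K)\le 1$, so $1-(1-\gamma_0\lambda_j)^{2(s+K)}\ge \gamma_0\lambda_j(s+K)$ (from $(1-y)^n\le e^{-ny}\le 1-ny/e$ for small $ny$) and all of the later-phase product factors $(1-\gamma^{(\ell)}\lambda_j)^{2K}$ are bounded below by a universal constant (since $\sum_{\ell\ge 2}\gamma^{(\ell)}\lambda_j K\le \gamma_0\lambda_j(s+K)\le 1$); the first-phase contribution therefore gives $g(\lambda_j)\gtrsim \gamma_0^2(s+K)$. (ii) For $k^*<j\le k^\dagger$ we have $\gamma_0\lambda_j(s+K)\ge 1$ but $\gamma_0\lambda_j K\le 1$, so $1-(1-\gamma_0\lambda_j)^{2(s+K)}=\Omega(1)$ and again the later-phase product is $\Omega(1)$; hence $g(\lambda_j)\gtrsim \gamma_0/\lambda_j$. (iii) For $j\le k^*$ one identifies the critical phase $\ell^*(j):=\lfloor\log(\gamma_0\lambda_j K)\rfloor+1$ where $\gamma^{(\ell^*)}\lambda_j K\asymp 1$; the phase-$\ell^*$ contribution is $\gtrsim \gamma^{(\ell^*)}/\lambda_j\asymp 1/(\lambda_j^2 K)$, while the later-phase product is again $\Omega(1)$ because the sum $\sum_{\ell'>\ell^*}\gamma^{(\ell')}\lambda_j K$ telescopes as a geometric series of total mass $O(1)$. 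Multiplying by $\lambda_j^2$ and summing over the three regimes produces exactly the three effective-dimension terms $k^*/K$, $\gamma_0\sum_{k^*<i\le k^\dagger}\lambda_i$ and $\gamma_0^2(s+K)\sum_{i>k^\dagger}\lambda_i^2$.

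The main obstacle will be the bookkeeping in step (iii): one has to locate the right phase $\ell^*$, control the tail product $\prod_{\ell'>\ell^*}(1-\gamma^{(\ell')}\lambda_j)^{2K}$ from below by an absolute constant using the fact that the effective exponents form a decreasing geometric progression, and then verify that adding contributions of all phases $\ell\le\ell^*$ does not corrupt the scaling. In particular, the choice $K\ge 10$ is needed so that the approximations $(1-y)^n\approx e^{-ny}$ and $1-(1-y)^n\ge \min\{ny/2,\,1/2\}$ yield the explicit constant $1/400$ advertised in the theorem. The remaining steps are elementary geometric-sum bookkeeping, and the final summation over the three index ranges produces the stated lower bound.
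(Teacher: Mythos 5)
Your proposal is correct and follows essentially the same route as the paper's proof: drop the positive $\gamma_t^2(\Mcal-\widetilde\Mcal)\circ\CB_{t-1}$ correction, unroll the recursion to a matrix polynomial in $\HB$, diagonalize, and apply a scalar three-regime lemma (Lemma~\ref{lemma:scalar-func-lower-bound}) that lower-bounds the resulting function of $\gamma_0\lambda_j$ in each eigenvalue range, identifying a critical phase $\ell^*$ for the top block. The only superficial differences are that the paper first relaxes $(1-\gamma_i x)^2 \ge 1-2\gamma_i x$ and collapses the tail product $\prod_{j>\ell}(1-\gamma_0 2^{1-j}x)^K$ into a single factor $(1-\gamma_0 2^{1-\ell}x)^K$ via $\prod(1-a_j)\ge 1-\sum a_j$, whereas you keep the squared factors and argue the tail product is $\Omega(1)$ by a geometric-sum bound; both yield the same regimes and constants up to minor bookkeeping.
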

\begin{proof}
Notice that
\begin{align}
    \CB_t 
    &= (\Ical - \gamma_t \widetilde{\Tcal}_t)\circ \CB_{t-1} + \gamma_t^2 (\Mcal-\widetilde{\Mcal}) \circ \CB_{t-1} + \gamma_t^2 \sigma^2 \HB \notag \\
    &\succeq (\Ical - \gamma_t \widetilde{\Tcal}_t)\circ \CB_{t-1} + \gamma_t^2 \sigma^2 \HB. \notag
\end{align}
Solving the recursion we obtain
\begin{align}
    \CB_N 
    &\succeq {\sigma^2} \sum_{t=1}^N \gamma_t^2 \prod_{i=t+1}^N  (\Ical-\gamma_i \widetilde{\Tcal}_i)\circ \HB \notag \\
    &= {\sigma^2} \sum_{t=1}^N \gamma_t^2 \prod_{i=t+1}^N  (\IB-\gamma_i \HB)^2 \HB \notag \\
    &\succeq {\sigma^2} \underbrace{\sum_{t=1}^N \gamma_t^2 \prod_{i=t+1}^N  (\IB-2\gamma_i \HB) \HB }_{(*)}. \label{eq:C-lower-bound-expanded}
\end{align}
Now recalling \eqref{eq:geometry-tail-decay-lr}, we have
\begin{align}
    (*) 
    &=\gamma_0^2 \sum_{i=1}^{s+K} \Big( \IB - 2\gamma_0 \HB \Big)^{s+K - i} \prod_{j=1}^{L-1} \Big(\IB-\frac{\gamma_0}{2^{j-1}}\HB \Big)^{K} \HB \notag \\
    &\qquad + \sum_{\ell=1}^{L - 1} \Big(\frac{\gamma_0}{2^{\ell}}\Big)^2 \sum_{i=1}^{K} \Big( \IB - \frac{\gamma_0}{2^{\ell-1}}\HB \Big)^{K - i} \prod_{j=\ell+1}^{L-1} \Big(\IB-\frac{\gamma_0}{2^{j-1}}\HB \Big)^{K} \HB \notag\\
    &= \frac{\gamma_0}{2} \left( \IB - \Big( \IB - 2\gamma_0\HB \Big)^{s+K} \right) \Bigg(\prod_{j=1}^{L-1} \Big(\IB-\frac{\gamma_0}{2^{j-1}}\HB \Big)\Bigg)^{K} \notag\\ 
    &\qquad +\sum_{\ell=1}^{L - 1} \frac{\gamma_0}{2^{\ell+1}} \left( \IB - \Big( \IB - \frac{\gamma_0}{2^{\ell-1}}\HB \Big)^{K} \right) \Bigg(\prod_{j=\ell+1}^{L-1} \Big(\IB-\frac{\gamma_0}{2^{j-1}}\HB \Big)\Bigg)^{K}  \notag \\
    &\ge \frac{\gamma_0}{2} \left( \IB - \Big( \IB - 2\gamma_0\HB \Big)^{s+K} \right) \Big( \IB-2\gamma_0 \HB \Big)^{K} \notag\\ 
    &\qquad +\sum_{\ell=1}^{L - 1} \frac{\gamma_0 }{2^{\ell+1}} \left( \IB - \Big( \IB - \frac{\gamma_0 }{2^{\ell-1}}\HB \Big)^{K} \right)  \Big(\IB-\frac{\gamma_0 }{2^{\ell-1}}\HB \Big)^{K}, \label{eq:C-lower-bound-expanded-part}
\end{align}
where we understand $\prod_{j=L}^{L-1}(\cdot) = 1$, and the last inequality is because for every $\ell \ge 0$,
\[
\prod_{j=\ell+1}^{L-1} \Big(\IB-\frac{\gamma_0 }{2^{j-1}}\HB \Big)
\ge \IB - \sum_{j=\ell+1}^{L-1} \frac{\gamma_0 }{2^{j-1}}\HB
\ge \IB - \frac{\gamma_0 }{2^{\ell-1}} \HB,
\]
where we understand $\sum_{j=L}^{L-1}(\cdot) = 0$.
Define a scalar function
\[
f(x) := \frac{x}{2} \cdot \Big(1-\big(1-2x\big)^{s+K}\Big) \cdot \big(1-2x\big)^K + \sum_{\ell=1}^{L-1} \frac{x}{2^{\ell+1}} \cdot \Big(1-\big(1-\frac{x}{2^{\ell-1}}\big)^K\Big) \cdot \big(1-\frac{x}{2^{\ell-1}}\big)^K, 
\]
and apply it to $\gamma_0 \HB$ entry-wisely, then according to Lemma \ref{lemma:scalar-func-lower-bound}, we have
\[
f(\gamma_0 \HB) \succeq  \frac{1}{400K} \IB_{0:k^*} +  \frac{\gamma_0 }{40} \HB_{k^*:k^\dagger} + \frac{\gamma_0^2 (s+K)}{40} \HB^2_{k^\dagger:\infty}, 
\]
where $k^*:= \max\cbracket{k: \lambda_k \ge 1/(\gamma_0 K)  }$ and $k^\dagger := \max\cbracket{k: \lambda_k \ge 1/(\gamma_0 (s+K))  }$.
Now using \eqref{eq:C-lower-bound-expanded} and \eqref{eq:C-lower-bound-expanded-part} we obtain
\[
\CB_{N} \succeq \sigma^2 \cdot f(\gamma_0 \HB)\cdot  \HB^\inv
\succeq \frac{\sigma^2}{400} \left( \frac{1}{K} \HB^\inv_{0:k^*} + \gamma_0 \IB_{k^*:k^\dagger} + \gamma_0^2 (s+K) \HB_{k^\dagger:\infty}  \right),
\]
and as a consequence, 
\[
\abracket{\HB,\CB_{N} } \ge \frac{\sigma^2}{400}\left(\frac{k^*}{K} + \gamma_0 \sum_{k^* < i \le k^\dagger} \lambda_i + \gamma_0^2 (s+K) \sum_{i>k^\dagger}\lambda_i^2\right),
\]
where $k^*:= \max\cbracket{k: \lambda_k \ge 1/(\gamma_0 K)  }$ and $k^\dagger := \max\cbracket{k: \lambda_k \ge 1/(\gamma_0 (s+K))  }$.
\end{proof}

\begin{lemma}\label{lemma:scalar-func-lower-bound}
Suppose $s\ge 0$, $K \ge 10$ and $x\in(0,1]$.
For the scalar function 
\[
f(x) := \frac{x}{2} \cdot \Big(1-\big(1-2x\big)^{s+K}\Big) \cdot \big(1-2x\big)^K + \sum_{\ell=1}^{L-1} \frac{x}{2^{\ell+1}} \cdot \Big(1-\big(1-\frac{x}{2^{\ell-1}}\big)^K\Big) \cdot \big(1-\frac{x}{2^{\ell-1}}\big)^K, 
\]
we have
\[
f(x) \ge 
\begin{cases}
{(s+K) x^2}/{40}, & 0 < x < {1}/(s+K) ;\\
{x}/{40}, & {1}/( s+ K) \le x < {1}/{K};\\
{1}/(400K), & {1}/{K} \le x \le 1.
\end{cases}
\]
\end{lemma}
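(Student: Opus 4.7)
The plan is to verify each of the three regimes in turn by isolating a single dominant term in the definition of $f(x)$ and bounding it from below using elementary exponential inequalities. The tools I will rely on are $(1-y)^n \le e^{-ny}$; $(1-y)^n \ge e^{-ny/(1-y)}$, which for $y \le 1/5$ yields $(1-y)^n \ge e^{-1.25\, ny}$; and the monotonicity of $z \mapsto (1-e^{-z})/z$ on $(0,\infty)$, which gives $1-e^{-z} \ge (1-e^{-2}) z/2$ on $(0,2]$ and $1-e^{-z} \ge 1-e^{-2}$ for $z \ge 2$.

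For Cases 1 and 2, where $x < 1/K$, the first term
\[
T_0(x) := \frac{x}{2}\bigl(1-(1-2x)^{s+K}\bigr)(1-2x)^K
\]
will be the dominant contribution. Since $K\ge 10$ and $x\le 1/K$, we have $2x\le 1/5$ and $xK<1$, so $(1-2x)^K \ge e^{-2.5\, xK} \ge e^{-2.5}$. In Case 1 ($0<x<1/(s+K)$), $2x(s+K)\le 2$ gives $1-(1-2x)^{s+K} \ge (1-e^{-2})\, x(s+K)$, whence $T_0(x) \ge \frac{1-e^{-2}}{2}\, e^{-2.5}\, x^2 (s+K) > x^2(s+K)/40$ (a direct numerical check; the leading constant is roughly $0.0355$). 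In Case 2 ($1/(s+K)\le x <1/K$), $2x(s+K)\ge 2$ instead forces $1-(1-2x)^{s+K}\ge 1-e^{-2}$, and the same multiplication yields $T_0(x) \ge \frac{1-e^{-2}}{2}\, e^{-2.5}\, x > x/40$.

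In Case 3 ($1/K\le x \le 1$) the first term is no longer useful, because $(1-2x)^K$ may be tiny (or sign-indeterminate for $x>1/2$). The plan is to select one well-chosen index $\ell^*$ from the sum. Set $\ell^* := \lceil \log_2(2xK) \rceil$ and $y_{\ell^*} := x/2^{\ell^*-1}$; by construction $y_{\ell^*} K \in (1/2, 1]$, hence $y_{\ell^*}\le 1/K\le 1/10$, $(1-y_{\ell^*})^K \ge e^{-10/9}$, and $1-(1-y_{\ell^*})^K \ge 1-e^{-1/2}$. The prefactor of the $\ell^*$-th summand is $x/2^{\ell^*+1} = y_{\ell^*}/4 \ge 1/(8K)$, so the $\ell^*$-th summand (and hence $f(x)$) is at least $\frac{(1-e^{-1/2})\, e^{-10/9}}{8K} > \frac{1}{400K}$.

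The main obstacle is to justify that $\ell^* \in \{1,\ldots,L-1\}$ so that the chosen term is actually present in the sum. Since $x\le 1$, one has $\ell^*\le 1+\lceil \log_2 K\rceil$, and for the setting used in Theorem \ref{thm:C-lower-bound} where $K=(N-s)/L$ with $L=\log(N-s)$, an easy calculation gives $\ell^*\le L-1$ whenever $L\ge 4$. In the degenerate small-$L$ regime one would fall back on the largest available index $\ell=L-1$ and rerun the constants. The remaining work is routine but constant-sensitive: the slack between the bounds derived above ($\approx 0.0355$ and $\approx 0.0162/K$) and the thresholds ($1/40=0.025$ and $1/(400K)=0.0025/K$) is modest, so the inequalities must be chosen carefully rather than with rough constants.
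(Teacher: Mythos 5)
Your proof takes the same overall route as the paper---split into the three regimes, isolate a single dominant term, and bound it with elementary exponential inequalities---so the comparison is one of calibration rather than strategy. In Cases 1 and 2 both arguments use the first term $T_0(x)$; the paper bounds $(1-2x)^{s+K}\le(1-x)^{s+K}\le 1-(s+K)x/2$ and $(1-2x)^K\ge(1-2/10)^{10}\ge 1/10$, whereas you pass through $e^{-z}$ and use the chord bound $1-e^{-z}\ge(1-e^{-2})z/2$ on $(0,2]$. Both are correct and give similar slack (your leading constant $\approx 0.0355$ vs.\ the paper's $\approx 0.043$ against the threshold $1/40=0.025$).

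Case~3 is where you genuinely improve on the paper. The paper first replaces $T_0(x)$ by the (smaller) ``$\ell=0$'' summand so that the sum runs over $\ell\in\{0,\dots,L-1\}$, and then chooses $\ell^*=\lfloor\log_2(Kx)\rfloor$, which places $y_{\ell^*}:=x/2^{\ell^*-1}$ with $y_{\ell^*}K\in[2,4)$; the resulting constant is $\frac{1}{2K}(1-e^{-2})(0.6)^{10}\approx 0.0026/K$, only just above $1/(400K)$. You instead choose $\ell^*=\lceil\log_2(2xK)\rceil$, always in $\{1,2,\ldots\}$, which places $y_{\ell^*}K\in(1/2,1]$; the $(1-y_{\ell^*})^K$ factor is now $\ge e^{-10/9}$ rather than $(0.6)^{10}$, and the bound comes out around $0.016/K$, with an order of magnitude more slack. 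Your version also never needs the re-indexing to $\ell=0$. The price is that your $\ell^*$ can exceed the paper's by up to two, so the constraint $\ell^*\le L-1$ is a bit tighter; you correctly flag this and observe it holds for $L\ge 4$, which is comfortably satisfied in the theorem where the lemma is used (there $K=(N-s)/L\ge 10$ forces $L\ge 6$).

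One caution, which applies equally to the paper's proof: in Case~3 both arguments end by dropping $T_0(x)$ (or its $\ell=0$ surrogate) to lower-bound $f(x)$ by a single summand. For $x>1/2$ with $K$ odd, $(1-2x)^K<0$ while $1-(1-2x)^{s+K}>0$, so $T_0(x)$ (and the $\ell=0$ summand) is strictly negative---and not small: at $x=1$, $K$ odd, $s$ even, $T_0(1)=-1$. You notice the ``sign-indeterminate'' issue yourself but then still conclude ``hence $f(x)$ is at least'' the $\ell^*$-th summand, which needs the dropped terms to be nonnegative. This is a pre-existing gap inherited from the paper rather than something you introduced; it disappears if one additionally assumes $x\le 1/2$ (which is what matters if $\gamma_0\le 1/(2\lambda_1)$), but as literally stated neither proof handles $x\in(1/2,1]$ with $K$ odd.
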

\begin{proof}
We prove each part of the lower bound separately.
\begin{itemize}
    \item 
    For $x\in (0, 1/(s+K))$ and $K \ge 10$, we have
\(
(1-2x)^{s+K} \le  (1-x)^{s+K} \le 1-(s+K)x/{2}\) and 
\( (1-2x)^K \ge (1-2/(s+K))^K\ge (1-2/K)^K\ge (1-2/10)^{10}\ge \frac{1}{10}, \)
which yield
\begin{equation*}
f(x) \ge \frac{x}{2}\cdot \frac{(s+K)x}{2}\cdot \frac{1}{10} = \frac{(s+K) x^2}{40}.
\end{equation*}

\item For $x\in [1/(s+K), 1/K)$ and $K \ge 10$, we have 
\((1-2x)^{s+K} \le (1-2/(s+K))^{s+K} \le 1/e^2\)
and 
\( (1-2x)^K \ge (1-2/K)^K\ge (1-2/10)^{10} \ge \frac{1}{10}, \)
which yield
\begin{equation*}
f(x) \ge \frac{x}{2}\cdot \big(1-\frac{1}{e^2}\big) \cdot \frac{1}{10} \ge  \frac{x}{40}.
\end{equation*}

\item For $x\in [1/K, 1]$, there is an $\ell^* := \lfloor \log (K x) \rfloor \in [0, L)$, such that 
    \( {2^{\ell^*}} / {K} \le x < {2^{\ell^*+1}} / {K},\)
which yields
\begin{align}
f(x) 
&\ge  \sum_{\ell=0}^{L-1} \frac{x}{2^{\ell+1}} \cdot \Big(1-\big(1-\frac{x}{2^{\ell-1}}\big)^K\Big) \cdot \big(1-\frac{x}{2^{\ell-1}}\big)^K  \qquad(\text{since $s+K \ge K$}) \notag \\
&\ge \frac{x}{2^{\ell^*+1}} \cdot \Big(1-\big(1-\frac{x}{2^{\ell^*-1}}\big)^K\Big) \cdot \big(1-\frac{x}{2^{\ell^*-1}}\big)^K \notag \\
&\ge \frac{1}{2K} \cdot \Big( 1 - \big(1-\frac{2}{K}\big)^K\Big)\cdot \big(1-\frac{4}{K}\big)^K \notag \\
&\ge \frac{1}{2K} \cdot \Big( 1 - \frac{1}{e^2}\Big) \cdot \Big(1-\frac{4}{10}\Big)^{10} 
\ge \frac{1}{400K}. \qquad(\text{since $K \ge 10$}) \notag 
\end{align}
\end{itemize}
\end{proof}

\subsection{Bias Lower Bound}
We now build a lower bound for the bias error.
\begin{theorem}\label{thm:B-lower-bound}
Suppose Assumptions \ref{assump:second-moment}, \ref{assump:fourth-moment} and \ref{assump:well-specified-noise} hold.
Consider \eqref{eq:B-defi}.
Let $K = (N-s) / \log (N-s)$.
Suppose $s\ge 0$, $K\ge 10$ and $\gamma_0 < 1/\lambda_1$. We have 
\begin{align*}
    \abracket{\HB, \BB_N} 
    &\ge \big\langle  \HB, (\IB- \gamma_0 \HB)^{2(s+2K)} \BB_0 \big\rangle \\
    & \quad + \frac{\beta}{1200}\cdot \langle\HB_{k^\dagger:\infty}, \BB_0 \rangle \cdot \left( \frac{k^*}{K} + \gamma_0 \sum_{k^*< i \le k^\dagger} + \gamma_0^2 (s+K) \sum_{i > k^\dagger}\lambda_i^2  \right),
\end{align*}
where $k^*:= \max\cbracket{k: \lambda_k \ge 1/(\gamma_0 K)  }$ and $k^\dagger := \max\cbracket{k: \lambda_k \ge 1/(\gamma_0 (s+K))  }$.
\end{theorem}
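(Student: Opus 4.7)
The strategy mirrors the variance lower bound in Theorem~\ref{thm:C-lower-bound}, with Assumption~\ref{assump:fourth-moment}\ref{item:fourth-moement-lower} playing the role that $\sigma^2$ played there. From the decomposition $\Ical - \gamma_t \Tcal_t = (\Ical - \gamma_t \widetilde{\Tcal}_t) + \gamma_t^2(\Mcal - \widetilde{\Mcal})$ and Assumption~\ref{assump:fourth-moment}\ref{item:fourth-moement-lower}, the recursion \eqref{eq:B-defi} yields the lower bound
\[
\BB_t \succeq (\Ical - \gamma_t \widetilde{\Tcal}_t) \circ \BB_{t-1} + \beta\gamma_t^2\tr(\HB\BB_{t-1})\HB.
\]
Since the operators $(\Ical-\gamma_i\widetilde{\Tcal}_i)$ commute across $i$, unrolling and taking $\langle\HB,\cdot\rangle$ produces
\[
\langle\HB,\BB_N\rangle \ge \big\langle \HB\textstyle\prod_{t=1}^N(\IB-\gamma_t\HB)^2,\BB_0\big\rangle \;+\; \beta\sum_{t=1}^N \gamma_t^2\,\tr(\HB\BB_{t-1})\cdot \big\langle \HB, \textstyle\prod_{i=t+1}^N(\IB-\gamma_i\HB)^2\HB\big\rangle,
\]
which I then treat as two separate problems.

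For the first (drift) term, I would exploit that $u\mapsto -\log(1-u)/u$ is increasing on $[0,1)$: this gives $\log(1-\gamma_t\lambda)\ge (\gamma_t/\gamma_0)\log(1-\gamma_0\lambda)$ whenever $\gamma_t\le \gamma_0$ and $\gamma_0\lambda<1$. Summing over $t$ and using the direct calculation $\sum_t\gamma_t\le \gamma_0(s+2K)$ for the schedule~\eqref{eq:geometry-tail-decay-lr} produces $\prod_t(1-\gamma_t\lambda)^2\ge (1-\gamma_0\lambda)^{2(s+2K)}$, which yields the required $\langle\HB,(\IB-\gamma_0\HB)^{2(s+2K)}\BB_0\rangle$ contribution.

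For the second (variance-like) term, the plan is to establish a \emph{uniform} lower bound $\tr(\HB\BB_{t-1})\ge c\,\langle\HB_{k^\dagger:\infty},\BB_0\rangle$ for an absolute constant $c>0$, valid for every $t$. Dropping the nonnegative $\beta$ contribution in the unrolling gives $\BB_{t-1}\succeq\prod_{i<t}(\IB-\gamma_i\HB)\,\BB_0\,\prod_{i<t}(\IB-\gamma_i\HB)$; restricting the trace to coordinates $j>k^\dagger$, for which $\gamma_0\lambda_j\le 1/(s+K)$ and hence $\sum_i\gamma_i\lambda_j\le (s+2K)/(s+K)\le 2$, an elementary exponential estimate bounds $\prod_{i<t}(1-\gamma_i\lambda_j)^2$ below by an absolute constant $c$ for every $j>k^\dagger$ and every $t$. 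Factoring $c\,\langle\HB_{k^\dagger:\infty},\BB_0\rangle$ out leaves $\sum_{t=1}^N\gamma_t^2\langle\HB,\prod_{i>t}(\IB-\gamma_i\HB)^2\HB\rangle$, which is precisely the quantity already lower bounded in the proof of Theorem~\ref{thm:C-lower-bound} via Lemma~\ref{lemma:scalar-func-lower-bound}. This delivers the factor $\bigl(k^*/K + \gamma_0\sum_{k^*<i\le k^\dagger}\lambda_i + \gamma_0^2(s+K)\sum_{i>k^\dagger}\lambda_i^2\bigr)/400$, and combining with the drift term finishes the argument.

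The main obstacle I expect is tightening the constant $c$ so as to match the stated prefactor $\beta/1200 = \beta/(3\cdot 400)$: a naive exponential estimate yields a much smaller $c$ and hence a loose overall prefactor. The matching constant requires carefully exploiting both the separation $\gamma_0\lambda_j\le 1/(s+K)$ and the assumption $K\ge 10$, so that the quadratic and higher-order terms in the expansion $\log(1-u)=-u-u^2/2-\cdots$ become negligible relative to the leading linear term when estimating $\prod_{i<t}(1-\gamma_i\lambda_j)^2$ for $j>k^\dagger$.
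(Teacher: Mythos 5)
Your argument matches the paper's proof in structure: both apply Assumption~\ref{assump:fourth-moment}\ref{item:fourth-moement-lower} to the recursion \eqref{eq:B-defi}, unroll into a drift term plus a variance-like sum, establish a uniform-in-$t$ lower bound on $\tr(\HB\BB_{t-1})$ over the coordinates $j>k^\dagger$, and then reuse the bound on $(*)=\sum_t\gamma_t^2\prod_{i>t}(\IB-\gamma_i\HB)^2\HB$ already established in the variance lower bound (Theorem~\ref{thm:C-lower-bound} via Lemma~\ref{lemma:scalar-func-lower-bound}). Your log-convexity argument for the drift term is a cosmetic variant of the paper's use of $\prod_\ell(1-a_\ell)\ge 1-\sum_\ell a_\ell$; both give $\prod_t(1-\gamma_t\lambda)^2\ge(1-\gamma_0\lambda)^{2(s+2K)}$.

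Your concern about the constant is well-founded, and in fact exposes a flaw in the paper's derivation. To obtain the factor $1/3$, the paper argues that for $x\in(0,1/(s+K))$, $(1-x)^{2(s+2K)}\ge(1-x)^{s+K}\ge(1-\tfrac1{s+K})^{s+K}\ge(0.9)^{10}\ge\tfrac13$. But the first inequality is backwards: since $2(s+2K)>s+K$ and $(1-x)^a$ is decreasing in $a$, one has $(1-x)^{2(s+2K)}\le(1-x)^{s+K}$. The correct bound, using $\gamma_0\lambda_j<1/(s+K)$ and $\sum_t\gamma_t\le\gamma_0(s+2K)\le2\gamma_0(s+K)$, is $\prod_t(1-\gamma_t\lambda_j)^2\ge(1-\tfrac1{s+K})^{4(s+K)}$, which for $s+K\ge10$ is at least $(0.9)^{40}\approx0.015$ and in general is bounded by $e^{-4}\approx0.018$. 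So neither your argument nor the paper's actually achieves the prefactor $\beta/1200=\beta/(3\cdot400)$; the constant obtainable via this route is roughly $\beta/(400\,e^4)$. The structure of the bound and the effective-dimension factor are unaffected, but the stated absolute constant is off by about a factor of twenty.
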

\begin{proof}
Starting from \eqref{eq:B-defi}, we have 
\begin{align}
    \BB_n &= (\Ical - \gamma_n \widetilde{\Tcal}_n)\circ \BB_{n-1} + \gamma_n^2 (\Mcal-\widetilde{\Mcal}) \circ \BB_{n-1} \notag \\
    &\succeq (\Ical - \gamma_n \widetilde{\Tcal}_n ) \circ \BB_{n-1} + \beta \gamma_n^2 \cdot \HB \cdot\abracket{\HB, \BB_{n-1}} \label{eq:B-lower-bound-expanded} \\
    &\succeq (\Ical - \gamma_n \widetilde{\Tcal}_n ) \circ \BB_{n-1}, \notag
\end{align}
recursively solving this, we obtain a crude lower bound on $\BB_n$:
\begin{align*}
    \BB_n \succeq \prod_{t=1}^n (\Ical - \gamma_t \widetilde{\Tcal}_t) \circ \BB_{0} \succeq  \prod_{t=1}^N (\Ical - \gamma_t \widetilde{\Tcal}_t) \circ \BB_{0}, \quad \text{for}\ n=1,\dots,N.
\end{align*}
This gives us a crude lower bound on $\abracket{\HB, \BB_n}$ for $n=1,\dots,N$:
\begin{align*}
    \abracket{\HB, \BB_n}  
    \ge \Big\langle  \prod_{t=1}^N (\Ical - \gamma_t \widetilde{\Tcal}_t) \circ \HB, \ \BB_0 \Big\rangle  = \Big\langle {\prod_{t=1}^N (\IB- \gamma_t \HB)^2 \HB, \ \BB_0} \Big\rangle.
\end{align*}
Bring this into \eqref{eq:B-lower-bound-expanded}, we have
\begin{align*}
    \BB_n &\succeq (\Ical - \gamma_n \widetilde{\Tcal}_n ) \circ \BB_{n-1} + \beta \gamma_n^2 \cdot \HB \cdot \Big\langle {\prod_{t=1}^N (\IB- \gamma_t \HB)^2 \HB, \ \BB_0} \Big\rangle, \quad \text{for}\ n=1,\dots,N,
\end{align*}
recursively solving which yields:
\begin{equation}\label{eq:B-fine-lower-bound-expanded}
    \BB_N \succeq \prod_{t=1}^n (\Ical - \gamma_t \widetilde{\Tcal}_t) \circ \BB_{0} + \beta \Big\langle {\prod_{t=1}^N (\IB- \gamma_t \HB)^2 \HB, \ \BB_0} \Big\rangle \cdot \underbrace{\sum_{t=1}^N \gamma_t^2 \prod_{i=t+1}^N (\Ical - \gamma_i \widetilde{\Tcal}_i)\circ \HB}_{(*)}.
\end{equation}
Noting that here the term $(*)$ in \eqref{eq:B-fine-lower-bound-expanded} is exactly the term $(*)$ appeared in \eqref{eq:C-lower-bound-expanded} in Theorem \ref{thm:C-lower-bound}, therefore by repeating the analysis in Theorem \ref{thm:C-lower-bound} we know that
\begin{equation*}
    (*) \succeq  \frac{1}{400} \left( \frac{1}{K} \HB^\inv_{0:k^*} + \gamma_0 \IB_{k^*:k^\dagger} + \gamma_0^2 (s+K) \HB_{k^\dagger:\infty}  \right),
\end{equation*}
where $k^*:= \max\cbracket{k: \lambda_k \ge 1/(\gamma_0 K)  }$ and $k^\dagger := \max\cbracket{k: \lambda_k \ge 1/(\gamma_0 (s+K))  }$.
As a consequence we have
\begin{equation}\label{eq:term-star-lower-bound}
    \abracket{\HB, (*)} \ge \frac{1}{400} \left( \frac{k^*}{K} + \gamma_0 \sum_{k^*< i \le k^\dagger} + \gamma_0^2 (s+K) \sum_{i > k^\dagger}\lambda_i^2  \right).
\end{equation}
Back to \eqref{eq:B-fine-lower-bound-expanded}, taking inner product with $\HB$ yields
\begin{align}
    \abracket{\HB, \BB_N} 
    &\ge \Big\langle \prod_{t=1}^N (\Ical - \gamma_t \widetilde{\Tcal}_t) \circ \HB, \ \BB_0 \Big\rangle + \beta\cdot \Big\langle {\prod_{t=1}^N (\IB- \gamma_t \HB)^2 \HB, \ \BB_0} \Big\rangle \cdot  \abracket{\HB, (*)}  \notag \\
    &= \Big\langle {\underbrace{\prod_{t=1}^N (\IB- \gamma_t \HB)^2 \HB}_{(**)}, \ \BB_0} \Big\rangle + \beta  \cdot \Big\langle \underbrace{\prod_{t=1}^N (\IB- \gamma_t \HB)^2 \HB}_{(**)}, \ \BB_0 \Big\rangle \cdot \abracket{\HB, (*)}. \label{eq:HB-fine-lower-bound}
\end{align}
We next bound $(**)$. Recall \eqref{eq:geometry-tail-decay-lr}, we have
\begin{align}
    (**) 
    &= (\IB- \gamma_0 \HB)^{2(s+K)} \cdot \prod_{\ell=1}^{L-1} \big(\IB- \frac{\gamma_0}{2^\ell} \HB \big)^{2K} \cdot \HB \notag \\
    &\succeq (\IB- \gamma_0 \HB)^{2(s+K)} \cdot  \big(\IB- \sum_{\ell=1}^{L-1}\frac{\gamma_0}{2^\ell} \HB \big)^{2K} \cdot \HB \notag \\
    &\succeq (\IB- \gamma_0 \HB)^{2(s+K)} \cdot  \big(\IB- \gamma_0 \HB \big)^{2K} \cdot \HB \notag \\
    & = (\IB- \gamma_0 \HB)^{2(s+2K)} \HB, \label{eq:term-star-star-lower-bound}
\end{align}
Noticing that for $K\ge 10$ and $x \in (0, 1/(s+K))$,
\[
(1-x)^{2(s+2K)} \ge (1-x)^{s+K} \ge \Big(1 - \frac{1}{s+K}\Big)^{s+K} \ge \big(1-\frac{1}{10}\big)^{10} \ge \frac{1}{3},
\]
we can further lower bound $(**)$ with $k^\dagger := \max\cbracket{k: \lambda_k \ge 1/(\gamma_0 (s+K))  }$:
\begin{equation}\label{eq:term-star-star-lower-bound-split}
    (**) \succeq (\IB- \gamma_0 \HB)^{2(s+2K)} \HB 
    \succeq \frac{1}{3} \HB_{k^\dagger:\infty}
\end{equation}
Bringing \eqref{eq:term-star-lower-bound}, \eqref{eq:term-star-star-lower-bound} and \eqref{eq:term-star-star-lower-bound-split} into \eqref{eq:HB-fine-lower-bound} completes the proof:
\begin{align*}
    \abracket{\HB, \BB_N} 
    &\ge \big\langle  \HB, (\IB- \gamma_0 \HB)^{2(s+2K)} \BB_0 \big\rangle \\
    & \quad + \frac{\beta}{1200}\cdot \langle\HB_{k^\dagger:\infty}, \BB_0 \rangle \cdot \left( \frac{k^*}{K} + \gamma_0 \sum_{k^*< i \le k^\dagger} + \gamma_0^2 (s+K) \sum_{i > k^\dagger}\lambda_i^2  \right),
\end{align*}
where $k^*:= \max\cbracket{k: \lambda_k \ge 1/(\gamma_0 K)  }$ and $k^\dagger := \max\cbracket{k: \lambda_k \ge 1/(\gamma_0 (s+K))  }$.
\end{proof}

\subsection{Proof of Theorem \ref{thm:tail-decay-lower-bound}}
\begin{proof}[Proof of Theorem \ref{thm:tail-decay-lower-bound}]
This is by combining Lemma \ref{lemma:bias-var-decomp-lowerbonud}, Theorems \ref{thm:C-lower-bound} and \ref{thm:B-lower-bound}.
\end{proof}

\section{Proof for Polynomially Decaying Stepsize}\label{append-sec:poly}
Recall that the polynomially decaying stepsize satisfies the following rule:
\begin{equation}\label{eq:stepsize-decay2}
    \gamma_t = 
    \begin{cases}
    \gamma_0, & 1 \le t \le s; \\
    {\gamma_0}/{(t-s)^a}, & s < t \le N.
    \end{cases}
\end{equation}

\subsection{Proof of the Lower Bound of Variance Error}

\begin{lemma}\label{lemma:lowerbound_var_poly}
Suppose $\gamma_0< 1/(4\lambda_1)$ and apply polynomially decaying stepsize, then it holds that
\item \textbf{Case 1: $0\le a<1$.} Let $k^* = \max\{k:\gamma_0 \lambda_k\ge (1-a)/[2(N-s)^{a-1}]\}$ and $k^\dagger = \max\{k:\gamma_0 \lambda_k\ge 1/(2s)\}$, we have
    \begin{align*}
    \langle\HB, \CB_N\rangle \ge\sigma^2 \cdot\bigg(\sum_{i\le k^*} \frac{(1-a)\cdot\gamma_0\lambda_i}{N^a}\vee\frac{(1-a)^2a\log(N)}{16 eN}+\sum_{k^*+1\le i\le k^\dagger} \frac{\gamma_0\lambda_i}{4e^2} + \sum_{i\ge k^\dagger+1}\frac{s\gamma^2\lambda_i^2}{2e^2}\bigg).
    \end{align*}
    
    \item \textbf{Case 2: $a=1$.}  Let $k^* = \max\{k:\gamma_0\lambda_k\ge 1/(2+2\log(N-s-1))\}$ and $k^\dagger = \max\{k:\gamma_0\lambda_k\ge 1/(2s)\}$, we have
    \begin{align*}
    \langle\HB, \CB_N\rangle \ge \sigma^2\cdot\bigg(\sum_{i\le k^*} \frac{\gamma^2\lambda_i^2}{N^{4\gamma_0\lambda_i}}+\sum_{k^*+1\le i\le k^\dagger} \frac{\gamma_0\lambda_i}{4e^2} + \sum_{i\ge k^\dagger+1}\frac{s\gamma^2\lambda_i^2}{2e^2}\bigg).
    \end{align*}
    
\end{lemma}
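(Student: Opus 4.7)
The starting point is an operator-level lower bound for the variance iterate. Since $\Mcal-\widetilde\Mcal$ is a PSD mapping by Lemma \ref{lemma:operators}, the recursion $\CB_t = (\Ical-\gamma_t\Tcal_t)\circ\CB_{t-1} + \gamma_t^2\SigmaB$ gives $\CB_t \succeq (\Ical-\gamma_t\widetilde\Tcal_t)\circ\CB_{t-1} + \gamma_t^2\sigma^2\HB$, and unrolling together with $(\IB-\gamma_i\HB)^2 \succeq \IB - 2\gamma_i\HB$ yields
\[
\CB_N \succeq \sigma^2\sum_{t=1}^N \gamma_t^2 \prod_{i=t+1}^N (\IB - 2\gamma_i\HB)\,\HB.
\]
Diagonalizing in the eigenbasis of $\HB$ reduces the claim to lower bounding $\lambda_k^2 S_k$ with $S_k := \sum_{t=1}^N \gamma_t^2 P_t^{(k)}$ and $P_t^{(k)} := \prod_{i=t+1}^N (1-2\gamma_i\lambda_k)$, separately on each of the three eigenvalue bands defined by $k^*$ and $k^\dagger$.

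The two easier bands can be handled by a direct window argument. For $k>k^\dagger$ we have $2s\gamma_0\lambda_k<1$; combined with the hypothesis $s\gamma_0\ge\sum_{i>s}\gamma_i$ and $1-x\ge e^{-2x}$ on $[0,1/2]$ this gives $P_t^{(k)}\ge e^{-2}$ uniformly for $t\le s$, so $S_k\ge s\gamma_0^2/e^2$ and the desired $\lambda_k^2 S_k\ge s\gamma_0^2\lambda_k^2/(2e^2)$ follows. For $k^*<k\le k^\dagger$ I localize to a window $\mathcal W:=\{s-\lfloor 1/(4\gamma_0\lambda_k)\rfloor,\dots,s\}$ at the end of the constant phase: there $(1-2\gamma_0\lambda_k)^{s-t}$ is bounded below by a constant since $(s-t)\cdot 2\gamma_0\lambda_k\le 1/2$, while the tail $\prod_{i>s}(1-2\gamma_i\lambda_k)$ is bounded below because $k>k^*$ forces $\gamma_0\lambda_k\sum_{j=1}^{N-s}j^{-a}/(1-a)\le 1$. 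Since $|\mathcal W|$ is of order $1/(\gamma_0\lambda_k)$ and $\gamma_t\equiv\gamma_0$ on it, $S_k\gtrsim \gamma_0/\lambda_k$ and $\lambda_k^2 S_k\ge\gamma_0\lambda_k/(4e^2)$.

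The delicate band is $k\le k^*$ in Case 1. From the identity $2\gamma_t\lambda_k P_t^{(k)}=P_t^{(k)}-P_{t-1}^{(k)}$ and Abel summation,
\[
\lambda_k S_k = \tfrac12\bigl[\gamma_N - \gamma_0 P_0^{(k)} + \textstyle\sum_{t=s+1}^{N-1}(\gamma_t-\gamma_{t+1})P_t^{(k)}\bigr].
\]
Retaining only $\gamma_N=\gamma_0/(N-s)^a$ and checking $\gamma_0 P_0^{(k)}\le\gamma_N/2$ (which follows once $s\gamma_0\lambda_k\gtrsim a\log N$, automatic for $k\le k^*$ at large $N$ since $\lambda_k\gamma_0\ge(1-a)/(2(N-s)^{1-a})$ dominates $a\log N/N$) produces the first alternative $\lambda_k^2 S_k\gtrsim(1-a)\gamma_0\lambda_k/N^a$. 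For the $a\log N/N$ alternative I compare the Abel sum to
\[
\int_1^{N-s}\frac{a\gamma_0}{u^{a+1}}\exp\!\Bigl(-\frac{4\gamma_0\lambda_k}{1-a}\bigl((N-s)^{1-a}-u^{1-a}\bigr)\Bigr)du,
\]
and the substitution $v=(N-s)^{1-a}-u^{1-a}$ reduces this to $\frac{a\gamma_0}{1-a}\int_0^{V-1}\frac{e^{-cv}}{V-v}\,dv$ with $V=(N-s)^{1-a}$ and $c=4\gamma_0\lambda_k/(1-a)$; restricting to $v\in[V/2,V-1]$ and using $cV\le 2$ (forced by $k\le k^*$) yields a $\log V$ factor that, after multiplying by $\lambda_k$ at the threshold and tracking $(1-a)$ constants, delivers the claimed $(1-a)^2 a\log N/(16eN)$.

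For Case 2 ($a=1$) the telescoped tail is $\sum_{i>t}\gamma_i=\gamma_0\log\tfrac{N-s}{t-s}$ in closed form, so $P_t^{(k)}\ge\bigl((t-s)/(N-s)\bigr)^{4\gamma_0\lambda_k}$ directly and $S_k \ge \gamma_0^2(N-s)^{-4\gamma_0\lambda_k}\sum_{j=1}^{N-s} j^{4\gamma_0\lambda_k-2}$; for $k\le k^*$ the exponent $4\gamma_0\lambda_k-2\ge -1$, so the single term $j=N-s$ supplies the claimed $\gamma_0^2\lambda_k^2/N^{4\gamma_0\lambda_k}$, while the intermediate and small-eigenvalue bands are handled by the same window argument as in Case 1 with $(N-s)^{1-a}/(1-a)$ replaced by $\log(N-s)$. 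The main obstacle will be the Abel-sum integral estimate for $k\le k^*$ in Case 1: one must carry the $(1-a)$-dependent constants cleanly through the change of variables and verify that the two lower-bound mechanisms ($\gamma_N$-derived and $\log$-derived) together cover the full range $\lambda_k\gamma_0\in[(1-a)/(2(N-s)^{1-a}),\,1/4]$ via the $\vee$, which requires a boundary analysis at $\gamma_0\lambda_k(N-s)^{1-a}/(1-a)$ of order one.
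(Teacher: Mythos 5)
Your overall scaffold (operator lower bound, diagonalization in the eigenbasis of $\HB$, three eigenvalue bands) matches the paper's, and your window argument for the middle band $k^*<k\le k^\dagger$ is a sound variant of the paper's telescoping. The key structural difference is in the top band $k\le k^*$: the paper splits $f(x)=f_1(x)+f_2(x)$ (constant-stepsize phase vs. decaying phase) and extracts the two alternatives in the $\vee$ from $f_1$ and $f_2$ separately, while you collapse the whole sum via the Abel identity $\lambda_k S_k=\tfrac12\bigl[\gamma_N-\gamma_0 P_0+\sum_{t>s}(\gamma_t-\gamma_{t+1})P_t\bigr]$. That identity is elegant and does cleanly yield the $\gamma_N$-derived first alternative. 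Two genuine gaps remain.

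First, in Case 1, your change of variables is wrong: with $v=(N-s)^{1-a}-u^{1-a}$ one has $u=(V-v)^{1/(1-a)}$ and so $1/u=(V-v)^{-1/(1-a)}$, not $(V-v)^{-1}$. The reduced integral is therefore $\frac{a\gamma_0}{1-a}\int_0^{V-1}(V-v)^{-1/(1-a)}e^{-cv}\,dv$, and your restriction to $v\in[V/2,V-1]$ gives $\int_1^{V/2}w^{-1/(1-a)}\,dw$, which for any fixed $a\in(0,1)$ converges to the constant $\tfrac{1-a}{a}$ and produces no $\log V$. The $(V-v)^{-1}$ form (and hence the logarithm) is correct only at $a=0$, where the second alternative vanishes anyway. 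In contrast, the paper gets the $a\log N/N$ term from $f_1$ via a unimodality argument: $f_1(x)\gtrsim x e^{-4x(N-s)^{1-a}/(1-a)}$ is evaluated at the interior point $x=a(1-a)\log N/(4(N-s)^{1-a})$, which is precisely where the $\log N$ enters. Your Abel route does not reproduce this, and I don't see how to recover the log factor from the integral without importing an essentially equivalent pointwise trick.

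Second, in Case 2 the statement ``for $k\le k^*$ the exponent $4\gamma_0\lambda_k-2\ge -1$'' is false: $k\le k^*$ only guarantees $\gamma_0\lambda_k\ge 1/(2+2\log(N-s-1))$, which makes $4\gamma_0\lambda_k-2$ close to $-2$ for large $N$. Consequently it is the single term $j=1$ of $\sum_{j\le N-s}j^{4\gamma_0\lambda_k-2}$ that dominates and yields $\gamma_0^2\lambda_k^2/N^{4\gamma_0\lambda_k}$, not $j=N-s$ (which would give only $\gamma_0^2\lambda_k^2/(N-s)^2$, strictly weaker since $4\gamma_0\lambda_k<2$). The conclusion survives, but only because the reasoning should be replaced by the opposite endpoint. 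A minor further note: your uniform bound $P_t^{(k)}\ge e^{-2}$ for all $t\le s$ in the band $k>k^\dagger$ is really $e^{-4}$ once you account for both the constant-phase factors and the tail; the paper avoids this loss by telescoping $\sum_{t<s}(1-2\gamma_0 x)^t$ in closed form.
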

\begin{proof}
Consider $\CB_t$ defined in \eqref{eq:C-defi}, we have
\begin{align}
    \CB_N
    &= (\Ical - \gamma_N \widetilde{\Tcal})\circ \CB_{N-1} + \gamma_N^2 (\Mcal-\widetilde{\Mcal}) \circ \CB_{N-1} + \gamma_N^2 \sigma^2 \HB \notag \\
    &\succeq (\Ical - \gamma_N \widetilde{\Tcal})\circ \CB_{N-1} + \gamma_N^2 \sigma^2 \HB\notag\\
    &\succeq {\sigma^2} \sum_{t=1}^N \gamma_t^2 \prod_{i=t+1}^N  (\Ical-\gamma_i \widetilde{\Tcal})\circ \HB,
\end{align}
where in the last inequality we use the fact that $\CB_0 = \boldsymbol{0}$.
Then using the fact that $\HB$ is a PSD matrix, it holds that
\begin{align*}
\langle\HB,\CB_N\rangle&\ge \sigma^2\cdot \sum_{t=1}^N \gamma_t^2\cdot \bigg\langle\prod_{i=t+1}^N  (\Ical-\gamma_i \widetilde{\Tcal})\circ \HB, \HB\bigg\rangle\notag\\
& = \sigma^2\cdot\sum_{t=1}^N \gamma_t^2\cdot \bigg\langle \prod_{i=t+1}^N(\IB-\gamma_i\HB)^2\HB,\HB\bigg\rangle\notag\\
& = \sigma^2\sum_{j}\sum_{t=1}^N\gamma_t^2\cdot \prod_{i=t+1}^N(1-\gamma_i\lambda_j)^2\lambda_j^2,
\end{align*}
where the second equality follows from the definition of $\widetilde{\Tcal}$ and the fact that $\widetilde{\Tcal}\circ\AB$ is commute to $\HB$ for any $\AB$ that is commute to $\HB$. Then it suffices to consider the following scalar function:
\begin{align}\label{eq:def_fx}
f(x) :&= \sum_{t=1}^N\gamma_t^2 \prod_{i=t+1}^N  (1-\gamma_i x)^2 x^2 \notag\\
& = \underbrace{\gamma^2\sum_{t=1}^s\prod_{i=t+1}^N  (1-\gamma_i x)^2 x^2}_{:=f_1(x)} + \underbrace{\sum_{t=s+1}^N\gamma_t^2\prod_{i=t+1}^N  (1-\gamma_i x)^2 x^2}_{:=f_2(x)}
\end{align}
where we explicitly decompose the function $f(x)$ into the summation of two functions $f_1(x)$ and $f_2(x)$ according to the length of iterations that use stepsize $\gamma$.

\noindent\textbf{Lower bound of $f_1(x)$.} We first provide a lower bound of $f_1(x)$. Note that $f_1(x)$ can be rewritten as
\begin{align*}
f_1(x)& = \gamma^2\cdot \sum_{t=1}^s (1-\gamma x)^{2(s-t)}\cdot\prod_{i=s+1}^N  (1-\gamma_i x)^2 x^2\notag\\
& = \gamma^2\cdot\prod_{i=s+1}^N  (1-\gamma_i x)^2 x^2\cdot \sum_{t=0}^{s-1} (1-\gamma x)^{2t}.
\end{align*}
Then note that for any $i\ge s+1$, we have $\gamma_i = \gamma/(i-s)^a$. Applying the fact that $(1-\gamma x)^2\ge (1-2\gamma x)$ for any $\gamma x\le 1/2$, it follows that
\begin{align*}
f_1(x)& \ge \gamma^2\cdot\prod_{i=1}^{N-s}  \bigg(1-\frac{\gamma x}{i^a}\bigg)^2 x^2\cdot \sum_{t=0}^{s-1} (1-2\gamma x)^{t}\notag\\
& = \frac{\gamma x}{2}\cdot \big[1 - (1-2\gamma x)^{s}]\cdot \prod_{i=1}^{N-s}  \bigg(1-\frac{\gamma x}{i^a}\bigg)^2.
\end{align*}
Moreover, note that $\gamma x\le 1/2$ implies that $(1-\gamma x/i^a)\ge e^{-2\gamma x/i^a}$ for any $i\ge 1$, we further have
\begin{align}\label{eq:lowerbound_f1_temp1}
f_1(x)& \ge\frac{\gamma x}{2}\cdot \big[1 - (1-2\gamma x)^{s}]\cdot e^{-4\gamma x\cdot\sum_{i=1}^{N-s} i^{-a}}.
\end{align}
Note that
\begin{align}\label{eq:bound_sum_poly}
\sum_{i=1}^{N-s} i^{-a} = 1 + \sum_{i=2}^{N-s} i^{-a} \le 1 + \int_{1}^{N-s-1} z^{-a}\mathrm{d} z = \begin{cases}
1+\frac{(N-s-1)^{1-a}-1}{1-a}, & 0\le a<1; \\
   1+\log(N-s-1), &a=1.
\end{cases}
\end{align}
Therefore, plugging \eqref{eq:bound_sum_poly} into \eqref{eq:lowerbound_f1_temp1}, it holds that
\begin{align*}
f_1(x)\ge 
\begin{cases}
\frac{\gamma x}{2}\cdot\big[1 - (1-2\gamma x)^{s}]\cdot e^{-4\gamma x\cdot (N-s)^{1-a}/(1-a)} , & 0\le a < 1\\
\frac{\gamma x}{2}\cdot\big[1 - (1-2\gamma x)^{s}]\cdot e^{-4\gamma x\cdot [1+\log(N-s)]}, &a=1.
\end{cases}
\end{align*}
\begin{itemize}
    \item \textbf{Case of $0\le a<1$.}
For the case of $0\le a<1$,  assume $s=\Omega((N-s)^{1-a})$, let $k^* = \max\{k:\gamma_0 \lambda_k\ge (1-a)/[2(N-s)^{1-a}]\}$ and $k^\dagger = \max\{k:\gamma_0 \lambda_k\ge 1/(2s)\}$, where it can be verified that $k^*\le k^\dagger$). Then note that for any $k\le k^\dagger$, we have
\begin{align*}
1-(1-2\gamma x)^s\ge 1-(1-1/s)^{s}\ge 1/2
\end{align*}
and for any $k\ge k^\dagger +1$,
\begin{align*}
1-(1-2\gamma x)^s\ge 1 - e^{-2s\gamma x}\ge 1-(1-s\gamma x)\ge s\gamma x
\end{align*}
where the second inequality holds since $e^{-x}\le 1-x/2$ for any $x\in[0, 1]$. Besides, we also have for any $k\ge k^*+1$,
\begin{align*}
e^{-4\gamma x\cdot (N-s)^{1-a}/(1-a)}\ge e^{-2}.
\end{align*}
Besides, note that the $g(x) = xe^{- cx}$ first increases and then decreases as $x$ increases,  then for any for any $x\in \big[(1-a)/[2(N-s)^{1-a}], a(1-a)\log(N)/[4(N-s)^{1-a}]\big]$, we have
\begin{align*}
f_1(x)&\ge \min\big\{f_1\big((1-a)/[2(N-s)^{1-a}]\big), f_1\big(a(1-a)\log(N)/[4(N-s)^{1-a}]\big)\big\}\notag\\
&=\min\bigg\{\frac{1-a}{4e^2(N-s)^{1-a}},\frac{a(1-a)\log(N)}{8(N-s)^{1-a}}\cdot e^{-a\log(N)}\bigg\}\notag\\
& = \frac{a(1-a)\log(N)}{8N}.
\end{align*}
Therefore, we can further define $k' = \max\{k: \gamma_0\lambda k\ge a(1-a)\log(N)/[4(N-s)^{a-1}]\}$ such that for any $k'< k\le k^*$, it holds that
\begin{align*}
f_1(\lambda_k)\ge \frac{\gamma_0\lambda_k}{4}\cdot e^{-a\log(N)} = \frac{a(1-a)\log(N)}{8N}
\end{align*}

Combining these bounds we can obtain that
\begin{align}\label{eq:lb_case1_f1}
\sum_i f_1(\lambda_i) \ge \sum_{i>k'}f_1(\lambda_i) \ge  \sum_{k'< i\le k^*}\frac{a(1-a)\log(N)}{8N}+\sum_{k^*< i\le k^\dagger}\frac{\gamma_0\lambda_i}{4e^2}+ \sum_{i\ge k^\dagger+1}\frac{s\gamma^2\lambda_i^2}{2e^2}.
\end{align}
\item \textbf{Case of $a=1$.} Similarly, when $a=1$, we can redefine $k^*$ as $k^* = \max\{k:\gamma_0 \lambda_k\ge 1/(2+2\log(N-s-1))\}$ and then similarly, it holds that
\begin{align}\label{eq:lb_case2_f1}
\sum_i f_1(\lambda_i) \ge \sum_{k\ge k^* +1}f_1(\lambda_i)\ge \sum_{k^* < i\le k^\dagger} \frac{\gamma_0\lambda_i}{4e^2} + \sum_{i\ge k^\dagger+1}\frac{s\gamma^2\lambda_i^2}{2e^2}.
\end{align}

\end{itemize}

\noindent\textbf{Lower bound of $f_2(x)$.} Plugging the formula of the polynomially decaying stepsize, we have
\begin{align*}
f_2(x) &= \sum_{t=s+1}^N\gamma_t^2\prod_{i=t+1}^N  (1-\gamma_i x)^2 x^2 \notag\\
&=\sum_{t=s+1}^N\frac{\gamma^2}{(t-s)^{2a}}\cdot \prod_{i=t+1}^N\bigg( 1 - \frac{\gamma x}{(i-s)^{a}}\bigg)^2 x^2\notag\\
& = \sum_{t=1}^{N-s}\frac{\gamma^2}{t^{2a}}\cdot \prod_{i=t+1}^{N-s}\bigg( 1 - \frac{\gamma x}{i^{a}}\bigg)^2 x^2
\end{align*}
Similarly, for any $\gamma x\le 1/2$, we have
\begin{align*}
\bigg( 1 - \frac{\gamma x}{i^{a}}\bigg)^2 \ge 1 - \frac{2\gamma x}{i^{a}}\ge e^{-4\gamma x/i^a}.
\end{align*}
Then it follows that
\begin{align}\label{eq:lowerbound_f2_temp2}
f_2(x)\ge \sum_{t=1}^{N-s}\frac{\gamma^2 x^2}{t^{2a}}\cdot e^{-4\gamma x\cdot \sum_{i={t+1}}^{N-s}i^{-a}}.
\end{align}
\begin{itemize}
\item \textbf{Case of $0\le a<1$} 

We first consider the case of $0\le a <1$, where two cases will be studied separately: (1) $\gamma x\le (1-a)/[2(N-s)^{a-1}]  $ and (2) $\gamma x> (1-a)/[2(N-s)^{a-1}] $. 
For the first case, it is clear that
\begin{align*}
4\gamma x\cdot\sum_{i=t+1}^{N-s} i^{-a}\le4\gamma x\cdot\sum_{i=1}^{N-s}i^{-a}\le \frac{2(1-a)}{(N-s)^{1-a}}\cdot \frac{(N-s)^a}{(1-a)}=2,
\end{align*}
which implies that 
\begin{align}\label{eq:f2_case1_case1}
f_2(x) \ge e^{-2}\cdot \gamma^2 x^2\cdot \sum_{t=1}^{N-s} t^{-2a} \ge \frac{[1+(N-s)^{1-2a}]\gamma^2x^2}{2e^2} .
\end{align}
Then we can move to the case 
\begin{align}\label{eq:condition_gamma_x_2}
4\gamma x\ge \frac{2(1-a)}{(N-s)^{1-a}}.
\end{align}
Let $t^*$ be the  index satisfying 
\begin{align}\label{eq:def_t*}
\sum_{i=t^*+1}^{N-s} i^{-a} \le \frac{1}{4\gamma x}\le\sum_{i=t^*}^{N-s} i^{-a} .
\end{align}
Note that we have 
\begin{align*}
\sum_{i=t^*+1}^{N-s} i^{-a}&\ge \int_{t^*+1}^{N-s} z^{-a} \mathrm d z= \frac{(N-s)^{1-a}-(t^*+1)^{1-a}}{1-a}\notag\\
\sum_{i=t^*}^{N-s} i^{-a}&\le \int_{t^*-1}^{N-s} z^{-a} \mathrm d z= \frac{(N-s)^{1-a}-(t^*-1)^{1-a}}{1-a}.
\end{align*}
Plugging the above inequality into \eqref{eq:def_t*} gives
\begin{align*}
\frac{(N-s)^{1-a}-(t^*+1)^{1-a}}{1-a}\le \frac{1}{4\gamma x}\le \frac{(N-s)^{1-a}-(t^*-1)^{1-a}}{1-a},
\end{align*}
which implies that
\begin{align*}
t^*\in\bigg[\bigg((N-s)^{1-a}-\frac{1-a}{4\gamma x}\bigg)^{\frac{1}{1-a}}-1, \bigg[(N-s)^{1-a}-\frac{1-a}{4\gamma x}\bigg]^{\frac{1}{1-a}}+1\bigg].
\end{align*}
Note that 
\begin{align*}
\bigg[(N-s)^{1-a}-\frac{1-a}{4\gamma x}\bigg]^{\frac{1}{1-a}} & = (N-s)\cdot \bigg[1 - \frac{1-a}{4\gamma x(N-s)^{1-a}}\bigg]^{\frac{1}{1-a}}    \notag\\
&\le (N-s) - \frac{(1-a)\cdot (N-s)^a}{4\gamma x},
\end{align*}
where the inequality follows from \eqref{eq:condition_gamma_x_2} and the fact that $1/(1-a)\ge 1$. Therefore, it holds that
\begin{align}\label{eq:upperbound_t*}
t^*\le (N-s) - \frac{(1-a)\cdot (N-s)^a}{4\gamma x}+1.
\end{align}
Therefore, applying the above inequality to \eqref{eq:lowerbound_f2_temp2} gives
\begin{align}\label{eq:f2_case1_case2}
f_2(x )&\ge  \sum_{t=1}^{N-s}\frac{\gamma^2 x^2}{t^{2a}}\cdot e^{-4\gamma x\cdot \sum_{i={t+1}}^{N-s}i^{-a}}.\notag\\
&\ge \sum_{t=t^*}^{N-s} \frac{\gamma^2 x^2}{t^{2a}}\cdot e^{-4\gamma x\cdot\sum_{i=t+1}^N i^{-a}}\notag\\
&\overset{(i)}{\ge} \sum_{t=t^*}^{N-s} \frac{\gamma^2x^2}{N^{2a}}\cdot e^{-4\gamma  x\cdot\sum_{i=t^*+1}^{N-s} i^{-a}}\notag\\
&\overset{(ii)}{\ge} (N-s-t^*+1)\cdot \frac{\gamma^2x^2}{(N-s)^{2a}}\cdot e^{-1}\notag\\
&\overset{(iii)}{\ge}\frac{(1-a)\cdot \gamma x}{4e\cdot N^{a}},
\end{align}
where the $(i)$ holds since $t\in[t^*, N-s]$, $(ii)$ follows from the fact that $\sum_{i=t^*+1}^{N-s}i^{-a}\le 1$, and $(iii)$ follows from \eqref{eq:upperbound_t*}.
Then combining \eqref{eq:f2_case1_case1} and \ref{eq:f2_case1_case2} and set $k^* := \max\{k:\gamma_0\lambda_k\ge (1-a)/(2(N-s)^{1-a})$, we can get
\begin{align}\label{eq:lb_case1_f2}
\sum_{i}f_2(\lambda_i) \ge \sum_{i\le k^*}\frac{(1-a)\cdot \gamma_0\lambda_i}{4e\cdot N^{a}} +  \sum_{i\ge k^*+1}\frac{[1+(N-s)^{1-2a}]\gamma^2\lambda_i^2}{2e^2}
\end{align}

\item \textbf{Case of $a=1$.}
Then considering the case of $a=1$, where it holds that
\begin{align*}
\sum_{i=t+1}^{N-s}i^{-a}\le \int_{t}^{N-s}z^{-a}\mathrm d z = \log(N-s)-\log(t).
\end{align*}
Then the following holds according to \eqref{eq:lowerbound_f2_temp2},
\begin{align*}
f_2(x)&\ge \sum_{t=1}^{N-s}\frac{\gamma^2 x^2}{t^2}\cdot e^{-4\gamma x\cdot \sum_{i={t+1}}^{N-s}i^{-a}}\notag\\
&\ge \sum_{t=1}^{N-s}\frac{\gamma^2 x^2}{t^2}\cdot e^{-4\gamma x\cdot \big[\log(N-s)-\log(t)\big]}\notag\\
&\ge \sum_{t=1}^{N-s}\frac{\gamma^2 x^2}{t^2}\cdot\bigg(\frac{t}{N-s}\bigg)^{4\gamma x}.
\end{align*}
Then note that for any $4\gamma x< 1$,
\begin{align*}
\sum_{t=1}^{N-s}\frac{\gamma^2x^2}{t^2}\cdot \bigg(\frac{t}{N-s}\bigg)^{4\gamma x} = \frac{\gamma^2x^2}{(N-s)^{4\gamma x}}\cdot\sum_{t=1}^{N-s}\frac{1}{t^{2-4\gamma x}}\ge  \frac{\gamma^2x^2}{(N-s)^{4\gamma x}},
\end{align*}
which implies that
\begin{align}\label{eq:lb_case2_f2}
\sum_{i}f_2(\lambda_i) \ge  \sum_{i}\frac{\gamma_0^2\lambda_i^2}{(N-s)^{4\gamma_0\lambda_i}}.
\end{align}

\end{itemize}

Now we can combine the derived lower bounds for $f_1(x)$ and $f_2(x)$ in \eqref{eq:lb_case1_f1}, \eqref{eq:lb_case2_f1} \eqref{eq:lb_case1_f2}, and \eqref{eq:lb_case2_f2}, and obtain
\begin{itemize}
    \item \textbf{Case of $0\le a<1$.} Let $k^* = \max\{k:\gamma_0 \lambda_k\ge (1-a)/[2(N-s)^{a-1}]\}$ and $k^\dagger = \max\{k:\gamma_0 \lambda_k\ge 1/(2s)\}$, we have
    \begin{align*}
    \langle\HB, \CB_N\rangle &\ge \sigma^2\cdot\sum_i[f_1(\lambda_i)+f_2(\lambda_i)]\notag\\
    &\ge\sigma^2 \cdot\bigg(\sum_{i\le k^*} \frac{(1-a)\cdot\gamma_0\lambda_i}{N^a}\vee \frac{(1-a)^2a\log(N)}{16 eN}+\sum_{k^*+1\le i\le k^\dagger} \frac{\gamma_0\lambda_i}{4e^2} + \sum_{i\ge k^\dagger+1}\frac{s\gamma_0^2\lambda_i^2}{2e^2}\bigg),
    \end{align*}
    where we use the fact that 
    \begin{align*}
    \frac{(1-a)\gamma_0\lambda_i}{N^a}\ge \frac{(1-a)^2a\log(N)}{16 eN}
    \end{align*}
    for all $i\le k'$ (please refer to \eqref{eq:lb_case1_f1} for the definition of $k'$).

    \item \textbf{Case of $a=1$.}  Let $k^* = \max\{k:\gamma_0 \lambda_k\ge 1/(2+2\log(N-s-1))\}$ and $k^\dagger = \max\{k:\gamma_0 \lambda_k\ge 1/(2s)\}$, we have
    \begin{align*}
    \langle\HB, \CB_N\rangle &\ge \sigma^2\cdot\sum_i[f_1(\lambda_i)+f_2(\lambda_i)]\notag\\
    &\ge \sigma^2\cdot\bigg(\sum_{i\le k^*}\frac{\gamma_0^2\lambda_i^2}{N^{4\gamma_0 \lambda_i}}+\sum_{k^*+1\le i\le k^\dagger} \frac{\gamma_0 \lambda_i}{4e^2} + \sum_{i\ge k^\dagger+1}\frac{s\gamma_0^2\lambda_i^2}{2e^2}\bigg)  .
    \end{align*}
\end{itemize}
\end{proof}

\subsection{Proof of the Lower Bound of Bias Error}
\begin{lemma}\label{lemma:lowerbound_bias_poly}
If applying polynomially decaying stepsize with $\gamma< 1/(4\lambda_1)$, then it holds that
\begin{itemize}
    \item \textbf{Case 1: $0\le a <1$.} Let $k^* = \max\{k:\gamma_0\lambda_k\ge (1-a)/[2(N-s)^{a-1}]\}$ and $k^\dagger = \max\{k:\gamma_0\lambda_k\ge 1/(2s)\}$, we have
    \begin{align*}
    \langle\HB, \BB_N\rangle &\ge \big\| (\IB-\gamma \HB)^{s+2N^{1-a}/(1-a)}\cdot (\wB_0-\wB^*)\big\|_{\HB}^2 \notag\\
    &\qquad +  e^{-4}\beta \langle\HB_{k^\dagger:\infty}, \BB_0\rangle \cdot\bigg(\sum_{i\le k^*} \frac{(1-a)\cdot\gamma_0\lambda_i}{N^a}+\sum_{k^*+1\le i\le k^\dagger} \frac{\gamma_0\lambda_i}{4e^2} + \sum_{i\ge k^\dagger+1}\frac{s\gamma_0^2\lambda_i^2}{2e^2}\bigg).
    \end{align*}
\item \textbf{Case 2: $a =1$.}  Let $k^* = \max\{k:\gamma_0\lambda_k\ge 1/(2+2\log(N-s-1))\}$ and $k^\dagger = \max\{k:\gamma_0\lambda_k\ge 1/(2s)\}$, we have  
 \begin{align*}
\langle\HB, \BB_N\rangle &\ge \big\| (\IB-\gamma \HB)^{s+2\log(N)}\cdot (\wB_0-\wB^*)\big\|_{\HB}^2 \notag\\
&\qquad +  e^{-4}\beta \langle\HB_{k^\dagger:\infty}, \BB_0\rangle \cdot\bigg(\sum_{i\le k^*} \frac{\gamma_0^2\lambda_i^2}{N^{4\gamma_0\lambda_i}}+\sum_{k^*+1\le i\le k^\dagger} \frac{\gamma_0\lambda_i}{4e^2} + \sum_{i\ge k^\dagger+1}\frac{s\gamma_0^2\lambda_i^2}{2e^2}\bigg).
\end{align*}
\end{itemize}
\end{lemma}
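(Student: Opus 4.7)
The plan is to mirror the proof of Theorem~\ref{thm:B-lower-bound} (the geometric-schedule case) and replace just one step: the step that converts the full heterogeneous product $\prod_{t}(\IB-\gamma_t\HB)^2$ back into a single power of $(\IB-\gamma_0\HB)$. Starting from \eqref{eq:B-defi} and Assumption~\ref{assump:fourth-moment}\ref{item:fourth-moement-lower}, I again obtain
\[
\BB_n \succeq (\Ical-\gamma_n\widetilde\Tcal_n)\circ\BB_{n-1} + \beta\,\gamma_n^2\,\HB\cdot\langle\HB,\BB_{n-1}\rangle,
\]
and the same two-step iteration used to derive \eqref{eq:B-fine-lower-bound-expanded} yields
\[
\BB_N \succeq \prod_{t=1}^N(\Ical-\gamma_t\widetilde\Tcal_t)\circ\BB_0 \;+\; \beta\,\Big\langle\prod_{t=1}^N(\IB-\gamma_t\HB)^2\HB,\ \BB_0\Big\rangle\cdot\sum_{t=1}^N\gamma_t^2\prod_{i=t+1}^N(\Ical-\gamma_i\widetilde\Tcal_i)\circ\HB.
\]
Taking the inner product with $\HB$ and noting that the rightmost sum is exactly the operator whose inner product with $\HB$ was already lower bounded in Lemma~\ref{lemma:lowerbound_var_poly} (modulo the $\sigma^2$), the only remaining work is to handle the deterministic decay factor $\prod_{t=1}^N(\IB-\gamma_t\HB)^2\HB$ in two ways: once to produce the first term $\|(\IB-\gamma_0\HB)^{\,\cdot\,}(\wB_0-\wB^*)\|_{\HB}^2$, and once to produce the $\beta\langle\HB_{k^\dagger:\infty},\BB_0\rangle$ multiplier of the variance-like factor.

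The key scalar inequality I will establish is
\[
(1-y/c)\;\ge\;(1-y)^{1/c},\qquad y\in[0,1),\ c\ge 1,
\]
which follows because $h(c):=c\log(1-y/c)$ satisfies $h'(c)=\log(1-y/c)+\tfrac{y/c}{1-y/c}\ge 0$, so $h(c)\ge h(1)=\log(1-y)$. Applied coordinatewise in the eigenbasis of $\HB$ with $y=\gamma_0\lambda_i$ and $c=(t-s)^{a}\ge 1$, and then combined with the integral estimate $\sum_{t=1}^{N-s}t^{-a}\le 2N^{1-a}/(1-a)$ (for $a<1$, and $\le 2\log N$ for $a=1$), this converts the full product on each eigendirection into
\[
\prod_{t=1}^N(1-\gamma_t\lambda_i)\;\ge\;(1-\gamma_0\lambda_i)^{\,s+2N^{1-a}/(1-a)},
\]
with the analogous $(1-\gamma_0\lambda_i)^{s+2\log N}$ when $a=1$. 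Squaring, multiplying by $\lambda_i$, and taking inner product with $\BB_0=(\wB_0-\wB^*)(\wB_0-\wB^*)^\top$ immediately delivers the optimization-style first term in the statement of the lemma.

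For the multiplier of the variance-like term I use the hypothesis $s\gamma_0\ge\sum_{t>s}\gamma_t$, which says $s\ge\sum_{t=1}^{N-s}t^{-a}$, so the total exponent $2\bigl(s+2N^{1-a}/(1-a)\bigr)$ is at most $6s$ (and similarly in the $a=1$ case). Then for every $i>k^\dagger$ we have $\gamma_0\lambda_i<1/(2s)$, hence $(1-\gamma_0\lambda_i)^{6s}\ge(1-1/(2s))^{6s}\ge e^{-4}$. Because $\HB$ and the $\IB-\gamma_t\HB$'s are simultaneously diagonalizable, this coordinatewise bound lifts to $\prod_{t=1}^N(\IB-\gamma_t\HB)^2\HB\succeq e^{-4}\HB_{k^\dagger:\infty}$, and taking inner product with $\BB_0$ produces the desired $e^{-4}\langle\HB_{k^\dagger:\infty},\BB_0\rangle$ factor. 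Combining this with the variance lower bound of Lemma~\ref{lemma:lowerbound_var_poly} yields the second summand of the stated bound, in both cases $a<1$ and $a=1$.

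The single step I expect to require real care is the elementary inequality $(1-y/c)\ge(1-y)^{1/c}$: this is the exact mechanism by which the heterogeneous polynomial schedule collapses into one power $(1-\gamma_0\lambda)^{\,s+2N^{1-a}/(1-a)}$, matching the exponent in the statement of the lemma. Everything else---the integral estimate for $\sum t^{-a}$, the $e^{-4}$ lower bound on the tail using $s\gamma_0\ge\sum_{t>s}\gamma_t$, and the substitution of Lemma~\ref{lemma:lowerbound_var_poly}---is bookkeeping of absolute constants.
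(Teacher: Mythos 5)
Your high-level plan matches the paper's proof of Lemma~\ref{lemma:lowerbound_bias_poly} almost exactly: you re-derive the analogue of \eqref{eq:B-fine-lower-bound-expanded}, use the variance-side lower bound from Lemma~\ref{lemma:lowerbound_var_poly} for the sum $\sum_t\gamma_t^2\prod_{i>t}(\Ical-\gamma_i\widetilde\Tcal_i)\circ\HB$, and then lower bound the deterministic decay factor $\prod_t(\IB-\gamma_t\HB)^2\HB$ once to get the first term and once to produce the $e^{-4}\HB_{k^\dagger:\infty}$ multiplier. The one genuine stylistic difference is the scalar inequality you use to collapse the heterogeneous product into a single power: you prove $(1-y/c)\ge(1-y)^{1/c}$ directly, whereas the paper takes a two-step route through $(1-\gamma_t x)^2\ge e^{-4\gamma_t x}$ and then $e^{-y}\ge 1-y$. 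Both are elementary, give essentially the same exponent, and your version is marginally tighter; that step is fine.

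There is, however, a concrete bug in your handling of the tail eigenvalues. You write that the hypothesis $s\gamma_0\ge\sum_{t>s}\gamma_t$, i.e.\ $s\ge\sum_{t=1}^{N-s}t^{-a}$, implies ``the total exponent $2\bigl(s+2N^{1-a}/(1-a)\bigr)$ is at most $6s$.'' That would require $N^{1-a}/(1-a)\le s$, which does \emph{not} follow: take $a=0$ and $s=N/2$, so the hypothesis $s\ge\sum_{t=1}^{N-s}1=N-s$ holds (with equality), yet $N^{1-a}/(1-a)=N>s$. The error is that you first upper-bounded $\sum_{t=1}^{N-s}t^{-a}$ by the looser $2N^{1-a}/(1-a)$ and then tried to compare that back to $s$, which destroys the direct comparison the hypothesis gives you. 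The fix is to avoid the integral estimate at this step and use the hypothesis directly: $\prod_{t=1}^{N}(1-\gamma_t\lambda_i)^2\ge(1-\gamma_0\lambda_i)^{2(s+\sum_{t=1}^{N-s}t^{-a})}\ge(1-\gamma_0\lambda_i)^{4s}$, and then $(1-1/(2s))^{4s}\ge e^{-4}$ for all $s\ge 1$ (since $4s\ln(1-1/(2s))\ge-4s/(2s-1)\ge-4$). Also note that your claimed $(1-1/(2s))^{6s}\ge e^{-4}$ is false at $s=1$: $(1/2)^6=1/64<e^{-4}$. The paper sidesteps this entirely by keeping everything in exponential form ($\prod(1-\gamma_t x)^2\ge e^{-4x\sum_t\gamma_t}\ge e^{-8s\gamma_0x}\ge e^{-4}$), which you may prefer for cleanliness; but the patched version of your argument works.
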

\begin{proof}
The proof of Lemma \ref{lemma:lowerbound_bias_poly} follows a similar idea of the proof of Thoerem \ref{thm:B-lower-bound}. In particular, by \eqref{eq:B-fine-lower-bound-expanded}, we have
\begin{equation}\label{eq:B-fine-lower-bound-expanded-poly}
    \BB_N \succeq \prod_{t=1}^n (\Ical - \gamma_t \widetilde{\Tcal}_t) \circ \BB_{0} + \beta \Big\langle {\prod_{t=1}^N (\IB- \gamma_t \HB)^2 \HB, \ \BB_0} \Big\rangle \cdot \sum_{t=1}^N \gamma_t^2 \prod_{i=t+1}^N (\Ical - \gamma_i \widetilde{\Tcal}_i)\circ \HB.
\end{equation}
Then we focus on the scalar function $g(x) = \prod_{t=1}^N (1-\gamma_t x)^2x$ for all $x\le 1/(2\gamma)$.  Specifically, using the inequality $(1-\gamma_tx)^2\ge e^{-4\gamma_tx}$, we have
\begin{align*}
g(x)\ge e^{-4x \cdot \sum_{t=1}^N\gamma_t}\ge e^{-8s\gamma x},
\end{align*}
where we use the assumption that $s\gamma\ge \sum_{t=s+1}^{s}\gamma_t$. Then let $k^\dagger:=\max\{k: \gamma_0\lambda_k\ge 1/(2s)\}$, we have
\begin{align*}
\prod_{t=1}^N(\IB-\gamma\HB)^2\HB\succeq e^{-4}\cdot\HB_{k^\dagger:\infty}.
\end{align*}
Plugging the above inequality into \eqref{eq:B-fine-lower-bound-expanded-poly} and multiplying by $\HB$ on both sides, we have
\begin{align*}
\langle\HB, \BB_N\rangle \succeq \Big\langle\HB,\prod_{t=1}^N (\Ical - \gamma_t \widetilde{\Tcal}_t) \circ \BB_{0}\Big\rangle + e^{-4}\beta \langle\HB_{k^\dagger:\infty}, \BB_0\rangle \cdot \underbrace{\Big\langle\HB,\sum_{t=1}^N \gamma_t^2 \prod_{i=t+1}^N (\Ical - \gamma_i \widetilde{\Tcal}_i)\circ \HB\Big\rangle}_{(*)}.
\end{align*}
Regarding $(*)$, we can define the function $f(x)$ as did \eqref{eq:def_fx}, it is clear that $(*) = \sum_if(\lambda_i)$ so that the results of Lemma \ref{lemma:lowerbound_bias_poly} can be directly applied. Moreover, note that
\begin{align*}
\prod_{t=1}^N(1-\gamma_t x)^2\ge (1-\gamma x)^{2s}\cdot e^{-4\gamma x\sum_{t=1}^{N-s}t^{-a}} \ge 
\begin{cases}
(1-\gamma x)^{2s}\cdot e^{-4\gamma x N^{1-a}/(1-a)},& 0\le a < 1\\
(1-\gamma x)^{2s}\cdot e^{-4\gamma x\log(N)},& a=1.
\end{cases}
\end{align*}
Then combining the above results and applying the fact that $\BB_0 = (\wB_0-\wB^*)(\wB_0-\wB^*)^\top$, we have
\begin{itemize}
    \item \textbf{Case 1: $0\le a <1$.} Let $k^* = \max\{k:\gamma_0\lambda_k\ge (1-a)/[2(N-s)^{a-1}]\}$ and $k^\dagger = \max\{k:\gamma_0\lambda_k\ge 1/(2s)\}$, we have
    \begin{align*}
    \langle\HB, \BB_N\rangle &\ge \big\| (\IB-\gamma \HB)^{s+2N^{1-a}/(1-a)}\cdot (\wB_0-\wB^*)\big\|_{\HB}^2 \notag\\
    &\qquad +  e^{-4}\beta \langle\HB_{k^\dagger:\infty}, \BB_0\rangle \cdot\bigg(\sum_{i\le k^*} \frac{(1-a)\cdot\gamma_0\lambda_i}{N^a}+\sum_{k^*+1\le i\le k^\dagger} \frac{\gamma_0\lambda_i}{4e^2} + \sum_{i\ge k^\dagger+1}\frac{s\gamma_0^2\lambda_i^2}{2e^2}\bigg).
    \end{align*}
\item \textbf{Case 2: $a =1$.}  Let $k^* = \max\{k:\gamma_0\lambda_k\ge 1/(2+2\log(N-s-1))\}$ and $k^\dagger = \max\{k:\gamma_0\lambda_k\ge 1/(2s)\}$, we have  
 \begin{align*}
\langle\HB, \BB_N\rangle &\ge \big\| (\IB-\gamma \HB)^{s+2\log(N)}\cdot (\wB_0-\wB^*)\big\|_{\HB}^2 \notag\\
&\qquad +  e^{-4}\beta \langle\HB_{k^\dagger:\infty}, \BB_0\rangle \cdot\bigg(\sum_{i\le k^*} \frac{\gamma_0^2\lambda_i^2}{N^{4\gamma_0\lambda_i}}+\sum_{k^*+1\le i\le k^\dagger} \frac{\gamma_0\lambda_i}{4e^2} + \sum_{i\ge k^\dagger+1}\frac{s\gamma_0^2\lambda_i^2}{2e^2}\bigg).
\end{align*}
\end{itemize}
This completes the proof.

 \end{proof}

\subsection{Proof of Theorem \ref{thm:tail_decay_poly}}
Here we state the full version of Theorem \ref{thm:tail_decay_poly} and provide its proof.

\begin{theorem}[A lower bound for poly-decaying stepsizes]\label{thm:tail_decay_poly_full_ver}
Consider last iterate SGD with stepsize scheme \eqref{eq:poly-tail-decay-lr}.
Suppose Assumptions \ref{assump:second-moment}, \ref{assump:fourth-moment}\ref{item:fourth-moement-lower} and \ref{assump:well-specified-noise} hold. Suppose $\gamma_0< 1/(4\lambda_1)$ and $s\gamma_0\ge \sum_{t=s+1}^N\gamma_t$, then for any constant $a\in[0, 1]$,
\[
\Ebb [ L(\wB_N) - L(\wB^*) ] = \half \biasErr + \half \varErr.
\]
Moreover:
\begin{itemize}[leftmargin=*]
\item If $ 0\le a <1$, then 
    \begin{align*}
    \biasErr \ge \big\| (\IB-\gamma_0 \HB)^{s+\frac{2N^{1-a}}{1-a}}\cdot (\wB_0-\wB^*)\big\|_{\HB}^2  +  \frac{(1-a)^{2}\beta}{e^4}\cdot \|\wB_0 - \wB^* \|^2_{\HB_{k^\dagger:\infty}} \cdot\frac{\DIM}{N},
    \end{align*}
    and
    \begin{align*}
      \varErr \ge  (1-a)^2 \sigma^2\cdot \frac{\DIM}{N}.  
    \end{align*}
    Here $k^* := \max\{k:\gamma_0 \lambda_k\ge (1-a)/( 2(N-s)^{1-a} )\}$, $k^\dagger := \max\{k:\gamma_0 \lambda_k\ge 1/(2s)\}$, and the \emph{effective dimension} is defined by
    \begin{align*}
    \DIM & := \sum_{i\le k^*} \max\{ N^{1-a}\gamma_0\lambda_i ,\ \frac{a\log(N)}{16 e} \}  + \sum_{k^*< i\le k^\dagger} \frac{N \gamma_0 \lambda_i}{4e^2} + \sum_{i> k^\dagger}\frac{s N \gamma_0^2\lambda_i^2}{2e^2}.
    \end{align*}
    
\item If $ a =1$, then 
 \begin{align*}
\biasErr &\ge \big\| (\IB-\gamma_0 \HB)^{s+2\log(N)}\cdot (\wB_0-\wB^*)\big\|_{\HB}^2 +  \frac{\beta}{e^4} \cdot \|\wB_0 - \wB^* \|^2_{\HB_{k^\dagger:\infty}}  \cdot\frac{\DIM}{N},
\end{align*}
and
\begin{align*}
\varErr &\ge \sigma^2\cdot \frac{\DIM}{N}.  
\end{align*}
Here $k^* := \max\{k:\gamma_0 \lambda_k\ge 1/(2+2\log(N-s-1))\}$ $k^\dagger := \max\{k:\gamma_0 \lambda_k\ge 1/(2s)\}$, and the \emph{effective dimension} is defined by
\begin{align*}
    \DIM := \sum_{i\le k^*} N^{1 - 4\gamma_0 \lambda_i} \gamma_0^2\lambda_i^2+\sum_{k^*< i\le k^\dagger} \frac{N \gamma_0 \lambda_i}{4e^2} + \sum_{i> k^\dagger}\frac{sN\gamma_0^2\lambda_i^2}{2e^2}.
\end{align*}
\end{itemize}

\end{theorem}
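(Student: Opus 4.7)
The plan is to follow the same two-step template used for the geometrically decaying case (Theorems \ref{thm:C-lower-bound} and \ref{thm:B-lower-bound}), but with a scalar-function analysis tailored to the polynomial schedule \eqref{eq:stepsize-decay2}. First, I would apply the bias-variance decomposition of Lemma \ref{lemma:bias-var-decomp-lowerbonud} to split the excess risk into $\tfrac12 \langle \HB,\BB_N\rangle + \tfrac12 \langle \HB,\CB_N\rangle$, so that the variance and bias bounds can be handled independently.

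\textbf{Variance lower bound.} Starting from the recursion \eqref{eq:C-defi} and using $\Mcal - \widetilde{\Mcal} \succeq 0$ (Lemma \ref{lemma:operators}), I would drop the $\gamma_t^2(\Mcal - \widetilde{\Mcal}) \circ \CB_{t-1}$ term to obtain
\[
\CB_N \succeq \sigma^2 \sum_{t=1}^N \gamma_t^2 \prod_{i=t+1}^N (\IB - \gamma_i \HB)^2\, \HB.
\]
Since everything commutes with $\HB$, it suffices to bound the scalar function $f(x) := \sum_{t=1}^N \gamma_t^2 \prod_{i=t+1}^N (1-\gamma_i x)^2 x^2$ entry-wise at $x = \lambda_j$. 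Following the decomposition $f = f_1 + f_2$ where $f_1$ collects the $t \le s$ contributions (constant-stepsize phase) and $f_2$ the $t > s$ contributions (polynomial-decay phase), I would use $(1-\gamma_i x)^2 \ge e^{-4\gamma_i x}$ whenever $\gamma_0 x \le 1/2$, together with the estimate $\sum_{i=1}^{N-s} i^{-a} \le 1 + (N-s)^{1-a}/(1-a)$ (resp. $1 + \log(N-s-1)$ for $a=1$) to bound $\prod_{i=t+1}^{N-s}(1-\gamma x/i^a)^2$ from below by an $O(1)$ constant in the relevant range. Splitting $x$ into three regions determined by $k^*$ and $k^\dagger$, $f_1$ contributes $\Theta(s \gamma_0^2 x^2)$ for $x \ge 1/(\gamma_0 s)$ (covering eigenvalues beyond $k^\dagger$) and $\Theta(\gamma_0 x)$ for $x$ in the mid-range. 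For $f_2$, the key point in the $0\le a<1$ regime is to choose the threshold index $t^*$ with $\sum_{i=t^*+1}^{N-s} i^{-a} \approx 1/(4\gamma_0 x)$; a direct calculation then shows $N-s-t^*+1 \gtrsim (1-a)(N-s)^a/(\gamma_0 x)$, yielding $f_2(x) \gtrsim (1-a)\gamma_0 x / N^a$ for the small eigenvalues. For $a=1$ the analog computation gives $f_2(x) \gtrsim \gamma_0^2 x^2 / (N-s)^{4\gamma_0 x}$, where I would keep the $(N-s)^{-4\gamma_0 x}$ factor intact because it is what prevents the sum from vanishing too fast for eigenvalues near $1/\log N$.

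\textbf{Bias lower bound.} Here I would mirror the argument of Theorem \ref{thm:B-lower-bound}: from \eqref{eq:B-defi} and Assumption \ref{assump:fourth-moment}\ref{item:fourth-moement-lower},
\[
\BB_t \succeq (\Ical - \gamma_t \widetilde{\Tcal}_t)\circ \BB_{t-1} + \beta\gamma_t^2 \HB \langle \HB, \BB_{t-1}\rangle,
\]
and a two-stage unrolling gives
\[
\BB_N \succeq \prod_{t=1}^N(\Ical - \gamma_t\widetilde\Tcal_t)\circ \BB_0 + \beta \Big\langle \prod_{t=1}^N(\IB-\gamma_t\HB)^2\HB, \BB_0\Big\rangle \cdot \sum_{t=1}^N \gamma_t^2 \prod_{i=t+1}^N(\Ical - \gamma_i \widetilde{\Tcal}_i)\circ \HB.
\]
The first term on the right, after multiplying by $\HB$, becomes $\|(\IB-\gamma_0\HB)^s \prod_{t>s}(\IB-\gamma_t\HB)(\wB_0-\wB^*)\|_{\HB}^2$. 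Using $\prod_{t=s+1}^N(1-\gamma_t x) \ge (1-\gamma_0 x)^{\sum_{t>s}1/(t-s)^a/\gamma_0 \cdot \gamma_0}$, or more simply the bound $\prod_{t=s+1}^N(1-\gamma_t x)^2 \ge (1-\gamma_0 x)^{2N^{1-a}/(1-a)}$ (resp. $(1-\gamma_0 x)^{2\log N}$ for $a=1$) obtained from $\sum_{t>s}\gamma_t \le \gamma_0 N^{1-a}/(1-a)$, gives the leading $\|(\IB-\gamma_0\HB)^{s+2N^{1-a}/(1-a)}(\wB_0-\wB^*)\|_{\HB}^2$ factor. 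The $\prod_{t=1}^N(\IB-\gamma_t\HB)^2\HB$ prefactor is controlled by $\prod_{t}(1-\gamma_t x)^2 \ge e^{-4x\sum_t \gamma_t} \ge e^{-8s\gamma_0 x}$ using the hypothesis $s\gamma_0 \ge \sum_{t>s}\gamma_t$, so that $\prod_t(\IB-\gamma_t\HB)^2\HB \succeq e^{-4}\HB_{k^\dagger:\infty}$ by definition of $k^\dagger$. The sum $\sum_t \gamma_t^2 \prod_{i>t}(\Ical-\gamma_i\widetilde\Tcal_i)\circ \HB$ is then identical to the variance-side scalar function, so I can reuse the lower bounds just established.

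\textbf{Main obstacle.} The core difficulty is the $f_2$ analysis for small eigenvalues, where the product $\prod_{i=t+1}^{N-s}(1-\gamma_0 x / i^a)^2$ does not decay uniformly and a naive bound destroys the $(1-a)/N^a$ rate. Choosing the cutoff index $t^*$ carefully and controlling $(N-s)^{1-a} - (t^*)^{1-a}$ via a first-order expansion is what delivers the correct effective dimension, and the $a=1$ case requires separately tracking the polynomial-in-$N$ factor $N^{-4\gamma_0 \lambda_i}$. Once the scalar bounds are in place, assembling them through the bias-variance decomposition and matching the definition of $\DIM$ in the two cases is routine.
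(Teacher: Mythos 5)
Your proposal follows the paper's own proof essentially verbatim: the bias-variance split via Lemma~\ref{lemma:bias-var-decomp-lowerbonud}, dropping the PSD term $\gamma_t^2(\Mcal-\widetilde\Mcal)\circ\CB_{t-1}$, the scalar decomposition $f=f_1+f_2$ with $(1-\gamma_i x)^2\ge e^{-4\gamma_i x}$ and the $\sum_{i\le N-s}i^{-a}$ estimate, the cutoff index $t^*$ with $\sum_{i>t^*}i^{-a}\approx 1/(4\gamma_0 x)$ and the first-order expansion giving $N-s-t^*\gtrsim(1-a)(N-s)^a/(\gamma_0 x)$, the $N^{-4\gamma_0\lambda_i}$ factor for $a=1$, and the bias argument mirroring Theorem~\ref{thm:B-lower-bound} with the $e^{-8s\gamma_0 x}\ge e^{-4}$ prefactor from $s\gamma_0\ge\sum_{t>s}\gamma_t$ and reuse of the variance-side scalar bound. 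The one detail you gloss over is that for $i\le k^*$ the $f_2$ contribution $\gamma_0\lambda_i/N^a$ alone can become exponentially weak when $\gamma_0\lambda_i$ is large, so the paper introduces an intermediate index $k'$ and uses $f_1$ to supply the $a\log(N)/N$ floor, which is what produces the $\max\{N^{1-a}\gamma_0\lambda_i,\,a\log(N)/(16e)\}$ term in $\DIM$.
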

\begin{proof}
The proof is a simple combination of Lemmas \ref{lemma:bias-var-decomp-lowerbonud}, \ref{lemma:lowerbound_bias_poly}, and \ref{lemma:lowerbound_var_poly}.
\end{proof}

\subsection{Proof of Theorem \ref{thm:comparison}}
\begin{proof}
The proof will be focusing showing that the upper bound for geometrically decaying stepsize (Theorem \ref{thm:tail-decay-upper-bound}) is smaller than the lower bound for polynomially decaying stepsize (Theorem \ref{thm:tail_decay_poly})  up to some constant factors.

First, note that $\IB_{0:k^*}/\gamma_0K\preceq \HB_{0:k^*}$, if setting $k^* = \max\{k: \gamma_0\lambda_k\ge 1/K\}$, then the first term of the bias error upper bound in Theorem \ref{thm:tail_decay_poly} can be further relaxed as
\begin{align*}
&\frac{\big\| (\IB-\gamma_0 \HB)^{s+K} (\wB_0 - \wB^*) \big\|^2_{\IB_{0:k^*}} }{\gamma_0 K} +  \big\| (\IB-\gamma_0 \HB)^{s+K} (\wB_0 - \wB^*) \big\|^2_{\HB_{k^*:\infty}} \notag\\
&\le \big\| (\IB-\gamma_0 \HB)^{s+K} (\wB_0 - \wB^*) \big\|_{\HB_{0:k^*}}^2 + \big\| (\IB-\gamma_0 \HB)^{s+K} (\wB_0 - \wB^*) \big\|_{\HB_{k^*:\infty}}^2\notag\\
& = \|(\IB-\gamma\HB)^{s+K} (\wB-\wB^*)\|_{\HB}^2.
\end{align*}
Moreover, note that we have set $s=N/2$, it is clear that $s\gamma \ge\sum_{t=s+1}\gamma_t $ for polynomially decaying stepsize so that Theorem \ref{thm:tail_decay_poly} holds. Then it can be shown that the length of each phase in SGD with geometrically decaying stepsize is $K=(N-s)/\log(N-s)=\Theta(N/\log(N))=\omega(N^{1-a}\vee \log(N))$. Therefore, we have
\begin{align*}
\|(\IB-\gamma\HB)^{s+K} (\wB-\wB^*)\|_{\HB}^2\le\begin{cases}
\|(\IB-\gamma\HB)^{s+\frac{2N^{1-a}}{1-a}}(\wB-\wB^*)\|_{\HB}^2, &\text{for any constant } a\in[0, 1);\\
\|(\IB-\gamma\HB)^{s+2\log(N)}(\wB-\wB^*)\|_\HB^2, &a=1.
\end{cases}
\end{align*}

Note that the second term in the bias error bound has a quite similar form as the variance bound. Then we will consider the variance error and the results can be directly applied to the second term in the bias error bound. By looking at the upper bound in Theorems \ref{thm:tail-decay-upper-bound} and \ref{thm:tail_decay_poly}, we can compare the variance error along different dimensions separately.

\noindent\textbf{Case 1: $a\in[0, 1)$} 
For the case $a\in[0,1)$, we define $k^*_1 = \max\{k:\gamma_0\lambda_k\ge (1-a)/[2(N-s)^{1-a}]\} = \max\{k:\gamma_0\lambda_k\ge \Theta(1/N^{1-a})\}$, $k^*_2 = \max\{k:\gamma_0\lambda_k\ge 1/K\} =\max\{k:\gamma_0\lambda_k\ge \Theta(\log(N)/N)\} $,  and $k^\dagger = \max\{k:\gamma_0\lambda_k\ge \Theta(1/N)\}$, we have
\begin{align*}
\varErr_{\mathrm{exp}}\lesssim \sigma^2\cdot\bigg(\sum_{i\le k_1^*}\frac{\log(N)}{N} + \sum_{k_1^*< i\le k_2^*}\frac{\log(N)}{N} + \sum_{k_2^*< i\le k^\dagger}\gamma_0\lambda_i + \sum_{i> k^\dagger}N\gamma_0^2\lambda_i^2\bigg) 
\end{align*}
and 
\begin{align*}
\varErr_{\mathrm{poly}}\gtrsim \sigma^2\cdot\bigg(\sum_{i\le k_1^*}\frac{\gamma_0\lambda_i}{N^a}\vee \frac{\log(N)}{N} + \sum_{k_1^*< i\le k_2^*}\gamma_0\lambda_i + \sum_{k_2^*< i\le k^\dagger}\gamma_0\lambda_i + \sum_{i> k^\dagger}N\gamma_0^2\lambda_i^2\bigg). 
\end{align*}
Then it suffices to consider the  case $k_1^*<i\le k_2^*$. In particular, according to the definition of $k_1^*$ and $k_2^*$, it is clear that for any $k_1^*<i\le k_2^*$,
\begin{align*}
\gamma_0\lambda_i\ge \frac{1}{K} =\Theta\big(\log(N)/N\big).
\end{align*}
This implies that $\varErr_{\mathrm{exp}}\lesssim \varErr_{\mathrm{poly}}$. Then we can go back to the second term of the bias error bounds, which have a similar formula of the variance error bound. Applying the definition $R(N) = (\|\wB-\wB^*\|_{\IB_{0:k^\dagger}}^2/(\gamma_0 N)+\|\wB-\wB^*\|_{\HB_{k^\dagger:\infty}})/\sigma^2$, we can conclude that 
\[
\Ebb [ L(\wB_N^{\mathrm{geo}}) - L(\wB^*) ] \le C\cdot [1 + \log(N)\cdot R(N)]\cdot \Ebb [ L(\wB_N^{\mathrm{poly}}) - L(\wB^*) ]
\]

\noindent\textbf{Case 2: $a=1$.} Similarly, we can now define $k_1^*:=\max\{k:\gamma_0\lambda_k\ge 1/(2+2\log(N-s-1))\}=\max\{k:\gamma_0\lambda_k\ge\Theta(1/\log(N))\}$ and get
\begin{align*}
\varErr_{\mathrm{poly}}\gtrsim \sigma^2\cdot\bigg(\sum_{i\le k_1^*} \frac{\gamma_0^2\lambda_i^2}{N^{4\gamma_0 \lambda_i}}+\sum_{k_1^*< i\le k_2^*} \gamma_0 \lambda_i+\sum_{k_2^*< i\le k^\dagger} \gamma_0 \lambda_i + \sum_{i\ge k^\dagger+1}N\gamma_0^2\lambda_i^2\bigg).
\end{align*}
Note that we have assumed $4\gamma_0\lambda_i<1$, then we have for all $i\le k_1^*$
\begin{align*}
\frac{\gamma_0^2\lambda_i^2}{N^{4\gamma_0\lambda_i}}\ge N^{-4\gamma_0\lambda_i}/\log^2(N)=\Omega(\log(N)/N).
\end{align*}
Additionally, for any $k_1^*<i\le k_2^*$, we also have
\begin{align*}
\gamma_0\lambda_i\ge \frac{1}{K} =\Theta\big(\log(N)/N\big).
\end{align*}
Then combining the bias error and variance bounds, we can also conclude that 
\[
\Ebb [ L(\wB_N^{\mathrm{geo}}) - L(\wB^*) ] \le C\cdot[1 + \log(N)\cdot R(N)]\cdot \Ebb [ L(\wB_N^{\mathrm{poly}}) - L(\wB^*) ]
\]
where $R(N) = (\|\wB-\wB^*\|_{\IB_{0:k^\dagger}}^2/(\gamma_0 N)+\|\wB-\wB^*\|_{\HB_{k^\dagger:\infty}})/\sigma^2$. This completes the proof.
\end{proof}

\end{document}